\newcommand{\E}{\mathbb{E}}
\newcommand{\R}{\mathbb{R}}
\newcommand{\tr}{\text{tr}}
\DeclareMathOperator{\rank}{rank}
\newtheorem{theorem}{Theorem}
\newtheorem{definition}{Definition}
\theoremstyle{remark}
\newtheorem{remark}{Remark}
\newtheorem{property}{Property}
\definecolor{python1}{HTML}{1f77b4}
\definecolor{python2}{HTML}{ff7f0e}
\definecolor{python3}{HTML}{2ca02c}
\definecolor{python4}{HTML}{d62728}
\definecolor{python5}{HTML}{9467bd}
\definecolor{python6}{HTML}{8c564b}
\definecolor{python7}{HTML}{e377c2}
\definecolor{python8}{HTML}{7f7f7f}
\definecolor{python9}{HTML}{bcbd22}
\definecolor{python10}{HTML}{17becf}
\definecolor{designcolor}{HTML}{1f77b4}
\colorlet{tablecolor1}{python1!30!white} 
\colorlet{tablecolor2}{python2!30!white} 
\colorlet{tablecolor3}{python3!30!white} 
\colorlet{tablecolor4}{python4!30!white} 
\colorlet{tablecolor5}{python5!30!white}
\colorlet{tablecolor6}{python6!30!white}
\colorlet{tablecolor7}{python7!30!white}
\colorlet{tablecolor8}{python8!30!white}
\colorlet{tablecolor9}{python9!30!white}
\colorlet{tablecolor10}{python10!30!white}
\newtcolorbox{cframedun}{
  left skip = 0ex,
  boxrule=0.25mm,
  enhanced,
  arc=0pt,
  outer arc=0pt,
  left=8pt,
  colframe=black,
  colback= white
}
\newcommand{\independent}[1]{\perp\!\!\!\perp}
\newenvironment{keyword}
{\par\noindent\textbf{Keywords:} \itshape}
{\par}
\title{Optimal Linear Baseline Models for Scientific Machine Learning}
\author{
  Alexander DeLise\textsuperscript{\orcidlink{0009-0002-9383-5682}}\thanks{Department of Scientific Computing and Department of Mathematics,  Florida State University, Tallahassee, FL. \texttt{ard22l@fsu.edu}} 
  \and
  Kyle Loh\textsuperscript{\orcidlink{0009-0000-6625-3770}}\thanks{Department of Mathematics and Statistics,  Florida Atlantic University, Boca Raton, FL. \texttt{kloh2023@fau.edu}} 
  \and
  Krish Patel\textsuperscript{\orcidlink{0009-0008-5636-8652}}\thanks{Department of Mathematics, Emory University, Atlanta, GA. \texttt{$\{$krish.patel, matthias.chung$\}$@emory.edu }} 
  \and 
  Meredith Teague\textsuperscript{\orcidlink{0009-0008-6559-3982}}\thanks{Department of Quantitative Theory and Methods, Emory University, Atlanta, GA. \texttt{meredith.teague@emory.edu}} 
  \and
  Andrea Arnold\textsuperscript{\orcidlink{0000-0003-3003-882X}}\thanks{Department of Mathematical Sciences, Worcester Polytechnic Institute, Worcester, MA. \texttt{anarnold@wpi.edu}} 
  \and
  Matthias Chung\textsuperscript{\orcidlink{0000-0001-7822-4539}}\footnotemark[3]
}
\begin{document}

\maketitle

\begin{abstract}
    Across scientific domains, a fundamental challenge is to characterize and compute the mappings from underlying physical processes to observed signals and measurements. While nonlinear neural networks have achieved considerable success, they remain theoretically opaque, which hinders adoption in contexts where interpretability is paramount. In contrast, linear neural networks serve as a simple yet effective foundation  for gaining insight into these complex relationships. In this work, we develop a unified theoretical framework for analyzing linear encoder-decoder architectures through the lens of Bayes risk minimization for solving data-driven scientific machine learning problems. We derive closed-form, rank-constrained linear and affine linear optimal mappings for forward modeling and inverse recovery tasks. Our results generalize existing formulations by accommodating rank-deficiencies in data, forward operators, and measurement processes. We validate our theoretical results by conducting numerical experiments on datasets from simple biomedical imaging, financial factor analysis, and simulations involving nonlinear fluid dynamics via the shallow water equations. This work provides a robust baseline for understanding and benchmarking learned neural network models for scientific machine learning problems.
\end{abstract}

\begin{keyword}
    Scientific Machine Learning, Linear Models, Low-Rank Approximation, Bayes Risk Minimization, Encoder-Decoder, Autoencoder, Forward Modeling, Inverse Problems, SVD, PCA
\end{keyword}

\section{Introduction}\label{intro}
In nearly every scientific discipline, a central challenge lies in modeling, computing, and understanding the functional relationships between signals, measurements, and their underlying physical processes. These mappings typically manifest in three fundamental forms: \emph{forward modeling}, \emph{inference}, and \emph{autoencoding}. While mathematical models often provide insight into these relationships, they are frequently inadequate for real-world prediction and analysis due to limitations in analytical tractability, computational feasibility, or algorithmic robustness.

The advent of scientific machine learning (ML) has led to a paradigm shift, where data-driven methods, particularly neural networks, have emerged as powerful tools for learning complex input-output relations directly from data. Unlike traditional model based approaches, neural networks are capable of overcoming longstanding issues such as computational complexity and scalability issues, model misspecification, and the ill-posedness inherent to many scientific problems \cite{goodfellow2016deep}.

A central strength of neural networks is their capacity to project inputs into a lower-dimensional latent space before mapping to targets, a principle commonly realized in autoencoder and encoder-decoder architectures. Neural networks compress inputs into latent representations that emphasize essential structure, removing redundancy and mitigating noise. This compression-based representation is not only central to dimensionality reduction and denoising models but also instrumental in forward and inference tasks, where the underlying mapping often resides in a lower-dimensional subspace than the observed data.

Despite the success of neural networks, they remain tools whose inner workings, particularly mathematical foundations, still await illumination. This perception serves as a major barrier to the widespread adoption of neural networks in rigorous scientific applications, where interpretability, stability, and theoretical guarantees are especially paramount. While nonlinear neural networks dominate modern ML, linear networks offer a uniquely tractable setting in which meaningful properties of the learned systems, such as generalization, robustness and approximation behavior, can be understood. With this, linear networks play a pivotal role in uncovering theoretical principles that extend to nonlinear models. 

Although a growing body of research has explored optimal linear neural networks, the literature remains fragmented and incomplete. Most existing works address only isolated components of the theory, with insights often confined to either the ML or linear algebra communities, limiting meaningful integration across disciplines. While significant attention has been devoted to dimensionality reduction techniques such as principal component analysis (PCA), their role within the broader framework of linear neural networks, particularly beyond autoencoding settings, remains underdeveloped. Moreover, only a small number of studies approach linear networks from a Bayes risk perspective, leaving critical issues such as ill-posedness and data redundancy largely unexamined within a rigorous theoretical framework. Consequently, the field still lacks a unified, comprehensive understanding of optimal linear networks--an absence that impedes the translation of foundational insights to more complex, nonlinear architectures. 

\paragraph{Contributions.} In this work, we aim to bolster the theoretical foundations for linear neural networks by drawing on fundamental concepts from linear algebra and statistical decision theory. More precisely, the key contributions of this work are as follows.
\begin{itemize}
    \item By leveraging a Bayes risk-based approach, we introduce a unified theoretical framework that yields optimal low-rank linear solutions in forward modeling and inverse mapping scenarios from a machine learning and autoencoding perspective.
    \item We extend these optimal solutions to accommodate practical scenarios, including cases where the forward operator or input data are rank-deficient or noisy.
    \item 
    We substantiate our theoretical results through comprehensive experiments using low-rank linear neural networks on both synthetic and real-world datasets. Here, our contributions are threefold. 
    \begin{itemize}
        \item  First, we numerically confirm the theoretical optimality of our approach on imaging data. 
        \item Second, we demonstrate that our method outperforms standard benchmarks in financial factor analysis, highlighting its practical relevance. 
        \item Third, we illustrate with the shallow water equations that nonlinearity in the system does not guarantee that nonlinear neural network models outperform linear ones. Linear models often capture the essential dynamics remarkably well and, without the need for hyperparameter tuning or complex learning procedures, offer a powerful and interpretable baseline. This underscores the strength of our approach in delivering accurate results while avoiding the complexity typically associated with nonlinear models.  
    \end{itemize}
\end{itemize}

Our work is structured as follows. In \Cref{background}, we provide an introduction to the scientific problems we consider and survey the foundational contributions from machine learning and linear algebra that are directly relevant to the theoretical framework developed in this work. \Cref{methods} introduces the foundational machines learning, statistical, and linear algebra tools necessary for our analysis--namely, dimensionality reduction, forward modeling, and inverse modeling.
Building on this, \Cref{theory} presents our generalized theoretical results for linear encoder-decoder architectures across a range of settings, including forward and inverse mappings and the special cases of autoencoding and data denoising. In \Cref{medmnist,finance,swe}, we demonstrate the practical relevance of our approach through numerical experiments, highlighting the effectiveness of linear neural networks in various scientific computing applications. Finally, \Cref{conclusion} summarizes our key contributions and outlines promising directions for future research.

\section{Background}\label{background}
At the heart of the scientific method lies the process of constructing, validating, and refining models that explain observed phenomena. This cycle naturally gives rise to three fundamental modeling paradigms. Forward modeling reflects the predictive step, where a known model is used to simulate outcomes from given inputs--mirroring hypothesis testing and experimentation. Conversely, inverse problems addresses the challenge of inferring unknown causes or parameters from observed data, akin to interpreting experimental results to uncover underlying mechanisms. Autoencoding, meanwhile, captures the process of abstraction and compression, distilling complex observations into essential latent representations. Together, these paradigms provide a foundational framework for understanding and formalizing the data-model relationship that underpins scientific inquiry.

These seemingly diverse mathematical problems can, at their core, be unified under a single mathematical formulation
\begin{equation}\label{eq:generalProblem}
    \mathbf{y} = \mathbf{F} (\mathbf{x}) + \bm{\varepsilon}.
\end{equation}
Here, $\mathbf{y} \in \mathcal{Y} \subseteq  \mathbb{R}^m$ is a vector of observations; $\mathbf{F}: \mathcal{X} \to \mathcal{Y}$ is a forward parameter-to-observation process; $\mathbf{x} \in \mathcal{X} \subseteq \mathbb{R}^n$ is a vector of parameters; and $\bm{\varepsilon} \in \mathbb{R}^m$ is a noise vector. This general formulation encompasses forward modeling (i.e., compute $\mathbf{y}$ given $\mathbf{x}$, $\mathbf{F}$, and potential noise $\bm{\varepsilon}$), inverse recovery (find $\mathbf{x}$ given $\mathbf{y}$ and $\mathbf{F}$) and autoencoding (given $\mathbf{x}$ and $\mathbf{y}$ find $\mathbf{F}\in \mathcal{F}$) tasks. 

A forward problem formulated as in \Cref{eq:generalProblem} seeks to propagate through the forward mapping $\mathbf{F}$ that relates parameters $\mathbf{x}$ to observations $\mathbf{y}$. Approximating the forward process is challenging when the true forward model is unavailable, computationally costly, or embedded within complex simulations \cite{frangos2010surrogate}. Prominent data-driven surrogate modeling techniques include kernel methods, physics-informed neural networks (PINNs), and general data-driven neural networks, all of which provide architectural flexibility but can be limited by dataset quality, overfitting, and extrapolation challenges \cite{santin2021kernel, raissi2019physics, cuomo2022scientific}. The efficacy of data-driven neural networks for forward modeling has recently been shown for processes like linear differential equations and control laws \cite{jo2019deep, lupu2024exact}.

Next, consider an inverse problem in the form of \Cref{eq:generalProblem}, which seeks to determine the parameters $\mathbf{x}$ given a noisy measurement $\mathbf{y}$ and knowledge of the forward process $\mathbf{F}$. Inverse problems are notoriously difficult due to potential non-invertibility of the forward operator $\mathbf{F}$, instability from noise amplification, ill-posedness, and the curse of dimensionality \cite{tarantola2005inverse, hansen2010discrete, hadamard1902problemes, bellman1961adaptive}. Practical inversions often rely on approximate or nonlinear forward models, leading to non-convex optimization problems with multiple local minima and sensitivity to initialization, thus requiring heuristic priors, regularization, advanced computational methods, and extensive labeled data, while making uncertainty quantification difficult \cite{arridge2019solving}. Established methods for solving inverse problems include variational, iterative, and Bayesian regularization \cite{benning2018modern, burger2021variational, bungert2019solution, boct2012iterative, bachmayr2009iterative, pereira2015empirical, calvetti2018inverse, arridge2019solving}, as well as a variety of deep learning strategies that use data-driven neural networks \cite{ongie2020deep, kamyab2022deep, afkhamLearningRegularizationParameters2021}. These techniques have notably advanced computational imaging, medical, and physical modeling applications \cite{jin2017deep, song2021solving, mccann2017convolutional, lucas2018using, liang2020deep}.

Motivated by inverse problems, several studies have also examined linear least squares estimators due to their ability to yield interpretable closed-form solutions. Several classical results have established minimal-norm solutions to linear inverse problems in a variety of ill-conditioned settings \cite{jackson1972interpretation, foster1961application, kay1993fundamentals, wiener1949extrapolation, izenman1975reduced}, and modern works have reinterpreted these results from a ML perspective as regularized or rank-constrained least-squares problems under a  Bayes risk framework \cite{dashti2017bayesian, idier2008bayesian, stuart2010inverse, hart2025paired, chung2024paired, chung2015optimal, spantini2015optimal}, much like we explore in our work. We also consider autoencoders--a specific type of neural network that learn compact, latent representations of data by exploiting underlying low-dimensional structures--making them valuable for tasks such as denoising, compression, surrogate modeling, and inverse problems across scientific machine learning applications \cite{goh2019solving, hart2025paired, udell2019big, berahmand2024autoencoders, gondara2016medical, masci2011stacked}.

In general, a neural network’s ability to approximate complex systems, such as forward or inverse processes, is underpinned by their universal approximation capabilities \cite{hornik1989multilayer}, though deep architectures often suffer from interpretability issues and their “black-box” nature \cite{lipton2018mythos, zhang2021survey}. While deep, nonlinear neural networks often dominate practical applications, neural network-based methods can and should be evaluated against well-understood baselines. With numerous architectural choices, hyperparameters, and training setups, meaningful comparisons across different ML approaches often become opaque. In contrast, the linear models studied in this work require little to no hyperparameter tuning and admit optimal mappings, making them ideal reference points. We provide a comprehensive analysis of such models, along with code and baselines that support fair and reproducible evaluation of more complex methods. To the best of our knowledge, this type of systematic investigation is currently lacking in the ML literature and offers new insights not yet widely explored by the community.

We present linear neural networks as rank-constrained linear encoder-decoder architectures, which serve as an interpretable, computationally efficient baseline. Exploring works related to linear neural networks, we see that linear autoencoders provide a direct link to classical low-rank matrix approximation theory. In particular, when trained to minimize squared reconstruction error, linear autoencoders recover the same subspaces as PCA \cite{baldi1989neural, bourlardAutoassociationMultilayerPerceptrons1988, plautPrincipalSubspacesPrincipal2018, baoRegularizedLinearAutoencoders2020}, reflecting an underlying rank-constrained optimization problem of which theory was first provided by \textcite{eckart1936approximation, mirsky1960symmetric, schmidt1907theorie} and later generalized by \textcite{friedland2007generalized}. This perspective has motivated extensive recent work on both analytical and numerical frameworks for low-rank matrix approximation, including settings where autoencoders are used as learnable, data-adaptive approximators \cite{halkoFindingStructureRandomness2011, boutsidisNearOptimalColumnBasedMatrix2013, cohenDimensionalityReductionKMeans2015, zhenyuezhangLowRankMatrixApproximation2013, kalooraziSubspaceOrbitRandomizedDecomposition2018, wangDimensionalityReductionStrategy2015, mounayerRankReductionAutoencoders2025, indykLearningBasedLowRankApproximations2019, babacanSparseBayesianMethods2012, mazumderLearningLowRankLatent2024}. Motivated by these connections, our work focuses on linear neural networks trained under explicit rank constraints, not only for their analytical tractability and interpretability, but also for their foundational role in bridging classical low-rank theory with modern scientific machine learning. We rely on the work of \textcite{friedland2007generalized} to derive our closed-form optimal mappings from a Bayes risk minimization perspective and reveal connections to the related problems of forward modeling and inverse recovery.

\section{Methods}\label{methods}
In this section, we present the primary method used in our work: linear encoder-decoder architectures optimized via Bayes risk minimization. We also provide relevant notation and mathematical preliminaries to contextualize our theory.

\subsection{Encoder-Decoder Architectures}\label{sec:methods_ED}
A popular framework for solving scientific ML problems is the encoder-decoder architecture. In this setup, the encoder maps a variable $\mathbf{x} \in \mathbb{R}^n$ to a low-dimensional latent representation $\mathbf{z} \in \mathbb{R}^r$ typically with $r \leq  n$. The decoder then attempts to map $\mathbf{z}$ to another vector $\mathbf{y} \in \mathbb{R}^ m$. One can define a general encoder-decoder network via parametric mappings; however, since the focus of our work is to use linear and affine linear encoder-decoder architectures, we define only those.

Let $\mathbf{E} \in \mathbb{R}^{r \times n}$ denote a trainable encoder matrix that maps an input $\mathbf{x}$ to a latent variable $\mathbf{z} = \mathbf{E}\mathbf{x}$. Let $\mathbf{D} \in \mathbb{R}^{m \times r}$ be a trainable decoder matrix, accompanied by a bias vector $\mathbf{b} \in \mathbb{R}^m$, such that the output is given by the affine mapping $\mathbf{y} = \mathbf{D}\mathbf{z} + \mathbf{b}$. This defines an affine linear encoder-decoder architecture, where a single bias vector in the decoder is sufficient. In the linear case, this bias vector $\mathbf{b}$ vanishes, i.e., $\mathbf{b}=\mathbf{0}$. While affine linear neural network layers are widely used in ML, in our setting they arise naturally as straightforward extensions of purely linear mappings. In what follows, we focus on linear mappings, with affine linear variants discussed subsequently. The entire encoder-decoder network can be composed into a single affine linear or linear mapping using $\mathbf{A}\in \mathbb{R}^{m\times n}$, where 
\[
\mathbf{y} = \mathbf{A}\mathbf{x}+\mathbf{b} = \mathbf{DE}\mathbf{x}+\mathbf{b}.
\]
In the case where we model an inverse process via an encoder-decoder, we may swap the roles of $\mathbf{x}$ and $\mathbf{y}$. \Cref{fig:edFig} depicts a typical encoder-decoder architecture for forward modeling. Assuming that $r\leq \min\{m,n\}$, we see that the $\operatorname{rank}(\mathbf{A})\leq r$, and thus the dimension of the latent space is reflected in the linear architecture as a rank constraint on the mapping $\mathbf{A}$. In practice, one must choose a bottleneck dimension $r$ and then impose the constraint $\operatorname{rank}(\mathbf{A})\leq r$ when learning the mapping $\mathbf{A}$. 

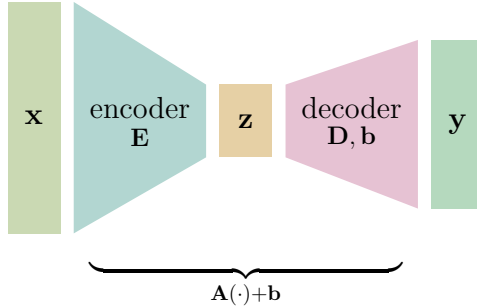
\begin{figure}[ht]
  \centering
  \scalebox{0.8}{
\definecolor{evbg0}{HTML}{2d353b}
\definecolor{evbg1}{HTML}{343f44}
\definecolor{evbg2}{HTML}{3d484d}
\definecolor{evbg3}{HTML}{475258}
\definecolor{evbg4}{HTML}{4f585e}
\definecolor{evbg5}{HTML}{56635f}
\definecolor{evfg}{HTML}{d3c6aa}
\definecolor{evred}{HTML}{e67e80}
\definecolor{evorange}{HTML}{e69875}
\definecolor{evyellow}{HTML}{dbbc7f}
\definecolor{evgreen}{HTML}{a7c080}
\definecolor{evaqua}{HTML}{83c092}
\definecolor{evblue}{HTML}{7fbbb3}
\definecolor{evpurple}{HTML}{d699b6}

\begin{tikzpicture}[scale=1.1]

    \fill[evgreen!60] (0,0.625) rectangle (0.8,4.125);
    \node[black, font=\Large\bfseries] at (0.4,2.375) {x};
    
    \fill[evblue!60] 
        (1,0.625) -- 
        (3,1.775) -- 
        (3,2.875) -- 
        (1,4.125) -- 
        cycle;
    
    \node[black, font=\Large] at (2,2.525) {encoder};
    \node[black, font=\large\bfseries] at (2,2.075) {$\mathbf{E}$};
    
    \fill[evyellow!70] (3.2,1.775) rectangle (4,2.875);
    \node[black, font=\Large\bfseries] at (3.6,2.325) {z};
    
    \fill[evpurple!60]
        (4.2,1.775) --
        (6.2,1.0) --
        (6.2,3.55) --
        (4.2,2.875) --
        cycle;
    
    \node[black, font=\Large] at (5.2,2.525) {decoder};
    \node[black, font=\large\bfseries] at (5.2,2.075) {$\mathbf{D}, \mathbf{b}$};
    
    \fill[evaqua!60] (6.4,1.0) rectangle (7.2,3.55);
    \node[black, font=\Large\bfseries] at (6.8,2.275) {y};

    \node[font=\Large, anchor=north] at (3.6,0.3) 
        {$\underbrace{\hspace{5.2cm}}_{\mathbf{A}(\cdot)+\mathbf{b}}$};

\end{tikzpicture}}  
  \caption{%
    End-to-end encoder-decoder architecture. The input
    $\mathbf{x}$ is mapped by $\mathbf{Ex}$ onto
    a latent variable $\mathbf{z}$, then decoded by
    $\mathbf{y} \approx \mathbf{Dz} + \mathbf{b}$ to obtain a target vector $\mathbf{y}$.
    }
  \label{fig:edFig}
\end{figure}

The remainder of this paper will focus on such linear (and correspondingly affine linear) encoder-decoders and autoencoders, and their application to scientific ML problem formulations. We note that in the literature, the terms encoder-decoder and autoencoder are often used interchangeably. We too will use them interchangeably, with the precise architecture readily inferable depending on the problem context.

\subsection{A Bayes Risk Approach}\label{sec:methods_bayes}

Bayes risk minimization seeks to minimize the expected loss between an estimator of the unknown parameters in the problem at hand, given the observed data distribution and a specified loss function \cite{carlin1997bayes}. Empirical Bayes risk minimization extends this idea by calculating corresponding empirical estimates, if the underlying data distribution is unknown. Low rank matrix approximation through the Bayes risk minimization lens remains a popular approach for data scientists, with applications in many fields including medical imaging, waveform inversion, electromagnetic imaging, and computer vision \cite{hart2025paired, chung2013computing, chung2024paired, chung2017optimal, chung2014efficient, chung2015optimal, kingma2013auto, goh2019solving}. 

It is common to model the unknown parameters $X$ and the corresponding observations $Y$ as jointly distributed according to some probability distribution $P(X, Y)$. As realizations of the corresponding random variables, $\mathbf{x}$ and $\mathbf{y}$ are generated according to model specified in \Cref{eq:generalProblem}. In such settings, it is natural to take a statistical decision-theoretic view, where the goal is to design estimators or mappings that minimize the expected loss of our encoder-decoder (or autoencoder) under the underlying data distribution. Bayes risk minimization attempts to quantify the cost incurred by a decision rule over the joint distribution between the observed data and ground-truth parameters.

We formalize this notion in the context of our work by the following. Consider a single-layer linear encoder-decoder network, $\mathbf{A} =\mathbf{DE}$, as defined in \Cref{sec:methods_ED}, where an input $\mathbf{x} \in \mathbb{R}^n$ is mapped to an output $\mathbf{y} \in \mathbb{R}^m$ with an intermediate latent space representation $\mathbf{z} \in \mathbb{R}^r$. 

In forward modeling situations, our objective is to reconstruct $Y$ from our observations of $X$, and thus we aim to find a rank-constrained linear map $\mathbf{A}$ that minimizes the expected squared reconstruction error
\begin{equation}\label{eq:classicForwardProb}
    \underset{\mathrm{rank}(\mathbf{A}) \leq r}{\min} \ \E \ \left\| \mathbf{A}X - Y \right\|_2^2,
\end{equation}
where $\|\,\cdot\,\|_2$ denotes the $\ell_2$-norm. For inverse problems, we do not observe $X$ directly but have access to measurements $Y$ and seek to reconstruct $X$, in which case we aim to find a linear map $\mathbf{A}$ through the same optimization problem as in \Cref{eq:classicForwardProb} with the roles of $X$ and $Y$ switched. As discussed above, this formulation straightforwardly extends to the affine linear case.

This objective function is precisely the Bayes-optimal formulation for squared loss in linear settings. The linear encoder-decoder thus serves as a data-driven way to approximate Bayes-optimal estimators while enforcing structural constraints such as low rank. We rely on this formulation to derive the optimal rank-constrained linear mappings in \Cref{theory}.

\subsection{Notation and Preliminaries}
We now provide the relevant notation, definitions, and mathematical preliminaries which we use in our derivation of optimal encoder-decoders, and begin with the truncated singular value decomposition.

\begin{definition}[Rank-$r$ Truncated Singular Value Decomposition]\label{def:rTSVD}
    Let $\mathbf{W} \in \R^{m \times n}$ be a matrix with singular value decomposition (SVD)
    \[
        \mathbf{W}
        = \mathbf{U}_{\mathbf{W}} \bm{\Sigma}_{\mathbf{W}} \mathbf{V}_{\mathbf{W}}^\top,
    \]
    where $\mathbf{U}_{\mathbf{W}} = [\mathbf{u}_1, \mathbf{u}_2, \dots, \mathbf{u}_m] \in \mathbb{R}^{m \times m}$ is an orthogonal matrix whose columns $\mathbf{u}_i$ are the left singular vectors, $\bm{\Sigma}_{\mathbf{W}} \in \mathbb{R}^{m \times n}$ is a diagonal matrix with singular values $\sigma_1 \geq \sigma_2 \geq \cdots \geq \sigma_n \geq 0$ on its diagonal, and $\mathbf{V}_{\mathbf{W}} = [\mathbf{v}_1, \mathbf{v}_2, \dots, \mathbf{v}_n] \in \mathbb{R}^{n \times n}$ is an orthogonal matrix whose columns $\mathbf{v}_j$ are the right singular vectors. Define the rank-$r$ truncated SVD  (TSVD) of $\mathbf{W}$, for positive integer $r  \leq \min\{m, n\}$, as
    \[
        \mathbf{W}_r = \mathbf{U}_{\mathbf{W},r} \bm{\Sigma}_{\mathbf{W},r} \mathbf{V}_{\mathbf{W},r}^\top =
        \begin{bmatrix}
            \vert & \vert & & \vert \\
            \mathbf{u}_1 & \mathbf{u}_2 & \cdots & \mathbf{u}_r \\
            \vert & \vert & & \vert
        \end{bmatrix}
        \begin{bmatrix}
            \sigma_1 & & \\
            & \sigma_2 & \\
            & & \ddots & \\
            & & & \sigma_r
        \end{bmatrix}
        \begin{bmatrix}
            \mathbf{v}_1^\top \\
            \mathbf{v}_2^\top \\
            \vdots \\
            \mathbf{v}_r^\top
        \end{bmatrix},
    \]
    where the subscript $r$ denotes that only the $r$ greatest singular values and their corresponding left and right singular vectors are retained. The rank-$r$ truncation of a product of matrices is denoted as 
    \[
        \left( \mathbf{ABC} \right)_r.
    \]
\end{definition}

\begin{property}\label{property:r_TSVD}
Note that $\mathbf{W}$ has only $k=\operatorname{rank}(\mathbf{W})$ nonzero singular values. Therefore if $r \geq k$, the zero singular values do not contribute anything to the construction and $\mathbf{W}_r = \mathbf{W}_{k} = \mathbf{W}$. This also implies that for $r\geq k$, $\mathbf{W}_r =\mathbf{U}_{\mathbf{W},k}
\mathbf{\Sigma}_{\mathbf{W},k}
\mathbf{V}_{\mathbf{W},k}$.
\end{property}

In many of our expressions, especially those involving generalized inverses and low-rank surrogates for operator learning, the Moore-Penrose pseudoinverse plays a central role. Recall its construction via the SVD:
\begin{definition}[Moore-Penrose Pseudoinverse \cite{moore1920reciprocal,penrose1955generalized}]\label{def:pseudoinverse}
    Let $\mathbf{W} \in \mathbb{R}^{m \times n}$ be a matrix with SVD $\mathbf{W} = \mathbf{U}_{\mathbf{W}} \bm{\Sigma}_{\mathbf{W}} \mathbf{V}_{\mathbf{W}}^\top$. The Moore--Penrose pseudoinverse of $\mathbf{W}$ is defined as
    \[
        \mathbf{W}^\dagger
        = \mathbf{V}_{\mathbf{W}} \bm{\Sigma}_{\mathbf{W}}^\dagger \mathbf{U}_{\mathbf{W}}^\top,
    \]
    where $\bm{\Sigma}_{\mathbf{W}}^\dagger \in \mathbb{R}^{n \times m}$ is obtained by taking the reciprocal of each nonzero singular value of $\mathbf{W}$ and transposing the resulting diagonal matrix, i.e.,
    \[
        \bm{\Sigma}_{\mathbf{W}}^\dagger
        = \begin{bmatrix}
            \sigma_1^{-1} & & \\
            & \sigma_2^{-1} & \\
            & & \ddots \\
            & & & \sigma_k^{-1} \\
            & & & & \mathbf{0}_{(n - k) \times (m - k)}
        \end{bmatrix},
    \]
    with $k = \operatorname{rank}(\mathbf{W})$ and the zero block sized to complete $\bm{\Sigma}_{\mathbf{W}}^\dagger$ to dimensions $n \times m$. This unique matrix satisfies the four Penrose conditions
    \[
        \mathbf{W} \mathbf{W}^\dagger \mathbf{W} = \mathbf{W}, 
        \quad 
        \mathbf{W}^\dagger \mathbf{W} \mathbf{W}^\dagger = \mathbf{W}^\dagger, 
        \quad 
        (\mathbf{W} \mathbf{W}^\dagger)^\top = \mathbf{W} \mathbf{W}^\dagger, 
        \quad \text{and} \quad 
        (\mathbf{W}^\dagger \mathbf{W})^\top = \mathbf{W}^\dagger \mathbf{W}.
    \]
\end{definition}

The final definition establishes the relevant notation for the orthogonal projections that isolate the column and row space components of a matrix, which helps in formulating and solving the rank-constrained approximation problem for this work.

\begin{definition}[Left and Right Projection Matrices]\label{def:projection}
    Let $\mathbf{W} \in \mathbb{R}^{m \times n}$ have SVD $\mathbf{W} = \mathbf{U}_{\mathbf{W}} \bm{\Sigma}_{\mathbf{W}} \mathbf{V}_{\mathbf{W}}^\top$. Define the left and right projection matrix of $\mathbf{W}$ as $ \mathbf{P}_{\mathbf{W}}^{\mathrm{L}}$ and $\mathbf{P}_{\mathbf{W}}^{\mathrm{R}}$. These are then given by 
    \[
        \mathbf{P}_{\mathbf{W}}^{\mathrm{L}}
        = \mathbf{W} \mathbf{W}^\dagger
        = \mathbf{U}_{\mathbf{W}} \mathbf{U}_{\mathbf{W}}^\top
        = \sum_{i=1}^{\operatorname{rank}(\mathbf{W})} \mathbf{u}_i \mathbf{u}_i^\top 
        \quad \text{and} \quad 
         \mathbf{P}_{\mathbf{W}}^{\mathrm{R}} 
        = \mathbf{W}^\dagger \mathbf{W}
        = \mathbf{V}_{\mathbf{W}} \mathbf{V}_{\mathbf{W}}^\top
        = \sum_{i=1}^{\operatorname{rank}(\mathbf{W})} \mathbf{v}_i \mathbf{v}_i^\top,
    \]
    where $\mathbf{u}_i$ and $\mathbf{v}_i$ are the left and right singular vectors of $\mathbf{W}$. These projection matrices orthogonally project onto the column space and row space of $\mathbf{W}$, respectively.
\end{definition}

We also briefly recall some important properties of the trace and Frobenius norm operators: 
\begin{itemize}
    \item The trace operator, denoted $ \operatorname{tr}(\cdot) $, satisfies the cyclic property $ \operatorname{tr}(\mathbf{A} \mathbf{B}) = \operatorname{tr}(\mathbf{B} \mathbf{A}) $ whenever the product is defined, and it commutes with expectation, i.e., $ \mathbb{E}[\operatorname{tr}(X)] = \operatorname{tr}(\mathbb{E}[X])$ for any integrable random matrix $X$.
    \item The Frobenius norm of a matrix $ \mathbf{A} \in \mathbb{R}^{m \times n}$  is unitarily invariant, i.e., $ \left\| \mathbf{UAV} \right\|_{\mathrm{F}} = \left\| \mathbf{A} \right\|_{\mathrm{F}} $ for any orthogonal matrices $\mathbf{U}$ and $\mathbf{V}$. It is also quadratic in the sense that $ \left\| \mathbf{A} \right\|_{\mathrm{F}}^2 = \operatorname{tr}(\mathbf{A}^\top \mathbf{A}) $, and it aligns naturally with the Hilbert-Schmidt inner product through the identity $\langle \mathbf{A}, \mathbf{B} \rangle = \operatorname{tr}(\mathbf{A}^\top \mathbf{B})$.
\end{itemize}

With these foundational elements in place, we proceed to the principal result of \textcite{friedland2007generalized}, which yields a closed-form solution to the rank-constrained matrix minimization problem involving composed operators--crucial for the derivations that follow.

\begin{theorem}[Generalized Rank-Constrained Matrix Approximation {\cite{friedland2007generalized}}]\label{th:FriedlandAndTorokhti}
    Let matrices $\mathbf{A} \in \mathbb{C}^{m \times n}$, $\mathbf{B} \in \mathbb{C}^{m \times p}$, and $\mathbf{C} \in \mathbb{C}^{q \times n}$ be given. Then
    \[
        \mathbf{W} = \mathbf{B}^\dagger \left( \mathbf{P}_{\mathbf{B}}^{\mathrm{L}} \mathbf{A} \mathbf{P}_{\mathbf{C}}^{\mathrm{R}} \right)_r \mathbf{C}^\dagger
    \]
    is a solution to the minimization problem
    \[
        \min_{\operatorname{rank}(\mathbf{W}) \leq r} \ \left\| \mathbf{A} - \mathbf{BWC} \right\|_\mathrm{F},
    \]
    having the minimal $\left\| \mathbf{W} \right\|_\mathrm{F}$. This solution is unique if and only if either
    \[
        r \geq \operatorname{rank} \left( \mathbf{P}_{\mathbf{B}}^{\mathrm{L}} \mathbf{A} \mathbf{P}_{\mathbf{C}}^{\mathrm{R}} \right) 
    \]
    or
    \[
        1 \leq r < \operatorname{rank} \left( \mathbf{P}_{\mathbf{B}}^{\mathrm{L}} \mathbf{A} \mathbf{P}_{\mathbf{C}}^{\mathrm{R}} \right)
        \quad \text{and} \quad
        \sigma_r \left( \mathbf{P}_{\mathbf{B}}^{\mathrm{L}} \mathbf{A} \mathbf{P}_{\mathbf{C}}^{\mathrm{R}} \right)
        > \sigma_{r+1} \left( \mathbf{P}_{\mathbf{B}}^{\mathrm{L}} \mathbf{A} \mathbf{P}_{\mathbf{C}}^{\mathrm{R}} \right).
    \]
\end{theorem}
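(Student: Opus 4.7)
My plan is to reduce the rank-constrained problem $\min_{\operatorname{rank}(\mathbf{W}) \leq r} \| \mathbf{A} - \mathbf{BWC} \|_{\mathrm{F}}$ to a standard Eckart--Young--Mirsky best rank-$r$ approximation problem via an orthogonal (Pythagorean) decomposition induced by the projection matrices $\mathbf{P}_{\mathbf{B}}^{\mathrm{L}}$ and $\mathbf{P}_{\mathbf{C}}^{\mathrm{R}}$. The starting observation is that the product $\mathbf{BWC}$ always has its column space contained in $\operatorname{col}(\mathbf{B})$ and its row space contained in $\operatorname{row}(\mathbf{C})$, so $\mathbf{P}_{\mathbf{B}}^{\mathrm{L}} \mathbf{BWC} \mathbf{P}_{\mathbf{C}}^{\mathrm{R}} = \mathbf{BWC}$. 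This means that, for any feasible $\mathbf{W}$, the three matrices $(\mathbf{I} - \mathbf{P}_{\mathbf{B}}^{\mathrm{L}}) \mathbf{A}$, $\mathbf{P}_{\mathbf{B}}^{\mathrm{L}} \mathbf{A} (\mathbf{I} - \mathbf{P}_{\mathbf{C}}^{\mathrm{R}})$, and $\mathbf{P}_{\mathbf{B}}^{\mathrm{L}} \mathbf{A} \mathbf{P}_{\mathbf{C}}^{\mathrm{R}} - \mathbf{BWC}$ are pairwise orthogonal in the Hilbert--Schmidt inner product $\langle \cdot , \cdot \rangle$ (the first two cannot be touched by $\mathbf{BWC}$ at all). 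Applying the unitary invariance and the trace characterization of $\| \cdot \|_{\mathrm{F}}$, I would split the objective as
\[
    \| \mathbf{A} - \mathbf{BWC} \|_{\mathrm{F}}^2
    = \| (\mathbf{I} - \mathbf{P}_{\mathbf{B}}^{\mathrm{L}}) \mathbf{A} \|_{\mathrm{F}}^2
    + \| \mathbf{P}_{\mathbf{B}}^{\mathrm{L}} \mathbf{A} (\mathbf{I} - \mathbf{P}_{\mathbf{C}}^{\mathrm{R}}) \|_{\mathrm{F}}^2
    + \| \mathbf{P}_{\mathbf{B}}^{\mathrm{L}} \mathbf{A} \mathbf{P}_{\mathbf{C}}^{\mathrm{R}} - \mathbf{BWC} \|_{\mathrm{F}}^2 ,
\]
so that only the last term depends on $\mathbf{W}$.

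Next I would analyze the remaining subproblem $\min_{\operatorname{rank}(\mathbf{W}) \leq r} \| \mathbf{P}_{\mathbf{B}}^{\mathrm{L}} \mathbf{A} \mathbf{P}_{\mathbf{C}}^{\mathrm{R}} - \mathbf{BWC} \|_{\mathrm{F}}$. Because $\operatorname{rank}(\mathbf{BWC}) \leq \operatorname{rank}(\mathbf{W}) \leq r$ and, conversely, every rank-$r$ matrix whose column space sits in $\operatorname{col}(\mathbf{B})$ and whose row space sits in $\operatorname{row}(\mathbf{C})$ can be written as $\mathbf{BWC}$ for some admissible $\mathbf{W}$, the infimum over feasible $\mathbf{W}$ coincides with the infimum over all rank-$r$ matrices with these subspace constraints. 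The key observation is that the truncated SVD $(\mathbf{P}_{\mathbf{B}}^{\mathrm{L}} \mathbf{A} \mathbf{P}_{\mathbf{C}}^{\mathrm{R}})_r$ automatically satisfies those subspace constraints: its left singular vectors are linear combinations of columns of $\mathbf{P}_{\mathbf{B}}^{\mathrm{L}} \mathbf{A} \mathbf{P}_{\mathbf{C}}^{\mathrm{R}}$ (hence lie in $\operatorname{col}(\mathbf{B})$) and similarly on the right. Thus by the classical Eckart--Young--Mirsky theorem, $(\mathbf{P}_{\mathbf{B}}^{\mathrm{L}} \mathbf{A} \mathbf{P}_{\mathbf{C}}^{\mathrm{R}})_r$ solves this subproblem.

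It then remains to recover $\mathbf{W}$ from the optimal product. I would verify, using the Penrose identities $\mathbf{B} \mathbf{B}^\dagger = \mathbf{P}_{\mathbf{B}}^{\mathrm{L}}$ and $\mathbf{C}^\dagger \mathbf{C} = \mathbf{P}_{\mathbf{C}}^{\mathrm{R}}$, that the proposed candidate $\mathbf{W}^\star = \mathbf{B}^\dagger (\mathbf{P}_{\mathbf{B}}^{\mathrm{L}} \mathbf{A} \mathbf{P}_{\mathbf{C}}^{\mathrm{R}})_r \mathbf{C}^\dagger$ indeed gives $\mathbf{B} \mathbf{W}^\star \mathbf{C} = (\mathbf{P}_{\mathbf{B}}^{\mathrm{L}} \mathbf{A} \mathbf{P}_{\mathbf{C}}^{\mathrm{R}})_r$, the first step using that $(\mathbf{P}_{\mathbf{B}}^{\mathrm{L}} \mathbf{A} \mathbf{P}_{\mathbf{C}}^{\mathrm{R}})_r$ already lies in the respective ranges. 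For minimality of $\| \mathbf{W} \|_{\mathrm{F}}$, I would appeal to the standard minimum-norm property of the Moore--Penrose pseudoinverse: among all $\mathbf{W}$ satisfying $\mathbf{B} \mathbf{W} \mathbf{C} = \mathbf{M}$ for a fixed feasible $\mathbf{M}$, the element $\mathbf{B}^\dagger \mathbf{M} \mathbf{C}^\dagger$ has least Frobenius norm (its row space sits in $\operatorname{row}(\mathbf{B})$ and column space in $\operatorname{col}(\mathbf{C}^\top)$, so any perturbation in the null spaces of $\mathbf{B}$ or $\mathbf{C}$ is Frobenius-orthogonal to it and strictly increases the norm).

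Finally, I would establish the uniqueness statement. Uniqueness of $\mathbf{W}^\star$ requires (i) uniqueness of the best rank-$r$ approximation to $\mathbf{P}_{\mathbf{B}}^{\mathrm{L}} \mathbf{A} \mathbf{P}_{\mathbf{C}}^{\mathrm{R}}$, and (ii) uniqueness of the minimum-norm $\mathbf{W}$ producing that product; step (ii) is automatic from the pseudoinverse construction, while step (i) reduces to the standard Eckart--Young gap condition, namely $\sigma_r > \sigma_{r+1}$, or the trivial case $r \geq \operatorname{rank}(\mathbf{P}_{\mathbf{B}}^{\mathrm{L}} \mathbf{A} \mathbf{P}_{\mathbf{C}}^{\mathrm{R}})$ in which the optimum is just $\mathbf{P}_{\mathbf{B}}^{\mathrm{L}} \mathbf{A} \mathbf{P}_{\mathbf{C}}^{\mathrm{R}}$ itself. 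I expect the main obstacle to be the careful bookkeeping in step (i) of the first paragraph, specifically verifying that all cross terms in the Frobenius expansion vanish (which relies on the self-adjointness and idempotence of $\mathbf{P}_{\mathbf{B}}^{\mathrm{L}}$ and $\mathbf{P}_{\mathbf{C}}^{\mathrm{R}}$, combined with the cyclic property of the trace); once that orthogonal decomposition is secured, the rest follows from Eckart--Young--Mirsky and pseudoinverse minimality in a direct way.
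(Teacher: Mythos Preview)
Your proposal is correct and follows what is essentially the standard route to this result (orthogonal splitting via the projections $\mathbf{P}_{\mathbf{B}}^{\mathrm{L}}$, $\mathbf{P}_{\mathbf{C}}^{\mathrm{R}}$, reduction to Eckart--Young--Mirsky, then pseudoinverse minimality for the norm and the spectral-gap condition for uniqueness). Note, however, that the paper does not actually prove this theorem: its ``proof'' is simply the line ``See \cite{friedland2007generalized}.'' So you have supplied a genuine argument where the paper only defers to the original reference; your sketch is in fact close in spirit to the proof given by Friedland and Torokhti themselves.
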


\begin{proof}
    See \cite{friedland2007generalized}.
\end{proof}

\section{Theoretical Results}\label{theory}
In this section, we establish the theoretical framework for our analysis of linear encoder-decoder networks. Our objective is to derive optimal estimates for these foundational architectures, focusing on key structural properties and analytical tools required to rigorously study their behavior.

\subsection{Forward End-to-End Problem}
\paragraph{General Problem Formulation.}
In forward surrogate modeling, one aims to approximate a noisy forward transformation, represented by Equation~\eqref{eq:generalProblem}, using a low-rank model $\mathbf{A}$. This optimal solution captures how to best emulate the effect of $\mathbf{F}$ while respecting the statistical structure of $X$ and the presence of noise. We denote the statistical independence between two random variables as, e.g., $\mathcal{E} \independent{} X$.

\begin{theorem}[Forward End-to-End Problem]\label{th:e2eFwd}
    Let $\mathbf{F} \in \mathbb{R}^{m \times n}$ be a linear forward operator,  $X$ be a random variable with finite first moment and with second moment $\mathbf{\Gamma}_{X} \in \mathbb{R}^{n \times n}$ and symmetric factorization $\mathbf{\Gamma}_{X} = \mathbf{L}_{X} \mathbf{L}_{X}^\top$, and let $\mathcal{E}$ be unbiased random noise with second moment $\mathbf{\Gamma}_{\mathcal{E}} \in \mathbb{R}^{m \times m}$. Assuming $\mathcal{E}  \perp\!\!\!\perp X$, then for positive integer $r$, 
    \begin{equation} 
        \mathbf{\widehat{A}} = (\mathbf{FL}_X)_r\mathbf{L}_X^\dagger\label{eq:solForward}
    \end{equation}
     is an optimal solution to the forward end-to-end minimization problem
    \begin{equation}      
    \underset{\mathrm{rank}(\mathbf{A}) \leq r}{\min} \hspace{0.15cm}
        \E \left\| \mathbf{A} X - (\mathbf{F} X + \mathcal{E}) \right\|_2^2,
        \label{eq:objFcnForward}
    \end{equation}
having minimal Frobenius norm $\| \mathbf{A} \|_\mathrm{F}$. This solution is unique if and only if $r\geq \operatorname{rank}(\mathbf{FL}_X)$, or $1 \leq r < \operatorname{rank}(\mathbf{FL}_X)$ and $\sigma_r(\mathbf{FL}_X) > \sigma_{r+1}(\mathbf{FL}_X)$. 
\end{theorem}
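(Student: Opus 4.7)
The plan is to reduce the stochastic objective in \Cref{eq:objFcnForward} to a deterministic Frobenius-norm problem of the form treated by \Cref{th:FriedlandAndTorokhti}, and then read off the closed-form minimizer.

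First I would expand the squared norm inside the expectation and exploit the independence $\mathcal{E} \perp\!\!\!\perp X$ together with $\mathbb{E}[\mathcal{E}] = \mathbf{0}$. The cross term $2\,\mathbb{E}[((\mathbf{A}-\mathbf{F})X)^\top \mathcal{E}]$ factors as an inner product involving $\mathbb{E}[\mathcal{E}] = \mathbf{0}$ and therefore vanishes, while $\mathbb{E}\|\mathcal{E}\|_2^2 = \operatorname{tr}(\mathbf{\Gamma}_{\mathcal{E}})$ is constant in $\mathbf{A}$ and can be dropped. What remains is the $X$-dependent term, which I would rewrite via the cyclic and expectation-commutativity properties of the trace as
\[
\mathbb{E}\|(\mathbf{A}-\mathbf{F})X\|_2^2
= \operatorname{tr}\!\bigl((\mathbf{A}-\mathbf{F})\,\mathbf{\Gamma}_X\,(\mathbf{A}-\mathbf{F})^\top\bigr).
\]

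Next, I would substitute the symmetric factorization $\mathbf{\Gamma}_X = \mathbf{L}_X \mathbf{L}_X^\top$ and use $\|\mathbf{M}\|_\mathrm{F}^2 = \operatorname{tr}(\mathbf{M}\mathbf{M}^\top)$ to collapse the trace into
\[
\operatorname{tr}\!\bigl((\mathbf{A}-\mathbf{F})\mathbf{L}_X\mathbf{L}_X^\top(\mathbf{A}-\mathbf{F})^\top\bigr)
= \bigl\|\mathbf{A}\mathbf{L}_X - \mathbf{F}\mathbf{L}_X\bigr\|_\mathrm{F}^2.
\]
This converts the original Bayes-risk problem into the deterministic rank-constrained Frobenius problem $\min_{\operatorname{rank}(\mathbf{A}) \leq r}\|\mathbf{F}\mathbf{L}_X - \mathbf{A}\mathbf{L}_X\|_\mathrm{F}$, which matches the template of \Cref{th:FriedlandAndTorokhti} under the identification $\mathbf{A}_{\mathrm{FT}} \leftarrow \mathbf{F}\mathbf{L}_X$, $\mathbf{B}_{\mathrm{FT}} \leftarrow \mathbf{I}$, $\mathbf{C}_{\mathrm{FT}} \leftarrow \mathbf{L}_X$, with the decision variable $\mathbf{W}_{\mathrm{FT}}$ playing the role of $\mathbf{A}$. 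Applying the theorem then gives the candidate optimum
\[
\widehat{\mathbf{A}} = \bigl(\mathbf{F}\mathbf{L}_X\,\mathbf{P}_{\mathbf{L}_X}^{\mathrm{R}}\bigr)_r \mathbf{L}_X^\dagger,
\]
with minimal Frobenius norm among all minimizers.

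The step I expect to be the main technical nuisance is reconciling this expression with the stated form $(\mathbf{F}\mathbf{L}_X)_r\mathbf{L}_X^\dagger$. For this I would invoke the Penrose identity in \Cref{def:pseudoinverse}, namely $\mathbf{L}_X\mathbf{L}_X^\dagger\mathbf{L}_X = \mathbf{L}_X$, which implies $\mathbf{L}_X\mathbf{P}_{\mathbf{L}_X}^{\mathrm{R}} = \mathbf{L}_X$ and hence $\mathbf{F}\mathbf{L}_X\mathbf{P}_{\mathbf{L}_X}^{\mathrm{R}} = \mathbf{F}\mathbf{L}_X$; the TSVD truncations therefore coincide. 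Finally, the uniqueness clause transfers verbatim from \Cref{th:FriedlandAndTorokhti}: since $\mathbf{P}_{\mathbf{I}}^{\mathrm{L}} = \mathbf{I}$ and the above argument gives $\mathbf{P}_{\mathbf{I}}^{\mathrm{L}}(\mathbf{F}\mathbf{L}_X)\mathbf{P}_{\mathbf{L}_X}^{\mathrm{R}} = \mathbf{F}\mathbf{L}_X$, the rank and singular-value gap conditions are expressed directly in terms of $\mathbf{F}\mathbf{L}_X$, as claimed.
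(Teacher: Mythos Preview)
Your proposal is correct and follows essentially the same approach as the paper: both reduce the Bayes risk to the deterministic problem $\min_{\operatorname{rank}(\mathbf{A})\le r}\|\mathbf{A}\mathbf{L}_X-\mathbf{F}\mathbf{L}_X\|_\mathrm{F}^2$ and invoke \Cref{th:FriedlandAndTorokhti} with $\mathbf{B}=\mathbf{I}$, $\mathbf{C}=\mathbf{L}_X$, then simplify $\mathbf{F}\mathbf{L}_X\mathbf{P}^{\mathrm R}_{\mathbf{L}_X}=\mathbf{F}\mathbf{L}_X$. Your reduction is marginally more direct---you group $(\mathbf{A}-\mathbf{F})X$ at the outset and read off $\operatorname{tr}((\mathbf{A}-\mathbf{F})\mathbf{\Gamma}_X(\mathbf{A}-\mathbf{F})^\top)$, whereas the paper expands all four trace terms and then completes the square---but the substance is identical.
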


Before presenting the proof of the theorem, we highlight several important special cases that depend on the ranks of the operators involved. These arise from the structure of the data-related operator $\mathbf{L}_X$ and the forward operator $\mathbf{F}$. To formalize the case distinctions, let us define $p = \min\{m, n\}$ as the minimal dimension of the output and input spaces. We denote the rank of the data-related matrix $\mathbf{L}_X$ by $k$, with $k \leq n$, and the rank of the forward operator $\mathbf{F}$ by $\ell$, where $\ell \leq p$. These rank parameters will guide the characterization of different regimes in the learned solution.

Depending on the imposed rank $r$ of the learned mapping relative to the rank of the composed operator $\mathbf{F} \mathbf{L}_X$, we may derive several simplified and interpretable forms for the optimal solution $\widehat{\mathbf{A}}$. In particular, when $r \geq \operatorname{rank}(\mathbf{F} \mathbf{L}_X)$, the simplifications in \Cref{tab:casesforward} hold. These distinct cases are separately handled in the proof of \Cref{th:e2eFwd}, and key takeaways include:
\begin{itemize}
    \item Each simplification applies when $r \geq \operatorname{rank}(\mathbf{F} \mathbf{L}_X)$, ensuring the learned map is sufficiently expressive.
    \item When the data matrix $\mathbf{L}_X$ is full-rank ($k = n$) and $r \geq \ell$, the learned operator, as expected, exactly recovers the linear forward operator, i.e., $\widehat{\mathbf{A}} = \mathbf{F}$.
    \item When $\mathbf{L}_X$ is rank-deficient, whether due to redundant data or because the second moment matrix is estimated from a limited number of samples, the optimal solution corresponds to a projection of the forward operator onto the column space of $\mathbf{L}_X$, i.e.,
    $\widehat{\mathbf{A}} = \mathbf{F} \mathbf{U}_{\mathbf{L}_X,k} \mathbf{U}_{\mathbf{L}_X,k}^\top.$
    This projection reflects the limitation that information outside the span of $\mathbf{L}_X$ cannot be recovered.
\end{itemize}

\renewcommand{\arraystretch}{1.5}
\begin{table}
\caption{Different cases of the optimal forward end-to-end map $\widehat{\mathbf{A}}$.}
\centering
\begin{minipage}{0.5\textwidth}
\centering\small
\begin{tabular}{|
    >{\columncolor{tablecolor1}}c|
    >{\columncolor{tablecolor2}}c|
    >{\columncolor{tablecolor3}}c|
}
  \hline
  & $\ell = p$ & $\ell < p$ \\
  \hline
  $k = n$ &
    \begin{tabular}{@{}c@{}} $r \geq n$ \\[2pt] $\widehat{\mathbf{A}} = \mathbf{F}$ \end{tabular} &
    \begin{tabular}{@{}c@{}} $r \geq \ell$ \\[2pt] $\widehat{\mathbf{A}} = \mathbf{F}$ \end{tabular} \\    
  \hline
  $k < n$ &
    \begin{tabular}{@{}c@{}} $r \geq \min\{k,\ell\}$ \\[2pt] $\widehat{\mathbf{A}} = \mathbf{F} \mathbf{U}_{\mathbf{L}_X,k} \mathbf{U}_{\mathbf{L}_X,k}^\top$ \end{tabular} &
    \begin{tabular}{@{}c@{}} $r \geq \min\{k,\ell\}$ \\[2pt] $\widehat{\mathbf{A}} = \mathbf{F} \mathbf{U}_{\mathbf{L}_X,k} \mathbf{U}_{\mathbf{L}_X,k}^\top$ \end{tabular} \\
  \hline
\end{tabular}
\end{minipage}
\hspace{1em}
\begin{minipage}{0.2\textwidth}
\centering
\begin{tabular}{|>{\columncolor{tablecolor1}}c|}
  \hline
  \rowcolor{tablecolor1}
  $p = \min\{m, n\}$ \\
  \hline
  \rowcolor{tablecolor2}
   $\ell = \rank(\mathbf{F})$ \\
  \hline
  \rowcolor{tablecolor3}
  $k = \rank(\mathbf{L}_X)$ \\
  \hline
\end{tabular}
\end{minipage}

\vspace{0.75em}

\label{tab:casesforward}
\end{table}

\begin{proof}\label{prf:forward}
    Let $Y = \mathbf{F} X + \mathcal{E}$. Then 
    \begin{align}
        \E[Y Y^\top] &= \mathbf{\Gamma}_Y = \E[(\mathbf{F}X + \mathcal{E})(\mathbf{F}X + \mathcal{E})^\top] = \mathbf{F} \mathbf{\Gamma}_X \mathbf{F}^\top + \mathbf{\Gamma}_\mathcal{E} \label{eq:EYY^T}, \\
        \E[Y X^\top] &= \E[ (\mathbf{F} X + \mathcal{E}) X^\top ] = \E[\mathbf{F} X X^\top] + \E[ \mathcal{E} X^\top] = \mathbf{F} \mathbf{\Gamma}_X \label{eq:EYX^T}, \quad \text{and} \\
        \E[X Y^\top] &= \E[ X(\mathbf{F} X + \mathcal{E})^\top ] = \E[ X X^\top \mathbf{F}^\top ] + \E[ X \mathcal{E}^\top ] = \mathbf{\Gamma}_X \mathbf{F}^\top \label{eq:EXY^T}.
    \end{align}
    Now the objective function in \Cref{eq:objFcnForward} can be expressed as 
    \[
       \begin{aligned}
        \E \left\| \mathbf{A} X - Y \right\|_2^2
        &= \E \left[ \tr\left( \left( \mathbf{A} X - Y \right) \left( \mathbf{A} X - Y \right)^\top \right) \right] \\
        &= \tr\left( \mathbf{A}^\top \mathbf{A} \, \E \left[ X X^\top \right] \right) 
        - \tr\left( \mathbf{A}^\top \, \E \left[ Y X^\top \right] \right) 
        - \tr\left( \mathbf{A} \, \E \left[ X Y^\top \right] \right) 
        + \tr\left( \E \left[ Y Y^\top \right] \right) \\
        &= \tr\left( \mathbf{A}^\top \mathbf{A} \, \mathbf{\Gamma}_X \right) 
        - \tr\left( \mathbf{A}^\top \mathbf{F} \mathbf{\Gamma}_X \right) 
        - \tr\left( \mathbf{A} \mathbf{\Gamma}_X \mathbf{F}^\top \right) 
        + \tr\left( \mathbf{\Gamma}_Y \right),
    \end{aligned}
    \]
    and due to properties of the trace, we get
    \[
        \E \left\| \mathbf{A} X - Y \right\|_2^2
        = \tr\left(\mathbf{A}^\top \mathbf{A} \mathbf{\Gamma}_X\right)
        - 2\tr\left(\mathbf{A}^\top \mathbf{F} \mathbf{\Gamma}_X\right)
        + \tr\left(\mathbf{\Gamma}_Y\right).
    \]
    Invoking the identity between the trace and squared Frobenius norm gives us
    \[
        \E \left\| \mathbf{A} X - Y \right\|_2^2
        = \left\| \mathbf{A} \mathbf{L}_X \right\|_\mathrm{F}^2
        - 2\tr\left(\mathbf{A}^\top \mathbf{F} \mathbf{\Gamma}_X\right)
        + \left\| \mathbf{L}_Y \right\|_\mathrm{F}^2.
    \]
    Further, by completing the square, we can verify that
    \begin{equation}\label{eq:quadext}
        \E \left\| \mathbf{A} X - Y \right\|_2^2  
        = \left\| \mathbf{A} \mathbf{L}_X - \mathbf{F} \mathbf{L}_X \right\|_\mathrm{F}^2
        - \left\| \mathbf{F} \mathbf{L}_X \right\|_\mathrm{F}^2
        + \left\| \mathbf{L}_Y \right\|_\mathrm{F}^2.
    \end{equation}
    Since we aim to minimize $\E \left\| \mathbf{A} X - Y \right\|_2^2$ with respect to $\mathbf{A}$, the last two terms in \Cref{eq:quadext} do not contribute to this optimization problem, and we get
    \begin{equation}\label{eq:optForward}
        \min_{\operatorname{rank}(\mathbf{A}) \leq r}
        \ \left\| \mathbf{A} \mathbf{L}_X - \mathbf{F} \mathbf{L}_X \right\|_\mathrm{F}^2.
    \end{equation}
    We refer to \Cref{th:FriedlandAndTorokhti} for the general solution of \Cref{eq:optForward} as
    \begin{align*}
        \widehat{\mathbf{A}} 
        &= \left( \mathbf{F} \mathbf{L}_X \mathbf{P}_{\mathbf{L}_X}^\text{R} \right)_r \mathbf{L}_X^\dagger 
        = \left( \mathbf{F} \mathbf{L}_X \mathbf{V}_{\mathbf{L}_X,k} \mathbf{V}_{\mathbf{L}_X,k}^\top \right)_r \mathbf{L}_X^\dagger \\
        &= \left( \mathbf{F} \mathbf{U}_{\mathbf{L}_X,k} \mathbf{\Sigma}_{\mathbf{L}_X,k} \mathbf{V}_{\mathbf{L}_X,k}^\top \mathbf{V}_{\mathbf{L}_X,k} \mathbf{V}_{\mathbf{L}_X,k}^\top \right)_r \mathbf{L}_X^\dagger
        = \left( \mathbf{F} \mathbf{L}_X \right)_r \mathbf{L}_X^\dagger, 
    \end{align*}
    where $\mathbf{P}^\text{R}$ and $\mathbf{P}^\text{L}$ refer to the left and right projection matrices as defined in \Cref{def:projection}.

    Now we consider specific cases where the general solution can be further simplified. For each of the cases, we only consider $r \geq \operatorname{rank}(\mathbf{FL}_X)$, which implies that $(\mathbf{FL}_X)_r = \mathbf{FL}_X$ according to \Cref{property:r_TSVD}.
    \begin{enumerate}
        \item Full-rank $\mathbf{L}_X$ and $\mathbf{F}$, i.e., $k = n$ and $\ell = p$. Note that $\operatorname{rank}(\mathbf{F} \mathbf{L}_X) = p$ and $\mathbf{L}_X^\dagger = \mathbf{L}_X^{-1}$. Therefore,
        \[
            \widehat{\mathbf{A}} = \left( \mathbf{F} \mathbf{L}_X \right)_r \mathbf{L}_X^\dagger = 
            \mathbf{F} \mathbf{L}_X \mathbf{L}_X^{-1} = \mathbf{F}.
        \]
    
        \item Full-rank $\mathbf{L}_X$ and rank deficient $\mathbf{F}$, i.e., $k = n$ and $\ell < p$. Note that $\operatorname{rank}(\mathbf{F} \mathbf{L}_X) = \ell$ and 
        $\mathbf{L}_X$ is invertible. For $r \geq \ell$, we have
        \[
            \widehat{\mathbf{A}} = \left( \mathbf{F} \mathbf{L}_X \right)_r \mathbf{L}_X^\dagger 
            = \mathbf{F} \mathbf{L}_X \mathbf{L}_X^{-1} = \mathbf{F}.
        \] 
    
        \item Rank deficient $\mathbf{L}_X$ and full-rank $\mathbf{F}$, i.e., $k < n$ and $\ell = p$. Note that $\operatorname{rank}(\mathbf{F} \mathbf{L}_X) = b \leq \min\{k, p\}$ and according to \Cref{def:projection},
        \begin{align*}
            \mathbf{L}_X \mathbf{L}_X^\dagger =\mathbf{U}_{\mathbf{L}_X,k}\mathbf{U}_{\mathbf{L}_X,k}^\top.
        \end{align*}
        Thus for $r \geq b$ we thus have
        \begin{align*}
            \widehat{\mathbf{A}} 
            &= \left( \mathbf{F} \mathbf{L}_X \right)_r \mathbf{L}_X^\dagger 
            = \mathbf{F} \mathbf{L}_X \mathbf{L}_X^\dagger \\
            &= \mathbf{F} \mathbf{U}_{\mathbf{L}_X,k} \mathbf{U}_{\mathbf{L}_X,k}^\top.
        \end{align*}
    
        \item Rank deficient $\mathbf{L}_X$ and $\mathbf{F}$, i.e., $k < n$ and $\ell < p$. Note that $\operatorname{rank}(\mathbf{F} \mathbf{L}_X) = b \leq \min\{k, \ell\}$. Again $\mathbf{L}_X \mathbf{L}_X^\dagger = \mathbf{U}_{\mathbf{L}_X,k} \mathbf{U}_{\mathbf{L}_X,k}^\top$. Thus for $r \geq b$ we have
        \[
        \begin{aligned}
            \widehat{\mathbf{A}} 
            &= \left( \mathbf{F} \mathbf{L}_X \right)_r \mathbf{L}_X^\dagger 
            = \mathbf{F} \mathbf{L}_X \mathbf{L}_X^\dagger = \mathbf{F} \mathbf{U}_{\mathbf{L}_X,k} \mathbf{U}_{\mathbf{L}_X,k}^\top.
        \end{aligned}
        \]
    \end{enumerate}
\end{proof}

We now present the affine linear version of the forward end-to-end problem, which includes a bias term to offset the usage of covariance matrices instead of second moments. We observe that the result takes the same form as its purely linear counterpart.

\begin{theorem}[Affine Linear Forward End-to-End Problem]\label{th:e2eFwdAffLin}
    Let $\mathbf{F} \in \mathbb{R}^{m \times n}$ be a linear forward operator,  $X$ be a random variable with finite mean $\E [X] = \bm{\mu}_X \in \R^n$ and covariance $\mathbf{S}_{X} \in \mathbb{R}^{n \times n}$ with symmetric factorization $\mathbf{S}_{X} = \mathbf{K}_{X} \mathbf{K}_{X}^\top$, and let $\mathcal{E}$ be unbiased random noise with second moment $\mathbf{\Gamma}_{\mathcal{E}} \in \mathbb{R}^{m \times m}$. Assuming $\mathcal{E} \perp\!\!\!\perp X$, then for positive integer $r$, 
    \[ 
        \mathbf{\widehat{A}} = (\mathbf{FK}_X)_r\mathbf{K}_X^\dagger \quad \text{and} \quad 
        \widehat{\mathbf{b}}=(\mathbf{F} - \widehat{\mathbf{A}})\bm{\mu}_X
    \]
    is an optimal solution to the affine linear forward end-to-end minimization problem
    \begin{equation}      
        \underset{\mathrm{rank}(\mathbf{A}) \leq r}{\min} \hspace{0.15cm}
        \E \left\| \mathbf{A} X + \mathbf{b} - (\mathbf{F} X + \mathcal{E}) \right\|_2^2,
        \label{eq:objFcnForwardAffLin}
    \end{equation}
    having minimal Frobenius norm $\| \mathbf{A} \|_\mathrm{F}$. This solution is unique if and only if either $r\geq \operatorname{rank}(\mathbf{FK}_X)$, or $1 \leq r < \operatorname{rank}(\mathbf{FK}_X)$ and $\sigma_r(\mathbf{FK}_X) > \sigma_{r+1}(\mathbf{FK}_X)$. 
\end{theorem}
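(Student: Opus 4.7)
The plan is to reduce the affine linear problem to the purely linear setting of Theorem~\ref{th:e2eFwd} through a centering argument. First I would introduce the centered variable $\widetilde{X} = X - \bm{\mu}_X$, which is mean-zero with second moment equal to the covariance $\mathbf{S}_X = \mathbf{K}_X \mathbf{K}_X^\top$. Substituting $X = \widetilde{X} + \bm{\mu}_X$ into the residual and collecting the deterministic portion into an effective bias $\mathbf{c} = \mathbf{b} + (\mathbf{A} - \mathbf{F})\bm{\mu}_X$ yields
\[
\mathbf{A} X + \mathbf{b} - (\mathbf{F} X + \mathcal{E}) = (\mathbf{A} - \mathbf{F})\widetilde{X} + \mathbf{c} - \mathcal{E}.
\]

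Next I would expand $\E\|\cdot\|_2^2$ and exploit $\E[\widetilde{X}]=\mathbf{0}$, $\E[\mathcal{E}]=\mathbf{0}$, and $\mathcal{E} \perp\!\!\!\perp X$ to show that all cross terms vanish. This produces the decoupled decomposition
\[
\E\left\| \mathbf{A} X + \mathbf{b} - (\mathbf{F} X + \mathcal{E}) \right\|_2^2 = \E\left\|(\mathbf{A}-\mathbf{F})\widetilde{X}\right\|_2^2 + \|\mathbf{c}\|_2^2 + \operatorname{tr}(\mathbf{\Gamma}_\mathcal{E}).
\]
The trace term is a constant, and for any fixed $\mathbf{A}$ the middle term is minimized by $\mathbf{c}=\mathbf{0}$, which unpacks to $\widehat{\mathbf{b}} = (\mathbf{F} - \widehat{\mathbf{A}})\bm{\mu}_X$. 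Hence the optimization over $\mathbf{b}$ decouples entirely from the optimization over $\mathbf{A}$, and the choice of optimal bias is forced for any candidate $\widehat{\mathbf{A}}$.

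Finally, the reduced problem $\min_{\operatorname{rank}(\mathbf{A}) \leq r} \E\|(\mathbf{A}-\mathbf{F})\widetilde{X}\|_2^2$ is an instance of the forward end-to-end problem of Theorem~\ref{th:e2eFwd}, applied to the centered variable $\widetilde{X}$ (with second-moment factor $\mathbf{K}_X$) and trivial noise. Invoking that theorem directly yields $\widehat{\mathbf{A}} = (\mathbf{F}\mathbf{K}_X)_r \mathbf{K}_X^\dagger$ as the minimal-Frobenius-norm solution, along with the stated uniqueness criterion (either $r \geq \operatorname{rank}(\mathbf{F}\mathbf{K}_X)$, or $1 \leq r < \operatorname{rank}(\mathbf{F}\mathbf{K}_X)$ with strict singular value gap $\sigma_r(\mathbf{F}\mathbf{K}_X) > \sigma_{r+1}(\mathbf{F}\mathbf{K}_X)$). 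The main subtlety is the bookkeeping required to verify the cross-term cancellations, specifically that $\E[\mathbf{c}^\top(\mathbf{A}-\mathbf{F})\widetilde{X}] = \mathbf{0}$, $\E[\mathbf{c}^\top\mathcal{E}] = \mathbf{0}$, and $\E[\widetilde{X}^\top(\mathbf{A}-\mathbf{F})^\top\mathcal{E}] = \mathbf{0}$, all of which follow from zero mean and independence. Once this decoupling is established, the remainder of the argument is essentially a direct appeal to the previously derived linear case.
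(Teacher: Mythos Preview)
Your proposal is correct and follows essentially the same centering-and-decoupling strategy as the paper: both center $X$, split off the deterministic bias term, observe that cross terms vanish, and reduce to Theorem~\ref{th:e2eFwd} applied to the centered variable with second-moment factor $\mathbf{K}_X$. The only cosmetic difference is that you pull the noise contribution out as the constant $\operatorname{tr}(\mathbf{\Gamma}_\mathcal{E})$ and invoke Theorem~\ref{th:e2eFwd} in its noiseless form, whereas the paper leaves $\mathcal{E}$ inside the centered target $\bar{Y}=\mathbf{F}\bar{X}+\mathcal{E}$ and invokes the theorem with noise present; since the noise only shifts the objective by a constant in either case, the two reductions are equivalent.
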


\begin{proof}\label{prf:e2eForAffLin}
    Let $Y = \mathbf{F} X + \mathcal{E}$, and recall \Cref{eq:EYY^T,eq:EYX^T,eq:EXY^T} from \Cref{th:e2eFwd}. Also note that 
    \[
         \E[Y] = \E[\mathbf{F}X + \mathcal{E}] = \mathbf{F} \E[X] + \E[\mathcal{E}] = \mathbf{F} \boldsymbol{\mu}_X = \boldsymbol{\mu}_Y.
    \]
    Note that $\mathbf{F} \boldsymbol{\mu}_X=\boldsymbol{\mu}_Y$. Now the objective function in \Cref{eq:objFcnForwardAffLin} can be expressed as 
    \begin{align}
        \E \ \left\| \mathbf{A} X + \mathbf{b} - Y \right \|_2^2 
        &= \E \left\| \mathbf{A} \left(X - \bm{\mu}_X\right) - \left(Y - \bm{\mu}_Y\right) + \left(\mathbf{b} - \left(\bm{\mu}_Y - \mathbf{A} \bm{\mu}_X \right) \right) \right\|_2^2\notag \\
        &= \E \left\| \mathbf{A} \left(X - \bm{\mu}_X\right) - \left(Y - \bm{\mu}_Y\right) \right\|_2^2 + \left\| \mathbf{b} - \left(\bm{\mu}_Y - \mathbf{A} \bm{\mu}_X \right) \right\|_2^2 \label{eq:transformedObjAffLinForE2E}
    \end{align}
    since the cross terms vanish under the expectation operator, and there is no expectation on the second component since it is deterministic. The second component is minimized by setting
    \begin{equation}\label{eq:optbAffLinForE2E}
        \widehat{\mathbf{b}} = (\mathbf{F} - \mathbf{\widehat{A}})\bm{\mu}_X.
    \end{equation}
    Now substitute $\bar{X} = X - \bm{\mu}_X$ and $\bar{Y} = Y - \bm{\mu}_Y$, which transforms the first component of \Cref{eq:transformedObjAffLinForE2E} to
    \[
        \E \left\| \mathbf{A} \left(X - \bm{\mu}_X\right) - \left(Y - \bm{\mu}_Y\right) \right\|_2^2 = \E \left\| \mathbf{A} \bar{X} - \bar{Y} \right\|_2^2,
    \]
    which is exactly in the form of \Cref{th:e2eFwd} and admits the minimizer
    \[
        \widehat{\mathbf{A}} = \left( \mathbf{F} \mathbf{K}_X\right)_r \mathbf{K}_X^\dagger 
        \quad \text{and} \quad 
        \widehat{\mathbf{b}} = (\mathbf{F} - \widehat{\mathbf{A}})\bm{\mu}_X.
    \]
    One may follow a similar process in the proof of \Cref{th:e2eFwd} to determine the affine linear version of the special cases for the forward end-to-end problem outlined in \Cref{tab:casesforward}.
\end{proof}

\paragraph{Autoencoding, a Special Case.} 
We may consider autoencoding a special case in of the forward end-to-end problem in which our forward operator is the identity and the data $X$ are assumed to be directly accessible and uncorrupted by noise. This motivates the study of self-reconstruction, where the goal is to approximate the identity mapping using a linear operator $\mathbf{A}$ under a prescribed rank constraint. Specifically, we seek the best low-rank linear transformation $\mathbf{A}$ such that $\mathbf{A}X \approx X$ in expectation. This formulation captures applications like dimensionality reduction and data compression, where the goal is to preserve the most informative structures within a reduced representation. Our result here relates directly to PCA, whose derivation can be found in many works, including, e.g., \textcite{baldi1989neural, bourlardAutoassociationMultilayerPerceptrons1988}; however, we arrive to our conclusion via Bayes risk minimization from a ML perspective, as that found in \cite{chung2025good, hart2025paired, chung2024paired}.

\begin{remark}[Autoencoding]\label{rem:autoencoding}
    In the autoencoding problem, we assume that the forward operator $\mathbf{F} = \mathbf{I}_n$ and that our data is not corrupted by any noise. We do not require the random variable $X$ to have finite first moment. Our optimization problem becomes
    \[
        \underset{\mathrm{rank}\left(\mathbf{A}\right) \leq r}{\min} \; \E \left\| \mathbf{A}X - X \right\|_2^2,
    \]
    and we may simplify the general result of \Cref{th:e2eFwd} to yield
    \begin{align*}
        \widehat{\mathbf{A}} = (\mathbf{FL}_X)_r\mathbf{L}_X^\dagger &= (\mathbf{L}_X)_r\mathbf{L}_X^\dagger 
        = \mathbf{U}_{\mathbf{L}_X,r} \mathbf{U}_{\mathbf{L}_X,r}^\top.
    \end{align*}
    This solution is unique if and only if  $r \geq \operatorname{rank}(\mathbf{L}_X)$, or $1 \leq r < \operatorname{rank}(\mathbf{L}_X)$ and $\sigma_r\left(\mathbf{L}_X\right) > \sigma_{r+1}\left(\mathbf{L}_X\right)$. We also note two interesting properties of the optimal linear autoencoder, as well as its affine linear counterpart:
    \begin{itemize}
        \item \textbf{Identity Recovery.} In the case where $r\geq \operatorname{rank}(\mathbf{L}_X)$ and $\mathbf{L}_X$ is full-rank, i.e., $\operatorname{rank}(\mathbf{L}_X) = n$, the optimizer is, as expected, the identity since
        \[
            \mathbf{\widehat{A}} = \mathbf{U}_{\mathbf{L}_X} \mathbf{U}_{\mathbf{L}_X}^\top = \mathbf{I}_n.
        \]

        \item \textbf{Non-Unique Factorization.} While the optimal autoencoding mapping $\widehat{\mathbf{A}} = \mathbf{U}_{\mathbf{L}_X,r} \mathbf{U}_{\mathbf{L}_X,r}^\top$ is unique (under the conditions stated above), its decomposition into encoder and decoder matrices is not. That is, for any invertible matrix $\mathbf{Q} \in \mathbb{R}^{r \times r}$, we may write
        \[
            \widehat{\mathbf{A}} = \left(\mathbf{U}_{\mathbf{L}_X,r} \mathbf{Q}\right)\left(\mathbf{Q}^{-1} \mathbf{U}_{\mathbf{L}_X,r}^\top\right),
        \]
        showing that the encoder-decoder pair is identifiable only up to an invertible transformation in the latent space.

        \item \textbf{Affine Linear Autoencoder.} In the affine linear case, our optimization problem becomes
        \[
            \underset{\substack{\mathrm{rank}(\mathbf{A}) \leq r \\ \mathbf{b} \in \mathbb{R}^n}}{\min} \ \E \ \| \mathbf{A} X + \mathbf{b} - X \|_2^2,
        \]
        in which the optimal noiseless autoencoder mapping follows naturally from \Cref{th:e2eFwdAffLin} and with a similar simplification process as before to yield 
        \[
            \widehat{\mathbf{A}} = 
            \mathbf{U}_{\mathbf{K}_X,r}\mathbf{U}_{\mathbf{K}_X,r}^\top
            \quad \text{and} \quad
            \widehat{\mathbf{b}} = (\mathbf{I}_n - \widehat{\mathbf{A}} ) \bm{\mu}_X.
        \] 
        This solution is unique if and only if  $r \geq \operatorname{rank}(\mathbf{K}_X)$, or $1 \leq r < \operatorname{rank}(\mathbf{K}_X)$ and $\sigma_r\left(\mathbf{K}_X\right) > \sigma_{r+1}\left(\mathbf{K}_X\right)$.
    \end{itemize}
\end{remark}

\subsection{Inverse End-To-End Problem}
\paragraph{General Problem Formulation.}
In inverse problems, the objective is to reconstruct the underlying signal $X$ from a noisy forward observation $Y = \mathbf{F}X + \mathcal{E}$, where $\mathbf{F}$ is a linear operator and $\mathcal{E}$ is unbiased noise independent of $X$. This recovery task motivates learning a rank-constrained operator $\mathbf{A}$ such that $\mathbf{A}Y \approx X$. 

\begin{theorem}[Inverse End-to-End Problem]\label{th:e2eInv}
    Let $\mathbf{F} \in \mathbb{R}^{m \times n}$ be a linear forward operator,  $X$ be a random variable with finite first moment and with second moment $\mathbf{\Gamma}_{X} \in \mathbb{R}^{n \times n}$, and let $\mathcal{E}$ be unbiased random noise with second moment $\mathbf{\Gamma}_{\mathcal{E}} \in \mathbb{R}^{m \times m}$. Define a new random variable $Y$ such that $Y = \mathbf{F} X + \mathcal{E}$ with second moment $\mathbf{\Gamma}_Y \in \mathbb{R}^{m \times m}$ and symmetric factorization $\mathbf{\Gamma}_{Y} = \mathbf{L}_{Y} \mathbf{L}_{Y}^\top$. Assuming $\mathcal{E}  \perp\!\!\!\perp X$, then for positive integer $r$, 
    \begin{equation}
        \widehat{\mathbf{A}} = \left(  \mathbf{\Gamma}_X \mathbf{F}^\top \mathbf{L}_Y^{\dagger \, \top} \right)_r \mathbf{L}_Y^{\dagger}\label{eq:solInverse}
    \end{equation}
    is an optimal solution to the inverse end-to-end minimization problem
    \begin{equation}\label{eq:inverse}
        \underset{\operatorname{rank}\left( \mathbf{A} \right) \leq r}{\min} \hspace{0.15cm}
        \mathbb{E} \left\| \mathbf{A} \left( \mathbf{F} X + \mathcal{E} \right) - X \right\|_2^2
        = \hspace{0.15cm}
        \mathbb{E} \left\| \mathbf{A} Y - X \right\|_2^2,
    \end{equation}
    having minimal Frobenius norm $\left\| \mathbf{A} \right\|_\mathrm{F}$. This minimizer is unique if and only if either $r\geq \operatorname{rank}(\mathbf{\Gamma}_X \mathbf{F}^\top \mathbf{L}_Y^{\dagger \, \top})$, or $1 \le r < \operatorname{rank}(\mathbf{\Gamma}_X \mathbf{F}^\top \mathbf{L}_Y^{\dagger \, \top})$ and $\sigma_r(\mathbf{\Gamma}_X \mathbf{F}^\top \mathbf{L}_Y^{\dagger \, \top}) > \sigma_{r+1}(\mathbf{\Gamma}_X \mathbf{F}^\top \mathbf{L}_Y^{\dagger \, \top})$. 
\end{theorem}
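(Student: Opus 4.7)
My plan is to mirror the structure used for the forward end-to-end result (Theorem~\ref{th:e2eFwd}): expand the squared $\ell_2$-loss into traces of second-moment matrices, complete the square to isolate a rank-constrained Frobenius approximation problem of the form $\min_{\operatorname{rank}(\mathbf{A})\le r}\|\mathbf{A}\mathbf{L}_Y-\mathbf{M}\|_{\mathrm F}^2$, and then invoke the generalized rank-constrained matrix approximation theorem of \textcite{friedland2007generalized} (\Cref{th:FriedlandAndTorokhti}) with the substitutions $\mathbf{B}=\mathbf{I}_n$, $\mathbf{C}=\mathbf{L}_Y$. The target matrix $\mathbf{M}$ will be chosen so that the cross term in the expansion matches exactly; a short calculation suggests $\mathbf{M}=\mathbf{\Gamma}_X\mathbf{F}^\top\mathbf{L}_Y^{\dagger\,\top}$, which is precisely the expression appearing in the stated formula \eqref{eq:solInverse}.

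\textbf{Key steps.} First, I would compute the three relevant second moments, noting that $\E[YY^\top]=\mathbf{\Gamma}_Y=\mathbf{L}_Y\mathbf{L}_Y^\top$, $\E[XY^\top]=\mathbf{\Gamma}_X\mathbf{F}^\top$, and $\E[XX^\top]=\mathbf{\Gamma}_X$, using independence and unbiasedness of $\mathcal{E}$ as in \Cref{eq:EYY^T,eq:EYX^T,eq:EXY^T}. Expanding $\E\|\mathbf{A}Y-X\|_2^2$ via the trace--Frobenius identity gives
\[
\E\|\mathbf{A}Y-X\|_2^2=\|\mathbf{A}\mathbf{L}_Y\|_{\mathrm F}^2-2\tr(\mathbf{A}\mathbf{F}\mathbf{\Gamma}_X)+\tr(\mathbf{\Gamma}_X).
\]
Second, I would complete the square with $\mathbf{M}=\mathbf{\Gamma}_X\mathbf{F}^\top\mathbf{L}_Y^{\dagger\,\top}$, which requires verifying that $\mathbf{L}_Y\mathbf{L}_Y^\dagger\mathbf{F}\mathbf{\Gamma}_X=\mathbf{F}\mathbf{\Gamma}_X$, i.e., that $\operatorname{col}(\mathbf{F}\mathbf{\Gamma}_X)\subseteq\operatorname{col}(\mathbf{L}_Y)$. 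Third, once the objective is rewritten as $\|\mathbf{A}\mathbf{L}_Y-\mathbf{M}\|_{\mathrm F}^2$ plus a constant independent of $\mathbf{A}$, I would apply \Cref{th:FriedlandAndTorokhti} to read off the optimizer as $\widehat{\mathbf{A}}=(\mathbf{M}\mathbf{P}_{\mathbf{L}_Y}^{\mathrm R})_r\mathbf{L}_Y^\dagger$, and finally simplify $\mathbf{M}\mathbf{P}_{\mathbf{L}_Y}^{\mathrm R}$ back to $\mathbf{M}$ via the SVD identity $\mathbf{L}_Y^{\dagger\,\top}\mathbf{L}_Y^\dagger\mathbf{L}_Y=\mathbf{L}_Y^{\dagger\,\top}$, yielding \eqref{eq:solInverse}. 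The uniqueness characterization then transfers directly from \Cref{th:FriedlandAndTorokhti}.

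\textbf{Main obstacle.} The delicate step will be the column-space inclusion $\operatorname{col}(\mathbf{F}\mathbf{\Gamma}_X)\subseteq\operatorname{col}(\mathbf{L}_Y)$, since without it the ``completing the square'' identity would only hold up to an $\mathbf{A}$-dependent discrepancy and the reduction to \Cref{th:FriedlandAndTorokhti} would fail. I expect to handle this by factoring $\mathbf{\Gamma}_X=\mathbf{L}_X\mathbf{L}_X^\top$, observing that $\operatorname{col}(\mathbf{F}\mathbf{\Gamma}_X)=\operatorname{col}(\mathbf{F}\mathbf{L}_X)=\operatorname{col}(\mathbf{F}\mathbf{L}_X\mathbf{L}_X^\top\mathbf{F}^\top)=\operatorname{col}(\mathbf{F}\mathbf{\Gamma}_X\mathbf{F}^\top)$ from the standard identity $\operatorname{col}(\mathbf{B})=\operatorname{col}(\mathbf{B}\mathbf{B}^\top)$, and then using positive semidefiniteness of $\mathbf{F}\mathbf{\Gamma}_X\mathbf{F}^\top$ and $\mathbf{\Gamma}_\mathcal{E}$ to conclude $\operatorname{col}(\mathbf{F}\mathbf{\Gamma}_X\mathbf{F}^\top)\subseteq\operatorname{col}(\mathbf{F}\mathbf{\Gamma}_X\mathbf{F}^\top+\mathbf{\Gamma}_\mathcal{E})=\operatorname{col}(\mathbf{\Gamma}_Y)=\operatorname{col}(\mathbf{L}_Y)$. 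A secondary, but purely mechanical, subtlety is the SVD bookkeeping needed to collapse the projection $\mathbf{P}_{\mathbf{L}_Y}^{\mathrm R}$ inside the rank-$r$ truncation, which I would dispatch using \Cref{property:r_TSVD} and \Cref{def:projection}.
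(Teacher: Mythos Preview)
Your plan matches the paper's proof almost exactly: expand into traces, complete the square with $\mathbf{C}=\mathbf{\Gamma}_X\mathbf{F}^\top\mathbf{L}_Y^{\dagger\,\top}$, reduce to $\min_{\operatorname{rank}(\mathbf{A})\le r}\|\mathbf{A}\mathbf{L}_Y-\mathbf{C}\|_{\mathrm F}^2$, and invoke \Cref{th:FriedlandAndTorokhti}. The only notable difference is in how the two of you dispatch the delicate step. The paper does not verify the column-space inclusion $\operatorname{col}(\mathbf{F}\mathbf{\Gamma}_X)\subseteq\operatorname{col}(\mathbf{L}_Y)$ directly; instead it \emph{fixes} the particular factor $\mathbf{L}_Y=\mathbf{Q}_{Y,p}\mathbf{\Lambda}_{Y,p}^{1/2}\in\mathbb{R}^{m\times p}$ coming from the rank-$p$ truncated eigendecomposition of $\mathbf{\Gamma}_Y$, so that $\mathbf{L}_Y$ has full column rank and $\mathbf{P}_{\mathbf{L}_Y}^{\mathrm R}=\mathbf{L}_Y^\dagger\mathbf{L}_Y=\mathbf{I}_p$ trivially. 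Your PSD argument (using $\operatorname{col}(\mathbf{F}\mathbf{\Gamma}_X)=\operatorname{col}(\mathbf{F}\mathbf{\Gamma}_X\mathbf{F}^\top)\subseteq\operatorname{col}(\mathbf{F}\mathbf{\Gamma}_X\mathbf{F}^\top+\mathbf{\Gamma}_{\mathcal E})=\operatorname{col}(\mathbf{\Gamma}_Y)$) is a bit more work but has the advantage of validating the completing-the-square identity for \emph{any} symmetric factor $\mathbf{L}_Y$, which is what the theorem statement actually asserts; the paper's choice of a specific $\mathbf{L}_Y$ leaves that generality implicit. Your Penrose-condition simplification $\mathbf{L}_Y^{\dagger\,\top}\mathbf{L}_Y^\dagger\mathbf{L}_Y=\mathbf{L}_Y^{\dagger\,\top}$ for collapsing $\mathbf{M}\mathbf{P}_{\mathbf{L}_Y}^{\mathrm R}$ is correct and likewise factor-independent.
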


\begin{proof}
    Let $Y = \mathbf{F} X + \mathcal{E}$ as stated in the theorem, and recall \Cref{eq:EYY^T,eq:EYX^T,eq:EXY^T} in the proof of \Cref{th:e2eFwd}. The optimization problem in \Cref{eq:inverse} can be restated as 
    \[
    \begin{aligned}
        \underset{\mathrm{rank}(\mathbf{A}) \leq r}{\min} \hspace{0.15cm}
        \E \left\| \mathbf{A} Y - X \right\|_2^2
        &= \E \left[ \tr\left( \left( \mathbf{A} Y - X \right) \left( \mathbf{A} Y - X \right)^\top \right) \right] \\
        &= \tr\left( \mathbf{A} \E \left[ Y Y^\top \right] \mathbf{A}^\top \right) - \tr\left( \mathbf{A} \E \left[ Y X^\top \right] \right) - \tr\left( \E \left[ X Y^\top \right] \mathbf{A}^\top \right) + \tr\left( \E \left[ X X^\top \right] \right) \\
        &= \tr\left( \mathbf{\Gamma}_Y \mathbf{A}^\top \mathbf{A} \right) - 2 \tr\left( \mathbf{A} \mathbf{F} \mathbf{\Gamma}_X \right) + \tr\left( \mathbf{\Gamma}_X \right).
    \end{aligned}
    \]
    By using the identity between the trace and the squared Frobenius norm, the above expression becomes
    \[
        \E \left\| \mathbf{A} Y - X \right\|_2^2 
        = \left\| \mathbf{A} \mathbf{L}_Y \right\|_\mathrm{F}^2 - 2 \tr\left( \mathbf{A} \mathbf{F} \mathbf{\Gamma}_X \right) + \left\| \mathbf{L}_X \right\|_\mathrm{F}^2,
    \]
    where $\mathbf{L}_X$ is the symmetric factor of $\mathbf{\Gamma}_X$. Further, by completing the square, we can verify that
    \begin{equation}\label{eq:completedSquares}
      \E \left\| \mathbf{A} Y - X \right\|_2^2 
        = \left\| \mathbf{A} \mathbf{L}_Y \right\|_\mathrm{F}^2 - 2 \tr\left( \mathbf{A} \mathbf{F} \mathbf{\Gamma}_X \right) = \left\| \mathbf{A} \mathbf{L}_Y - \mathbf{C} \right\|_\mathrm{F}^2 - \left\| \mathbf{C} \right\|_\mathrm{F}^2 + \left\| \mathbf{L}_X \right\|_\mathrm{F}^2, 
    \end{equation}
    where $\mathbf{C} = \mathbf{\Gamma}_X \mathbf{F}^\top \mathbf{L}_Y^{\dagger \, \top}$. 
    
    To justify these substitutions for completing the square, recall from \Cref{eq:EYY^T} that $\mathbf{\Gamma}_Y = \E[YY^\top] = \mathbf{F} \mathbf{\Gamma}_X \mathbf{F}^\top + \mathbf{\Gamma}_{\mathcal{E}}$. We have that $\mathbf{F} \mathbf{\Gamma}_X \mathbf{F}^\top$ is symmetric positive semidefinite (SPSD) because it is of the form $\mathbf{GWG}^\top$ with $\mathbf{W} = \mathbf{\Gamma}_X \succeq 0$ being a second moment matrix of a real-valued random variable. $\mathbf{\Gamma}_\mathcal{E}$ is also SPSD by definition, and thus $\mathbf{\Gamma}_Y$, the sum of two SPSD matrices, is also SPSD.
    Now let $\operatorname{rank}(\mathbf{\Gamma}_Y)=p\leq m$, and consider the rank-$p$ truncated eigendecomposition 
    \[
        \mathbf{\Gamma}_Y = \mathbf{Q}_{Y,p} \, \mathbf{\Lambda}_{Y,p} \, \mathbf{Q}_{Y,p}^\top 
        = \left( \mathbf{Q}_{Y,p} \, \mathbf{\Lambda}_{Y,p}^{1/2} \right) \left( \mathbf{Q}_{Y,p} \, \mathbf{\Lambda}_{Y,p}^{1/2} \right)^\top 
        = \mathbf{L}_Y \mathbf{L}_Y^\top,
    \]
    which guarantees that $\mathbf{L}_{Y} \in \mathbb{R}^{m\times p}$ has full column rank. Thus according to \Cref{def:pseudoinverse},  $\mathbf{L}_Y^\dagger\mathbf{L}_{Y} = \mathbf{I}_p$. 
    
    With this, let $\operatorname{rank}(\mathbf{C})=q$ and note that  $\mathbf{C}_q = \mathbf{C}$, as defined in \Cref{def:rTSVD}. The last two terms in \Cref{eq:completedSquares} remain constant with respect to $\mathbf{A}$ yielding the optimization problem
    \begin{equation}\label{eq:invE2E_obj}
        \underset{\mathrm{rank}(\mathbf{A}) \leq r}{\min} 
        \ \left\| \mathbf{A} \mathbf{L}_Y - \mathbf{C} \right\|_\mathrm{F}^2.
    \end{equation}
    Utilizing \Cref{th:FriedlandAndTorokhti} and the projection matrices $\mathbf{P}^\text{R}$ and $\mathbf{P}^\text{L}$ defined in \Cref{def:projection}, we see that the rank-constrained minimization problem in \Cref{eq:invE2E_obj} has the optimal solution
    \begin{align*}
        \widehat{\mathbf{A}} 
        &= \left( \mathbf{C} \mathbf{P}_{\mathbf{C}}^\text{R} \right)_r \mathbf{L}_Y^{\dagger} 
        = \left( \mathbf{C}_{q} \mathbf{P}_{\mathbf{C}}^\text{R} \right)_r \mathbf{L}_Y^{\dagger} 
        = \left( \mathbf{U}_{\mathbf{C}, q}  \bm{\Sigma}_{\mathbf{C}, q}  \mathbf{V}_{\mathbf{C}, q}^\top \mathbf{V}_{\mathbf{C}, q} \mathbf{V}_{\mathbf{C}, q}^\top \right)_r \mathbf{L}_Y^{\dagger} \\
        &= \left( \mathbf{U}_{\mathbf{C}, q}  \bm{\Sigma}_{\mathbf{C},q} \mathbf{V}_{\mathbf{C}, q}^\top \right)_r \mathbf{L}_Y^{\dagger}
        = \mathbf{C}_r\mathbf{L}_Y^{\dagger} 
        = \left(  \mathbf{\Gamma}_X \mathbf{F}^\top \mathbf{L}_Y^{\dagger \, \top} \right)_r \mathbf{L}_Y^{\dagger}.
    \end{align*}

    Consider the special case when $r\geq q$, then $\mathbf{C}_r=\mathbf{C}$ and the unique minimizer simplifies to 
    \[
    \widehat{\mathbf{A}} = \mathbf{C} \mathbf{L}_Y^{\dagger}
    = \mathbf{\Gamma}_X \mathbf{F}^\top \mathbf{L}_Y^{\dagger \, \top} \mathbf{L}_Y^{\dagger} 
    = \mathbf{\Gamma}_X \mathbf{F}^\top \left( \mathbf{L}_Y \mathbf{L}_Y^\top \right)^\dagger
    = \mathbf{\Gamma}_X \mathbf{F}^\top \mathbf{\Gamma}_Y^\dagger.
    \]
\end{proof}

The result of \Cref{th:e2eInv} holds in all the possible ill-conditioned cases, such as when the forward map $\mathbf{F}$, data matrix $\mathbf{L}_X$, or noise matrix $\mathbf{\Gamma}_\mathcal{E}$ are rank-deficient. In practical settings, we note that the noise covariance (or, equivalently, second moment in the case our noise is unbiased) $\mathbf{\Gamma}_{\mathcal{E}}$ is typically assumed to be symmetric positive definite (SPD) to reflect full-dimensional, non-degenerate variability and to guarantee well-posedness of estimation and learning procedures \cite{kay1993fundamentals,hastie2009elements,williams2006gaussian}. 
In this special case, $\mathbf{\Gamma}_{Y}$ is also SPD and thus one can use a more efficient decomposition to compute $\mathbf{L}_Y$, such as Cholesky. However, even in the absence of this assumption our solution is stable and unique.

We now present the affine linear version of the purely linear inverse end-to-end optimizer. Again, the bias vector offsets the usage of centered covariance matrices, though we note a connection to prior literature in the full-rank case.

\begin{theorem}[Affine Linear Inverse End-to-End Problem]\label{th:e2eInvAffLin}
    Let $\mathbf{F} \in \mathbb{R}^{m \times n}$ be a linear forward operator, $X$ a random variable with mean $\mathbb{E}[X] = \bm{\mu}_X \in \mathbb{R}^n$ and covariance $\mathbf{S}_X \in \mathbb{R}^{n \times n}$, and let $\mathcal{E}$ be unbiased random noise with covariance $\mathbf{S}_{\mathcal{E}} \in \mathbb{R}^{m \times m}$. Define a new random variable $Y$ such that $Y = \mathbf{F} X + \mathcal{E}$ with covariance $\mathbf{S}_Y \in \mathbb{R}^{m \times m}$ and symmetric factorization $\mathbf{S}_{Y} = \mathbf{K}_{Y} \mathbf{K}_{Y}^\top$. Assuming $\mathcal{E} \perp\!\!\!\perp X$, then for positive integer $r$,
    \[
        \widehat{\mathbf{A}} = \left( \mathbf{S}_X \mathbf{F}^\top \mathbf{K}_Y^{\dagger \, \top} \right)_r \mathbf{K}_Y^\dagger
        \quad \text{and} \quad
        \widehat{\mathbf{b}} = (\mathbf{I}_n - \widehat{\mathbf{A}} \mathbf{F}) \bm{\mu}_X
    \]
    is an optimal solution to the affine linear inverse end-to-end minimization problem
    \begin{equation}
        \underset{\operatorname{rank}(\mathbf{A}) \leq r}{\min} \
        \mathbb{E} \left\| \mathbf{A} Y + \mathbf{b} - X \right\|_2^2,
        \label{eq:objFcnInvAffLin}
    \end{equation}
    having minimal Frobenius norm $\| \mathbf{A} \|_\mathrm{F}$. This solution is unique if and only if either $r \geq \operatorname{rank}(\mathbf{S}_X \mathbf{F}^\top \mathbf{K}_Y^{\dagger \, \top})$, or $1 \leq r < \operatorname{rank}(\mathbf{S}_X \mathbf{F}^\top \mathbf{K}_Y^{\dagger \, \top})$ and $\sigma_r\left( \mathbf{S}_X \mathbf{F}^\top \mathbf{K}_Y^{\dagger \, \top} \right) > \sigma_{r+1} \left( \mathbf{S}_X \mathbf{F}^\top \mathbf{K}_Y^{\dagger\, \top} \right) $.
\end{theorem}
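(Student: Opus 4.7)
The plan is to reduce the affine linear inverse problem to the purely linear inverse problem of \Cref{th:e2eInv} by centering the random variables, in direct analogy with the proof of \Cref{th:e2eFwdAffLin}. First I would observe that because $\mathcal{E}$ is unbiased and $\mathbb{E}[X]=\bm{\mu}_X$, we have $\mathbb{E}[Y]=\mathbf{F}\bm{\mu}_X=:\bm{\mu}_Y$. Introducing the centered variables $\bar{X}=X-\bm{\mu}_X$ and $\bar{Y}=Y-\bm{\mu}_Y$, it follows that $\bar{Y}=\mathbf{F}\bar{X}+\mathcal{E}$, with $\mathbb{E}[\bar{X}\bar{X}^\top]=\mathbf{S}_X$, $\mathbb{E}[\bar{Y}\bar{Y}^\top]=\mathbf{S}_Y$, and $\mathcal{E}\perp\!\!\!\perp \bar{X}$.

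Next I would rewrite the objective in \Cref{eq:objFcnInvAffLin} by adding and subtracting $\mathbf{A}\bm{\mu}_Y$ and $\bm{\mu}_X$:
\[
\mathbf{A}Y+\mathbf{b}-X=\mathbf{A}\bar{Y}-\bar{X}+\bigl(\mathbf{b}-(\bm{\mu}_X-\mathbf{A}\bm{\mu}_Y)\bigr).
\]
Taking expectations and expanding the squared norm, the cross terms vanish because $\mathbb{E}[\mathbf{A}\bar{Y}-\bar{X}]=\mathbf{0}$ and the bias correction is deterministic. Thus the objective splits cleanly as
\[
\mathbb{E}\left\|\mathbf{A}Y+\mathbf{b}-X\right\|_2^2
=\mathbb{E}\left\|\mathbf{A}\bar{Y}-\bar{X}\right\|_2^2
+\left\|\mathbf{b}-(\bm{\mu}_X-\mathbf{A}\bm{\mu}_Y)\right\|_2^2,
\]
and since $\mathbf{b}$ appears only in the second term and $\mathbf{A}$ only in the first, the two minimizations decouple.

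Then I would minimize each piece. The deterministic term is driven to zero by $\widehat{\mathbf{b}}=\bm{\mu}_X-\widehat{\mathbf{A}}\bm{\mu}_Y=(\mathbf{I}_n-\widehat{\mathbf{A}}\mathbf{F})\bm{\mu}_X$, using $\bm{\mu}_Y=\mathbf{F}\bm{\mu}_X$. The first term is exactly the purely linear inverse end-to-end objective from \Cref{th:e2eInv} applied to $(\bar{X},\bar{Y})$ with second moment matrices replaced by the covariances $\mathbf{S}_X$ and $\mathbf{S}_Y$ and symmetric factor $\mathbf{K}_Y$ in place of $\mathbf{L}_Y$. Invoking that theorem yields
\[
\widehat{\mathbf{A}}=\bigl(\mathbf{S}_X\mathbf{F}^\top\mathbf{K}_Y^{\dagger\,\top}\bigr)_r\mathbf{K}_Y^{\dagger},
\]
together with the claimed uniqueness conditions, which transfer verbatim from \Cref{th:e2eInv}.

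The only subtlety, and the place I would be most careful, is justifying the clean split of the objective: one must verify that the cross term $\mathbb{E}[(\mathbf{A}\bar{Y}-\bar{X})^\top(\mathbf{b}-(\bm{\mu}_X-\mathbf{A}\bm{\mu}_Y))]$ indeed vanishes, which follows because the deterministic factor can be pulled outside the expectation and $\mathbb{E}[\mathbf{A}\bar{Y}-\bar{X}]=\mathbf{0}$. Beyond this, the argument is a direct transcription of the proof of \Cref{th:e2eFwdAffLin} with the roles of $X$ and $Y$ swapped and \Cref{th:e2eInv} substituted for \Cref{th:e2eFwd}; consequently no new calculations beyond those already carried out in the previous proofs are required.
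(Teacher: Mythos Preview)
Your proposal is correct and follows essentially the same approach as the paper: center $X$ and $Y$, split the objective into the centered linear inverse term plus a deterministic bias term, set $\widehat{\mathbf{b}}$ to kill the latter, and invoke \Cref{th:e2eInv} on the centered problem with covariances in place of second moments. Your treatment is arguably slightly more careful than the paper's in explicitly justifying why the cross term vanishes; the only minor slip is the phrase ``$\mathbf{A}$ only in the first'' (it appears in the second term as well), but your subsequent argument---that the second term can be driven to zero for any $\mathbf{A}$---handles this correctly.
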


\begin{proof}\label{prf:e2eInvAffLin}
    Let $Y = \mathbf{F} X + \mathcal{E}$ and note that 
    \[
        \mathbb{E}[Y] = \mathbf{F} \bm{\mu}_X= \bm{\mu}_Y.
    \]
    The objective function in \Cref{eq:objFcnInvAffLin} can be written as
    \begin{align}
        \mathbb{E} \left\| \mathbf{A} Y + \mathbf{b} - X \right\|_2^2 
        &= \mathbb{E} \left\| \mathbf{A} (Y - \bm{\mu}_Y) - (X - \bm{\mu}_X) + \left( \mathbf{b} - (\bm{\mu}_X - \mathbf{A} \bm{\mu}_Y) \right) \right\|_2^2 \notag \\
        &= \mathbb{E} \left\| \mathbf{A} \bar{Y} - \bar{X} \right\|_2^2 
        + \left\| \mathbf{b} - (\bm{\mu}_X - \mathbf{A} \bm{\mu}_Y) \right\|_2^2,
        \label{eq:transformedObjAffLinInvE2E}
    \end{align}
    where $\bar{X} = X - \bm{\mu}_X$ and $\bar{Y} = Y - \bm{\mu}_Y$. The second term is minimized by
    \begin{equation}\label{eq:optbAffLinInvE2E}
        \widehat{\mathbf{b}} = \bm{\mu}_X - \widehat{\mathbf{A}} \bm{\mu}_Y = (\mathbf{I}_n - \widehat{\mathbf{A}} \mathbf{F}) \bm{\mu}_X.
    \end{equation}
    This leaves us the first term
    \[
        \mathbb{E} \left\| \mathbf{A} \bar{Y} - \bar{X} \right\|_2^2,
    \]
    which is exactly the form of \Cref{th:e2eInv}, whereby we obtain the minimizer
    \[
        \widehat{\mathbf{A}} = \left( \mathbf{S}_X \mathbf{F}^\top \mathbf{K}_Y^{\dagger \, \top} \right)_r \mathbf{K}_Y^{\dagger}.
    \]
\end{proof}

\begin{remark}[Full-Rank Case]\label{rmk:e2eInvAffLin_foster}
    Assume the hypotheses of \Cref{th:e2eInvAffLin}. Let $k = \operatorname{rank}(\mathbf{S}_X)$, $p=\operatorname{rank}(\mathbf{K}_Y)$ and $\ell = \operatorname{rank}(\mathbf{F})$, so that
    \[
        \operatorname{rank}\left(\mathbf{S}_X \mathbf{F}^\top \mathbf{K}_Y^{\dagger \, \top}\right) 
        = \operatorname{rank}\left(\mathbf{S}_X \mathbf{FK}_Y\right) 
        = q \leq \min\{k, \ell,p \}.
    \]
   In the special case when $r \geq q$, there is no truncation and we obtain
    \[
        \widehat{\mathbf{A}} = \left( \mathbf{S}_X \mathbf{F}^\top \mathbf{K}_Y^{\dagger \, \top} \right)_r \mathbf{K}_Y^{\dagger} 
        = \mathbf{S}_X \mathbf{F}^\top \mathbf{K}_Y^{\dagger \, \top} \mathbf{K}_Y^{\dagger} 
        = \mathbf{S}_X \mathbf{F}^\top \mathbf{S}_Y^{\dagger}.
    \]
    This recovers the full-rank optimal affine linear inverse estimator originally derived by \textcite{foster1961application}, which naturally leads to its corresponding optimal bias vector
    \[
        \widehat{\mathbf{b}} = (\mathbf{I}_n - \widehat{\mathbf{A}} \mathbf{F}) \bm{\mu}_X.
    \]
\end{remark}

\paragraph{Data Denoising, a Special Case.}
In practical scenarios, observed signals are often corrupted by additive noise without a forward process. This leads to the denoising problem, where the goal is to recover the clean input $X$ from its noisy observation $X + \mathcal{E}$. The goal of the optimal denoising operator is to recover the original signal as accurately as possible while reducing the influence of noise, subject to a rank constraint on the mapping. We highlight this special case as an inverse end-to-end problem here.

\begin{remark}[Data Denoising]\label{rem:dataDenoising}
    In the data denoising problem, we assume that the forward operator $\mathbf{F} = \mathbf{I}_n$ and that our data are corrupted by additive noise. Our optimization problem becomes
    \[
        \underset{\mathrm{rank}(\mathbf{A}) \leq r}{\min} \ 
        \E \left\| \mathbf{A}(X + \mathcal{E}) - X \right\|_2^2,
    \]
    and we may simplify the general result of \Cref{th:e2eInv} to give us the solution
    \begin{equation}
        \widehat{\mathbf{A}} = \left( \mathbf{\Gamma}_X \mathbf{L}_Y^{\dagger \, \top} \right)_r \mathbf{L}_Y^{\dagger}.\label{eq:generalDenoising}
    \end{equation}
    This minimizer is unique if and only if either $r \geq \operatorname{rank}(\mathbf{\Gamma}_X  \mathbf{L}_Y^{\dagger \, \top})$, or $1 \leq r < \operatorname{rank}(\mathbf{\Gamma}_X \mathbf{L}_Y^{\dagger \, \top})$ and $\sigma_r(\mathbf{\Gamma}_X \mathbf{L}_Y^{\dagger \, \top}) > \sigma_{r+1}(\mathbf{\Gamma}_X \mathbf{L}_Y^{\dagger \, \top})$. 

    We briefly note properties of this solution and its link to prior results in literature.
    \begin{itemize}
        \item \textbf{Wiener Filter.} Let $\operatorname{rank}\left(\mathbf{\Gamma}_X \mathbf{L}_Y^{\dagger \, \top}\right) = q$  and $r \geq q$. Then the optimal solution to the data denoising problem is given by the Wiener filter \cite{wiener1949extrapolation}
        \[
            \widehat{\mathbf{A}} 
            = \mathbf{\Gamma}_X \left( \mathbf{\Gamma}_X + \mathbf{\Gamma}_{\mathcal{E}} \right)^{\dagger}.
        \]
        The Wiener filter (or, in this case, the Wiener smoothing matrix \cite{kay1993fundamentals}) is the optimal Bayesian linear estimator for recovering a signal corrupted by additive noise with the minimum mean squared error (MSE). It balances signal fidelity against noise suppression by optimally weighting components according to their relative second moments. 
        
        To derive this result note that since  $r \geq q$, the best rank-$r$ approximation of $\mathbf{\Gamma}_X \mathbf{L}_Y^{\dagger \, \top}$ is itself. Then we have
        \[
             \widehat{\mathbf{A}} = \left( \mathbf{\Gamma}_X \mathbf{L}_Y^{\dagger \, \top} \right)_r \mathbf{L}_Y^{\dagger} 
             = \mathbf{\Gamma}_X \mathbf{L}_Y^{\dagger \, \top} \mathbf{L}_Y^{\dagger}
             = \mathbf{\Gamma}_X \mathbf{\Gamma}_Y^{\dagger} = \mathbf{\Gamma}_X \left( \mathbf{\Gamma}_X +  \mathbf{\Gamma}_\mathcal{E} \right)^{\dagger}.
        \]

        \item \textbf{Affine Linear Data Denoising Problem.} In the affine linear case, our optimization problem becomes 
        \[
             \underset{\mathrm{rank}(\mathbf{A}) \leq r}{\min}
            \ \E \left\| \mathbf{A}(X + \mathcal{E}) + \mathbf{b} - X \right\|_2^2.
        \]
        The solution to this optimization problem follows naturally from \Cref{th:e2eInvAffLin} to yield
        \[
            \widehat{\mathbf{A}} = \left( \mathbf{S}_X \mathbf{K}_Y^{\dagger \, \top} \right)_r \mathbf{K}_Y^{\dagger} 
            \quad \text{and} \quad
            \widehat{\mathbf{b}} = (\mathbf{I}_n - \widehat{\mathbf{A}})\bm{\mu}_X.
        \]
        This solution is unique if and only if either $r \geq \operatorname{rank} \left( \mathbf{S}_X \mathbf{K}_Y^{\dagger \, \top} \right)$, or $1 \leq r < \operatorname{rank}\left( \mathbf{S}_X \mathbf{K}_Y^{\dagger \, \top} \right)$ and $\sigma_r\left( \mathbf{S}_X \mathbf{K}_Y^{\dagger \, \top} \right) > \sigma_{r+1}\left( \mathbf{S}_X \mathbf{K}_Y^{\dagger \, \top} \right)$.
        Analogous to the affine linear counterparts of all the other results, we observe that the mean weighted bias vector offsets the usage of the centered covariance matrix.          

        \item  \textbf{Noiseless Autoencoder Recovery}. When $\mathcal{E} = \mathbf{0}$ we recover exactly the optimal noiseless autoencoder map, as expected, but from the inverse perspective. That is, our optimization problem becomes 
        \[
            \underset{\mathrm{rank}\left(\mathbf{A}\right) \leq r}{\min} \; \E \left\| \mathbf{A}X - X \right\|_2^2,
        \]
        which has the solution
        \[
            \widehat{\mathbf{A}} = \mathbf{U}_{\mathbf{L}_X,r} \mathbf{U}_{\mathbf{L}_X,r}^\top.
        \]
        This solution is unique if and only if  $r \geq \operatorname{rank}\left(\mathbf{L}_X\right)$, or $1 \leq r < \ell$ and $\sigma_r\left(\mathbf{L}_X\right) > \sigma_{r+1}\left(\mathbf{L}_X\right)$, and all other simplifications and properties from \Cref{rem:autoencoding} naturally hold. 
        
        To derive this from the general solution of the data denoising problem, first note that $\mathbf{\Gamma}_Y = \mathbf{\Gamma}_X$ and thus $\mathbf{L}_Y = \mathbf{L}_X$, and $\operatorname{rank}(\mathbf{\Gamma}_X) = \operatorname{rank}(\mathbf{L}_X) = q$. Now we can simplify \Cref{eq:generalDenoising} to 
        \begin{align*}
            \widehat{\mathbf{A}} = \left( \mathbf{\Gamma}_X \mathbf{L}_Y^{\dagger \, \top} \right)_r \mathbf{L}_Y^{\dagger} 
            &= \left( \mathbf{L}_X (\mathbf{L}_X^\dagger \mathbf{L}_X)^\top \right)_r \mathbf{L}_X^{\dagger} 
            = \left( \mathbf{L}_X \mathbf{V}_{\mathbf{L}_X, q}  \mathbf{V}_{\mathbf{L}_X, q}^\top \right)_r \mathbf{L}_X^\dagger 
            = \mathbf{L}_{X, r} \mathbf{L}_X^\dagger 
            = \mathbf{U}_{\mathbf{L}_X,r} \mathbf{U}_{\mathbf{L}_X,r}^\top,
        \end{align*}
        where we invoke \Cref{def:rTSVD,def:projection} for our simplifications.
    \end{itemize}
\end{remark}



\Cref{table:compiled_theo_results} summarizes our main theoretical results from this section for the forward and inverse end-to-end problems, their special cases, and affine linear counterparts. 

\begin{table}[H]
\renewcommand{\arraystretch}{1.5}
\caption{Compiled theoretical results. \label{table:compiled_theo_results}}
\centering\small

\begin{tabular}{|
    >{\columncolor{tablecolor1}}>{\centering\arraybackslash}p{2.5cm}|
    >{\columncolor{tablecolor2}}p{4.5cm}|
    >{\columncolor{tablecolor3}}p{4.5cm}|
    >{\columncolor{tablecolor4}}>{\centering\arraybackslash}p{1.7cm}|
}
\hline
\textbf{Task} & \centering\textbf{Linear Solution} & \centering\textbf{Affine Linear Solution} & \textbf{Links} \\
\hline

\textbf{Forward} & 
$\widehat{\mathbf{A}} = (\mathbf{F} \mathbf{L}_X)_r \mathbf{L}_X^\dagger$ & 
$\begin{array}{rl}
\widehat{\mathbf{A}} &= \left( \mathbf{F} \mathbf{K}_X \right)_r \mathbf{K}_X^\dagger \\
\widehat{\mathbf{b}} &= (\mathbf{F} - \widehat{\mathbf{A}})\bm{\mu}_X
\end{array}$ &
\Cref{th:e2eFwd} \\
\hline

\textbf{Inverse} & 
$\widehat{\mathbf{A}} = \left( \mathbf{\Gamma}_X \mathbf{F}^\top \mathbf{L}_Y^{\dagger \, \top} \right)_r \mathbf{L}_Y^{\dagger}$ & 
$\begin{array}{rl}
\widehat{\mathbf{A}} &= \left( \mathbf{S}_X \mathbf{F}^\top \mathbf{K}_Y^{\dagger \, \top} \right)_r \mathbf{K}_Y^{\dagger} \\
\widehat{\mathbf{b}} &= (\mathbf{I}_n - \widehat{\mathbf{A}} \mathbf{F}) \bm{\mu}_X
\end{array}$ &
\Cref{th:e2eInv} \\
\hline

\textbf{Autoencoding} & 
$\widehat{\mathbf{A}} = \mathbf{U}_{\mathbf{L}_X,r} \mathbf{U}_{\mathbf{L}_X,r}^\top$ & 
$\begin{array}{rl}
\widehat{\mathbf{A}} &= \mathbf{U}_{\mathbf{K}_X,r} \mathbf{U}_{\mathbf{K}_X,r}^\top \\
\widehat{\mathbf{b}} &= (\mathbf{I}_n - \widehat{\mathbf{A}})\bm{\mu}_X
\end{array}$ &
\Cref{rem:autoencoding} \\
\hline

\textbf{Denoising} & 
$\widehat{\mathbf{A}} = \left( \mathbf{\Gamma}_X \mathbf{L}_Y^{\dagger \, \top} \right)_r \mathbf{L}_Y^{\dagger}$ & 
$\begin{array}{rl}
\widehat{\mathbf{A}} &= \left( \mathbf{\Gamma}_X \mathbf{L}_Y^{\dagger \, \top} \right)_r \mathbf{L}_Y^{\dagger} \\
\widehat{\mathbf{b}} &= (\mathbf{I}_n - \widehat{\mathbf{A}})\bm{\mu}_X
\end{array}$ &
\Cref{rem:dataDenoising} \\
\hline

\end{tabular}
\end{table}

\subsection{Empirical Bayes Risk Minimization}\label{sec:empBayesRisk}
While our theory provides a Bayes risk interpretation by characterizing the expected squared error under the true data-generating distribution, in practice, we approximate this risk using a finite training set. Consider a set of realizations $\mathbf{x}_1, \ldots, \mathbf{x}_J \in \mathbb{R}^n$ of a random variable $X$, and define the data matrix $\mathbf{X} = [\mathbf{x}_1, \ldots, \mathbf{x}_J] \in \mathbb{R}^{n \times J}$. Let $\mathbf{Y} = [\mathbf{y}_1, \ldots, \mathbf{y}_J] \in \mathbb{R}^{m \times J}$ denote the corresponding outputs, where each $\mathbf{y}_j \in \mathbb{R}^m$ is generated from $\mathbf{x}_j$ via a forward process, such as that described in \Cref{eq:generalProblem}. We may then obtain empirical estimates for the second moment and covariance of our data with the following formulae:
\begin{equation}\label{eq:empiricalXMats}
    \mathbf{\Gamma}_{\mathbf{X}} = \tfrac{1}{J} \mathbf{X} \mathbf{X}^\top 
    \quad \text{and} \quad
    \mathbf{S}_{\mathbf{X}} = \tfrac{1}{J-1} (\mathbf{X} - \bm{\mu}_{\mathbf{X}} \mathbf{1}_J^\top)(\mathbf{X} - \bm{\mu}_{\mathbf{X}} \mathbf{1}_J^\top)^\top,
\end{equation}
where $\boldsymbol{\mu}_{\mathbf{X}} \in \mathbb{R}^{784}$ is the mean image across all samples, or our cross variances
\[
    \mathbf{\Gamma}_{\mathbf{XY}} = \tfrac1J \mathbf{XY}^\top 
    \quad \text{and} \quad
    \mathbf{\Gamma}_{\mathbf{YX}} = \tfrac1J \mathbf{YX}^\top.
\]
Note the change in subscript as we are now working with empirical data.

Another approximation can be obtained working directly with the samples in the solution formulae, e.g., \Cref{eq:solForward,eq:solInverse}. For instance, consider the inverse problem case, in which the optimal rank-constrained mapping is given by \Cref{th:e2eInv}. We may plug in our empirical estimates for the second moments, and use the symmetric factorization $\mathbf{\Gamma}_\mathbf{Y}= \tfrac1J \mathbf{YY}^\top$ to yield
\begin{align*}
    \widehat{\mathbf{A}} = \left(  \mathbf{\Gamma}_X \mathbf{F}^\top \mathbf{L}_Y^{\dagger \ \top} \right)_r \mathbf{L}_Y^{\dagger} 
    &= \left(  \E [XY^\top] \, \mathbf{L}_Y^{\dagger \ \top} \right)_r \mathbf{L}_Y^{\dagger} \\
    &\approx 
    \left( \tfrac1J \mathbf{X}  \mathbf{Y}^\top \sqrt{J}\, \mathbf{Y}^{\dagger \, \top} \right)_r \, \sqrt{J} \, \mathbf{Y}^\dagger
    = \left(\mathbf{X}  \mathbf{Y}^\top \mathbf{Y}^{\dagger \, \top} \right)_r \mathbf{Y}^\dagger 
    = \left( \mathbf{X} \mathbf{V}_\mathbf{Y} \mathbf{V}_\mathbf{Y}^\top \right)_r \mathbf{Y}^\dagger.
\end{align*}
One may also consider sample average approximation to the expectation, leading to the optimization problem
\begin{equation}\label{eq:emp_bayes_inv}
    \underset{\mathrm{rank}(\mathbf{A}) \leq r}{\min} \; \tfrac{1}{J} \left\| \mathbf{A} \mathbf{Y} - \mathbf{X} \right\|_\mathrm{F}^2,
\end{equation}
corresponding to the MSE loss used during model training. The optimal solution to the rank-constrained minimization problem in \Cref{eq:emp_bayes_inv} is given by \Cref{th:FriedlandAndTorokhti} as 
\[
    \left( \mathbf{X} \mathbf{V}_\mathbf{Y} \mathbf{V}_\mathbf{Y}^\top \right)_r \mathbf{Y}^\dagger.
\]
In either case, we see that we arrive to the same estimator of the optimal mapping.

Likewise, for the forward case, we may plug in our empirical estimates into the theoretically optimal forward end-to-end minimizer from \Cref{th:e2eFwd} to yield
\begin{align*}
    \widehat{\mathbf{A}} 
    = \left(\mathbf{FL}_X\right)_r\mathbf{L}_X^\dagger
    &= \left(\mathbf{F} \mathbf{L}_X \mathbf{L}_X^\top  \mathbf{L}_X^{\dagger \, \top} \right)_r \mathbf{L}_X^\dagger 
    = \left(\mathbf{F} \mathbf{\Gamma}_X  \mathbf{L}_X^{\dagger \, \top} \right)_r \mathbf{L}_X^\dagger
    = \left( \E [YX^\top] \,\mathbf{L}_X^{\dagger \, \top} \right)_r \mathbf{L}_X^\dagger \\
    &\approx \left( \tfrac1J \mathbf{Y} \mathbf{X}^\top \sqrt{J} \ \mathbf{X}^{\dagger \, \top} \right)_r \sqrt{J} \ \mathbf{X}^\dagger
    = \left(\mathbf{Y} \mathbf{X}^\top \mathbf{X}^{\dagger \, \top} \right)_r \ \mathbf{X}^\dagger
    = \left( \mathbf{Y} \mathbf{V}_\mathbf{X} \mathbf{V}_\mathbf{X}^\top \right)_r \mathbf{X}^\dagger,
\end{align*}
where we use the symmetric factorization $\mathbf{\Gamma}_X= \tfrac1J \mathbf{XX}^\top$, and one can verify the substitution via the TSVD. When working directly with the data, our optimization problem becomes
\begin{equation}\label{eq:emp_bayes_for}
    \underset{\mathrm{rank}(\mathbf{A}) \leq r}{\min} \; \tfrac{1}{J} \left\| \mathbf{A} \mathbf{X} - \mathbf{Y} \right\|_\mathrm{F}^2,
\end{equation}
in which \Cref{th:FriedlandAndTorokhti} provides the optimal solution as 
\[
    \widehat{\mathbf{A}} =  \left( \mathbf{Y} \mathbf{V}_\mathbf{X} \mathbf{V}_\mathbf{X}^\top \right)_r \mathbf{X}^\dagger.
\]

Previous works including \textcite{baldi1989neural} and \textcite{izenman1975reduced} have derived closed-form solutions to the least-squares problems referenced in \Cref{eq:emp_bayes_inv} and \Cref{eq:emp_bayes_for}, respectively, in the setting of linear autoencoders and affine linear mappings. Although purely data-driven formulations are widely used to model relationships between datasets, the associated theoretical results remain scattered across fields. The derivations above adopt the formulation of \textcite{friedland2007generalized} to reinterpret these problems through a machine learning perspective, offering a unified framework that connects linear algebra with scientific modeling.

The aforementioned derivations also demonstrate that the least squares solution serves only as one possible empirical estimator of the theoretically optimal mappings, but not necessarily the most effective.  Notably, the least squares approach ignores prior knowledge of the underlying forward process, which can be a significant limitation in scientific machine learning contexts where such information is often available and meaningful. By contrast, the theoretical formulation permits flexibility in constructing such estimators. One may incorporate structural knowledge of the forward operator $\mathbf{F}$ or choose a specific symmetric factorization, such as a Cholesky decomposition, as this choice can significantly affect computational efficiency and numerical stability, particularly in high-dimensional settings. Empirical Bayes risk minimization thus offers a principled and computationally tractable alternative. It is compatible with standard machine learning practices while also accommodating domain-specific knowledge when available. For these reasons, we adopt the empirical Bayes risk minimization framework throughout this and subsequent sections in our numerical experiments.

\section{Numerical Experiments}\label{numerical}
We conduct three numerical experiments to evaluate and apply the theoretical framework from \Cref{theory} across diverse applications, using Bayes risk minimization as a unifying principle. These include structured image data, financial factor analysis, and nonlinear partial differential equations.

\subsection{\texttt{MedMNIST} Study}\label{medmnist}
We begin with a basic biomedical imaging example. For this investigation, we focus primarily on the general forward and inverse end-to-end problem formulations, and defer the results and discussion for all special cases to \Cref{app:medmnistFigs}.

\paragraph{Dataset and Overview.}\label{sec:medMNIST_dataset_overview}
For this setup, we utilize \texttt{MedMNIST}, a set of standardized biomedical images \cite{medmnistv2}, to run a variety of numerical experiments. Specifically, we utilize four key datasets: \texttt{TissueMNIST}, a collection of kidney microscopy images \cite{tissuemnist}; \texttt{ChestMNIST}, a hospital-scale chest x-ray database \cite{chestmnist}; \texttt{OrganAMNIST}, for axial images of livers \cite{organmnist1} \cite{organmnist2}; and \texttt{RetinaMNIST}, for retinal images \cite{medmnistv2}. Much like those in the classical \texttt{MNIST} database, these images are standardized to have a $28 \times 28$ pixel aspect ratio and they are purposefully lightweight for ML tasks. We selected these four datasets because they span a broad range of sample sizes, ranging from small datasets with about 1,000 samples to large ones with over 200,000.

We performed the following process for each dataset separately. First, we vectorized images from the datasets, transforming each $28 \times 28$ image into a data point $\mathbf{x}_j \in \mathbb{R}^{784}$. After vectorizing each image, we concatenated them column-wise to form $\mathbf{X} = [\mathbf{x}_1 \, \cdots \, \mathbf{x}_J] \in \mathbb{R}^{n \times J}$, where $J$ is the number of built-in training samples of each dataset. We form the corresponding dataset of observations $\mathbf{Y} = [ \mathbf{y}_1 \, \cdots \, \mathbf{y}_J ] \in \mathbb{R}^{m \times J}$ where the method by which we obtain $\mathbf{y}_j$ changes depending on the scenario (e.g., forward or inverse) in \Cref{theory}. We compute the empirical second moment matrices of the data, and add a small ridge term to ensure positive definiteness. This enables the efficient computation of symmetric factors via Cholesky decomposition.

We define the forward operator as a full-rank Gaussian blurring process. Specifically, we construct a fixed operator $\mathbf{F} \in \mathbb{R}^{784 \times 784}$ that performs a spatially invariant Gaussian convolution on each vectorized input image. Here, $\mathbf{F}$ is constructed by convolving each reshaped basis vector $\mathbf{e}_j$ with a $5 \times 5$ Gaussian kernel (standard deviation $s_{\mathbf{F{}}} = 1.5$) and re-vectorizing the result to form the $j$-th column. This ensures that $\mathbf{F}$ captures the linear action of convolution with the kernel and yields a full-rank square matrix consistent with the input dimension.

Additive Gaussian white noise is sampled with zero mean and covariance $\mathbf{\Gamma}_{\mathbf{E}} = s_{\mathbf{E}}^2 \mathbf{I}_{784}$, where $s_{\mathbf{E}} = 0.05$. Each column of the noise matrix $\mathbf{E} \in \mathbb{R}^{784 \times J}$ is an independent draw $\bm{\varepsilon}_j \sim \mathcal{N}(\mathbf{0}, s_{\mathbf{E}}^2 \mathbf{I}_{784})$, added to the corresponding blurred signal $\mathbf{F} \mathbf{X}$. The noisy observations are given by $\mathbf{Y} =\mathbf{F}\mathbf{X}+\mathbf{E}$.
\Cref{fig:e2eImgEvol} illustrates this process on a sample from the \texttt{ChestMNIST} dataset, showing the effects of blurring and noise.

\begin{figure}
    \centering
    \includegraphics[width=0.67\linewidth]{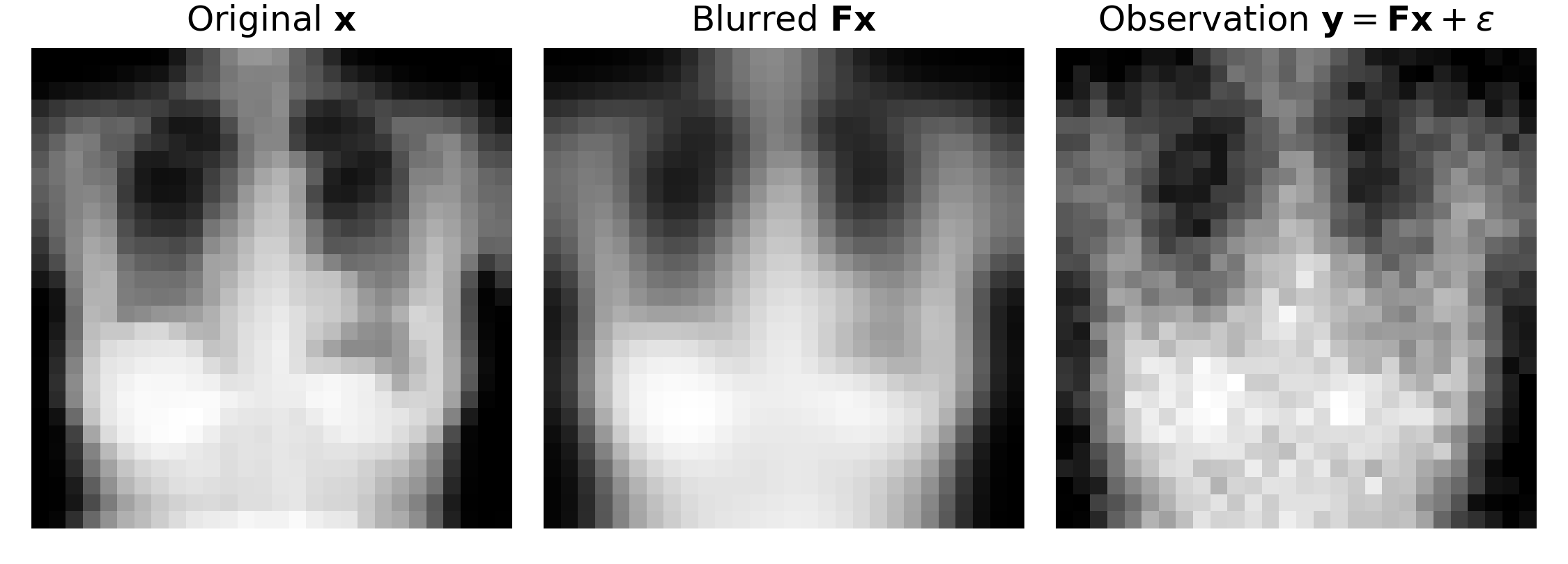}
      \caption{Example of the forward end-to-end process on sample 7181 from \texttt{ChestMNIST}. The left image shows the original input $\mathbf{x}$; the center image shows the blurred version $\mathbf{F} \mathbf{x}$ after application of the forward operator $\mathbf{F}$; and the right image shows the final observed measurement $\mathbf{y} = \mathbf{F} \mathbf{x} + \bm{\varepsilon}$ after addition of Gaussian noise $\bm{\varepsilon}$.}
    \label{fig:e2eImgEvol}
\end{figure}

\paragraph{Encoder-Decoder Architecture.} 
To learn rank-constrained mappings, we utilize the single-layer encoder-decoder architecture outlined in \Cref{sec:methods_ED}. Specifically, we implement a single-layer linear encoder-decoder architecture using \texttt{PyTorch}. The encoder $\mathbf{E}_r \in \mathbb{R}^{784 \times r}$ maps each input $\mathbf{y}$ to a latent representation $\mathbf{z} \in \mathbb{R}^r$, with $r < 784$, and the decoder $\mathbf{D}_r \in \mathbb{R}^{r \times 784}$ attempts to reconstruct the corresponding ground truth signal $\mathbf{x}$. Our encoder-decoder model optimizes the empirical Bayes risk by learning a low-rank linear operator that minimizes the average $\ell_2$ reconstruction error (MSE) between predicted and ground-truth signals. For our results, we denote the optimal and learned rank-$r$ mappings by $\widehat{\mathbf{A}}_r$ and $\mathbf{A}_r$, respectively, and keep this notation for the remainder of the section. To empirically validate the theoretical results, we construct optimal linear mappings of rank $r$ using the closed-form expressions derived in \Cref{theory} and compare them to learned mappings obtained from training the encoder-decoder architecture described above. Finally, we train our model using the \texttt{ADAM} \cite{kingma2014adam} optimizer with a learning rate of $10^{-3}$ for 200 epochs for each tested rank. 

\paragraph{Results.} We evaluate both the optimal and learned mappings across a range of target ranks $r \in \{25, 50, \dots, 775\}$. The maximum rank considered is 775 to remain within the effective rank of the data, as each image lies in $\R^{784}$ and hence the maximum possible rank is 784.

\Cref{fig:e2eFullComp} compares the average per-sample $\ell_2$ reconstruction error between optimal mappings $\widehat{\mathbf{A}}_r$ and learned encoder-decoder mappings $\mathbf{A}_r$ as a function of bottleneck rank $r$ across \texttt{MedMNIST} datasets. The results are shown separately for the forward and inverse end-to-end problems. These results empirically validate our theoretical predictions in \Cref{th:e2eFwd,th:e2eInv}, since across all datasets and for both tasks, the optimal mappings consistently outperform the learned mappings in terms of reconstruction error. 

Interestingly, we note that the reconstruction error of the forward process is slightly worse than its inverse counterpart, and we attribute this to the fact that noise serves as a regularizer in the inversion process. We also note that the performance of the optimal mappings plateaus as the rank $r$ exceeds the effective dimensionality of the data. This reflects the fact that the trailing singular values of the optimal mapping are effectively zero and thus do not contribute additional information to the reconstruction.

\Cref{fig:e2eOptVSLearned} contrasts optimal and learned reconstructions for a representative \texttt{ChestMNIST} sample at bottleneck rank $r = 25$. For both the forward and inverse cases, the optimal mappings yield reconstructions that closely match the input image, with low. Notably, the optimal forward error appears as isolated, random pixel-level noise, while the optimal inverse error reveals more structured error. In contrast, both the forward and inverse learned mappings fail to capture key details in both directions, introducing grainy artifacts and producing error maps that are both larger in intensity and more visibly patterned than those of the optimal mappings. The learned mappings fail to capture much of the underlying structure of the figure in the original image.

\begin{figure}
    \centering
    \includegraphics[width=0.49\linewidth]{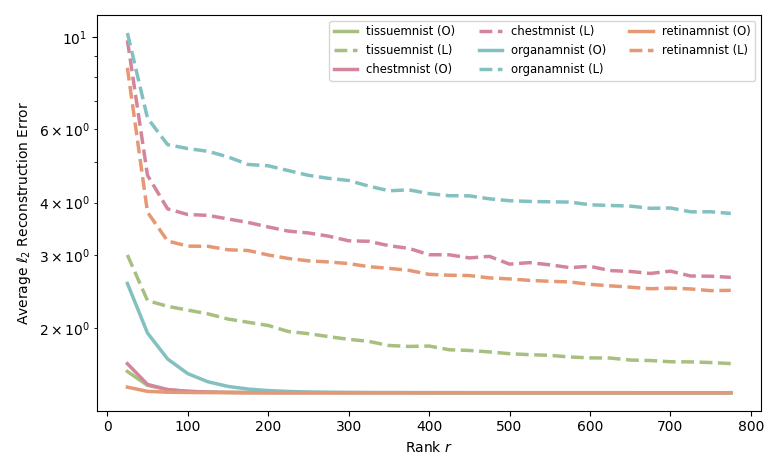}
    \includegraphics[width=0.49\linewidth]{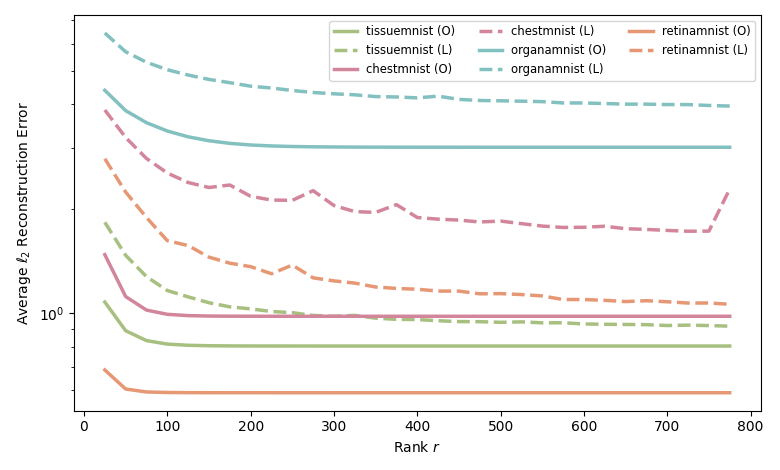}
    \caption{Average per-sample $\ell_2$ reconstruction error versus bottleneck rank $r$ across \texttt{MedMNIST} datasets for both forward and inverse end-to-end problems. \textbf{Left:} Forward process, where the goal is to learn a mapping from $\mathbf{x}$ to $\mathbf{y}$. The optimal and learned mappings minimize the empirical losses $\tfrac{1}{J} \sum_{j=1}^{J} \| \widehat{\mathbf{A}}_r^{\text{F}} \mathbf{x}_j - \mathbf{y}_j \|_2^2$ and $\tfrac{1}{J} \sum_{j=1}^{J} \| \mathbf{A}_r^{\text{F}} \mathbf{x}_j - \mathbf{y}_j \|_2^2$, respectively. \textbf{Right:} Inverse process, where the goal is to reconstruct $\mathbf{x}$ from $\mathbf{y}$. The optimal and learned mappings minimize the empirical losses $\tfrac{1}{J} \sum_{j=1}^{J} \| \widehat{\mathbf{A}}_r^{\text{I}} \mathbf{y}_j - \mathbf{x}_j \|_2^2$ and $\tfrac{1}{J} \sum_{j=1}^{J} \| \mathbf{A}_r^{\text{I}} \mathbf{y}_j - \mathbf{x}_j \|_2^2$, respectively. Solid lines denote optimal mappings $\widehat{\mathbf{A}}_r^{(\cdot)}$ (O), while dashed denote learned encoder-decoder mappings $\mathbf{A}_r^{(\cdot)}$ (L).}
    \label{fig:e2eFullComp}
\end{figure}

\begin{figure}
    \centering
    \includegraphics[width=0.85\linewidth]{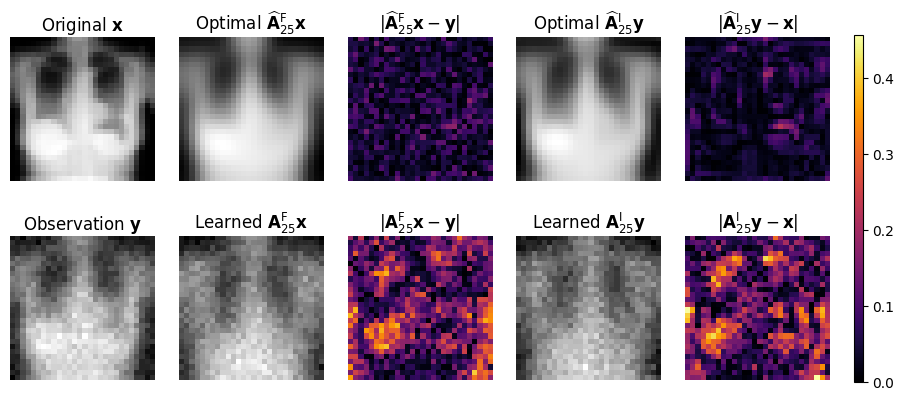}
    \caption{Example reconstruction results from \texttt{ChestMNIST} with bottleneck rank $r = 25$ for both forward and inverse end-to-end problems. In all cases, the observed measurement is denoted $\mathbf{y}$ and the ground truth is $\mathbf{x}$. \textbf{Top row:} Reconstructions using optimal mappings. Left to right: original signal $\mathbf{x}$; optimal forward reconstruction $\widehat{\mathbf{A}}_{25}^{\text{F}} \mathbf{x}$ and corresponding error map $| \widehat{\mathbf{A}}_{25}^{\text{F}} \mathbf{x} - \mathbf{y} |$; optimal inverse reconstruction $\widehat{\mathbf{A}}_{25}^{\text{I}} \mathbf{y}$ and corresponding error map $| \widehat{\mathbf{A}}_{25}^{\text{I}} \mathbf{y} - \mathbf{x}|$. \textbf{Bottom row:} Learned encoder-decoder reconstructions. Left to right: observation $\mathbf{y}$; learned forward reconstruction $\mathbf{A}_{25}^{\text{F}} \mathbf{x}$ and corresponding error map $| \mathbf{A}_{25}^{\text{F}} \mathbf{x} - \mathbf{y} |$; learned inverse reconstruction $\mathbf{A}_{25}^{\text{I}} \mathbf{y}$ and corresponding error map $| \mathbf{A}_{25}^{\text{I}} \mathbf{y} - \mathbf{x} |$.}
    \label{fig:e2eOptVSLearned}
\end{figure}

\subsection{Application to Financial Markets}\label{finance}
Financial markets are driven by a small number of risk factors that impact many assets simultaneously. Main factors such as market sentiment, sector-specific shocks, or macroeconomic tendencies are hard to quantify and observe, yet manifest as correlated price movements in assets. In the following, we investigate the theoretically derived inference of risk factors achieved by our optimal affine linear autoencoder mapping and compare its effectiveness against classical and widely used benchmark models in this setting.

\paragraph{Motivation and Overview.}
The volatile nature of the U.S. stock market makes it difficult to identify important underlying factors that drive asset pricing. Motivated by these considerations, the Fama-French Three-Factor model (FF3) \cite{fama1993common} was proposed as a widely adopted extension of the Capital Asset Pricing Model (CAPM) originally introduced in \textcite{sharpe1964capital}. The FF3 model defines the market excess return (Market), size (SMB), and value (HML) factors as three key drivers of asset pricing. Typically, these factors are disproportionate, with market excess return being the most influential and size and value more subtle. \textcite{fama1992crosssection} prove that these three factors (Market, SMB, and HML) can explain over 90\% of the variance in monthly diversified asset returns, establishing FF3 as a standard statistical model for asset pricing and portfolio modeling. For a detailed discussion on the model, including common terminology and definitions, we refer the reader to \Cref{app:financialdefs}.

PCA is another widely used technique that can extract the latent factors from asset return data by identifying linear combinations of assets that maximize variance, under the assumption that these latent factors are orthogonal \cite{jolliffe2011principal}. However, during periods of high volatility, PCA has been shown to extract components that overfit to idiosyncratic noise or clustering rather than providing economically meaningful risk signals \cite{verma2017volatility}, resulting in distorted factor estimates and poor portfolio outlooks. Nevertheless, PCA remains a popular benchmark for evaluating factor models due to its simplicity, data-driven nature, and ability to capture broad patterns in asset co-movements without relying on specific economic assumptions.

Finally, autoencoders provide a flexible framework for both linear and nonlinear dimensionality reduction that can capture complex factor structures in market data. The encoder maps high-dimensional asset returns to a low-dimensional latent space (the latent factors), while the decoder reconstructs the original returns. This architecture naturally handles the factor analysis problem: the bottleneck layer represents the latent factors, and reconstruction error measures how well the latent space explains the data. Traditional Arbitrage Pricing Theory \cite{ross1976arbitrage} suggests that the underlying market structure for assets is linear, hence we expect a linear autoencoder to identify latent factors well. For completeness, we also include a nonlinear autoencoder. 

We compare the optimal affine linear autoencoder from \Cref{rem:autoencoding} with latent dimension $r = 3$ (referred to as \textit{Optimal Affine Linear}) against several standard approaches: FF3, a PCA-based method, an untrained affine linear autoencoder with random initialization (\textit{Trained Affine Linear}), and a trained nonlinear autoencoder (\textit{Trained Nonlinear}). For further details on the model architectures, we refer the reader to \Cref{fin-arc}.
We apply the models to real-world financial datasets consisting of daily diversified asset returns, which exhibit substantially more noise than their monthly counterparts. Furthermore, we conduct an additional experiment in \Cref{synth-data-exp} by
evaluating the models on synthetic data, allowing for a direct comparison against a known ground truth. 

\paragraph{Market Data Acquisition.} \label{market-data-exp}
Our dataset contains time series data for 194 selected stocks, obtained via Python’s \texttt{yfinance} library \cite{yfinance}, which span the U.S.~equity market. All Global Industry Classification Standard (GICS) sectors are represented, capturing a wide range of market capitalizations, liquidity profiles, and volatility levels. The data covers 787 trading days, from November 11, 2021, to December 30, 2024, and contains key statistical properties, such as mean daily return and average daily volatility, that align with values reported in relevant literature \cite{wiese2020quant, campbell1997econometrics}. A full list of included stocks and sector classifications is available on our \href{https://github.com/alexdelise/CMDS-REU-MADDI}{GitHub page}.

We use the standard log-return formula to calculate daily returns, as it ensures time additivity and supports common distributional assumptions used in financial modeling. To prevent extreme price movements from skewing autoencoder training, we remove single-day changes exceeding $\pm 50\%$ from the dataset. We forward-filled missing values for up to five consecutive days to handle minor data gaps. For each asset, we split their dataset chronologically, with the first $80\%$ of observations allocated for training and the remaining $20\%$ for testing. Preserving the chronological order prevents future information from leaking into the training set, ensuring realistic and temporally consistent model evaluation.

\paragraph{Model Architecture and Performance Metrics.} 
For the Trained Affine Linear and Trained Nonlinear models, we implemented the architectures outlined in \Cref{fin-arc} with hidden layer width $H = 144$, number of assets $A = 194$, and number of time steps $T = 787$. We applied a Varimax Orthogonal Rotation \cite{kaiser1958varimax} to the latent space for interpretability by maximizing the variance of squared loadings per factor. This encourages each asset to load on a smaller number of factors while maintaining orthogonality \cite{sherin1966matrix}. However, when coupled with the random initialization and non-convexity of the optimization problem, Varimax introduces rotation-induced numerical instability, particularly when at least two factors explain similar amounts of variance \cite{browne2001overview}. This could lead to inconsistent factor orderings and orientations across different runs, resulting in high variability of factor decompositions despite consistent overall model performance (MSE). Hence, the Trained Affine Linear and Trained Nonlinear autoencoders may converge to local minima instead of the global minima. 

To address these issues, we report MSE and factor analysis metrics for the Trained Affine Linear and Trained Nonlinear models as an average over 100 runs with corresponding standard deviation. Within each run, we trained the Trained Affine Linear and Nonlinear autoencoders over 150 and 100 epochs, respectively.

\paragraph{Results.} \Cref{tab:market_mse} displays the reconstruction MSE of the four tested methods. \Cref{tab:market_varimax} displays the cumulative explained variance (CEV) of the tested methods, where CEV, a commonly utilized statistic in finance, measures the percentage of variance from the market dataset captured by the three most important latent factors of each model. Higher CEVs indicate that a model's factors provide a better representation of the data. We note that the CEVs of all our models were relatively low (below $40\%$ overall) because we use a daily asset returns dataset, which is extremely volatile and noisy, intensifying the difficulty of extracting meaningful latent structure.

As a financial benchmark, we compare FF3 factors from the Kenneth French Data Library to our latent spaces. The Kenneth R. French Data Library, see \cite{famafrenchlib} for details, records FF3 factors using a large subset of the entire stock universe, while our model-generated latent spaces are trained on a much smaller subset of the stock universe. Generating FF3 factors for our smaller stock database requires using financial metrics and resources that are often not available for public use. This discrepancy explains the relatively low CEV of the FF3 model compared to some of the other models. However, FF3 factors still allow for a general financial baseline for comparison. 

\begin{table}[H]
    \caption{MSE results for market data.}
    \label{tab:market_mse}
    \centering\small
    
    \begin{tabular}{|
    >{\columncolor{tablecolor1}}>{\centering\arraybackslash}p{3.5cm}|
    >{\columncolor{tablecolor2}}>{\centering\arraybackslash}p{5.2cm}|
}
\hline
\textbf{Method} & \textbf{Mean Squared Error} ($\times10^{-4}$) \\
\hline
Optimal Affine Linear & 
$2.88 $ \\
\hline
Trained Affine Linear & 
$4.00 \pm 0.05 $ \\
\hline
Trained Nonlinear & 
$4.06 \pm 0.02$ \\
\hline
PCA & 
$2.96 $ \\
\hline
\end{tabular}
\end{table}

\begin{table}[H]
\renewcommand{\arraystretch}{1.5}
\caption{Rotated CEV and factor balance for market data. Items in bold are the best in their respective category.
    \textbf{*}Latent factor names are assigned arbitrarily, as alignment with the FF3 structure is not guaranteed. For benchmarking, we use daily FF3 factors from the Kenneth R. French Data Library, which are computed over a broader stock universe and serve only as an approximate reference.
}
\label{tab:market_varimax}
\centering\small

\begin{tabular}{|
    >{\columncolor{tablecolor1}}>{\centering\arraybackslash}p{2.5cm}|
    >{\columncolor{tablecolor2}}>{\centering\arraybackslash}p{2.1cm}|
    >{\columncolor{tablecolor3}}>{\centering\arraybackslash}p{2.1cm}|
    >{\columncolor{tablecolor4}}>{\centering\arraybackslash}p{2.1cm}|
    >{\columncolor{tablecolor5}}>{\centering\arraybackslash}p{2.5cm}|
    >{\columncolor{tablecolor9}}>{\centering\arraybackslash}p{2.2cm}|
}
\hline
\textbf{Method} & \textbf{Factor 1*} & \textbf{Factor 2*} & \textbf{Factor 3*} & \textbf{Total CEV} & \textbf{Factor Balance} \\
\hline

Optimal Affine Linear & 
\textbf{0.126} & 0.137 & 0.080 & \textbf{0.342} & 0.586 \\
\hline

Trained Affine Linear & 
$0.05 \pm 0.05$ & $0.05 \pm 0.04$ & $0.05 \pm 0.05$ & $0.15 \pm 0.05$ & \textbf{0.885} \\
\hline

Trained Nonlinear & 
$0.07 \pm 0.07$ & $0.07 \pm 0.06$ & $0.07 \pm 0.05$ & $0.18 \pm 0.07$ & 0.688 \\
\hline

PCA & 
0.109 & \textbf{0.144} & 0.080 & 0.333 & 0.555 \\
\hline

FF3 & 
\textbf{0.126} & 0.018 & \textbf{0.103} & 0.248 & 0.146 \\
\hline

\end{tabular}
\end{table}

The Trained Nonlinear autoencoder had the highest MSE of all the models, indicating that more complex transformations overfit the training data and provides suboptimal accuracy in reconstruction. The Trained Affine Linear autoencoder also had a similarly high MSE while PCA and the Optimal Affine Linear model yield much lower MSEs, with the Optimal Affine Linear model providing the lowest MSE. 

Both the Trained Nonlinear and Trained Affine Linear autoencoders exhibit a relatively high factor balances, low CEVs, and high uncertainty values, suggesting that they both learn a small amount of idiosyncratic noise instead of market patterns. The relatively high uncertainty of the variances show that both autoencoders' latent spaces are not only weak but unstable. Furthermore, the nonlinearity of the Trained Nonlinear autoencoder does not appear to offer any advantages in capturing meaningful underlying structure in the return data. 

The Optimal Affine Linear and PCA models both perform similarly across CEV and factor balance metrics. These models both reach higher levels of CEV than FF3 factors from the French Data Library. Furthermore, the Optimal Affine Linear model's latent space of true market data provides the highest CEV, indicating that the optimal transformation also provides a meaningful and economically relevant compression of our stock universe. More generally, this demonstrates that the Optimal Affine Linear Autoencoder mapping provides an accurate reconstruction of input data and an interpretable and meaningful latent space, which many nonlinear models fail to do.

\subsection{Nonlinear Experiment: Shallow Water Equations}\label{swe}
In this section, we present an application of our optimal linear inverse end-to-end map, and also present a comparison to a nonlinear learned inverse map. The forward problem we consider is the evolution of the shallow water equations (SWEs) in time, with the initial conditions being the input signal $\mathbf{x}$ and the system state at some later time $t_n$ being the noisy observations, $\mathbf{y}$. Building on works such as \textcite{jo2019deep}, which demonstrate the use of neural networks to approximate differential equations in both forward and inverse settings, we integrate our theoretical results on rank-constrained optimal networks into this framework for partial differential equation (PDE) approximation. We observe that the optimal linear inverse mapping performs significantly better than a learned nonlinear mapping, showing the efficacy of our linear result in a nonlinear context.

\paragraph{Overview and Data Generation}
The SWEs are a set of three nonlinear coupled PDEs that describe the dynamics of a thin layer of fluid of constant density in hydrostatic balance. \Cref{app:SWE} outlines the full equations, along with a detailed explanation of all the variables and the data collection process. To generate the training dataset, we simulate the SWEs 2,500 times for four different types of initial conditions and then extract the system states for the 0-th and the 1,500-th time-step, which corresponds to $t=0s$ and $t\approx(1,\!500\times 50)s$. A system state (each data point $\mathbf{x}_t$) consists of the values of three variables over a two-dimensional mesh (of size $64\times64)$: the $\chi_1$ and $\chi_2$ velocities, referred to as $u$ and $v$, and the deviation from mean depth (surface height), referred to as $\eta$. This generates a dataset of 10,000 instances, each with a signal datapoint $\mathbf{x}_0$ and an observation data point $\mathbf{x}_{1,500}$. \Cref{fig:initial_conditons_example} shows the different possible types of initial conditions in $u$, $v$, and $\eta$. Lastly, we add Gaussian white noise to the observation data $\mathbf{x}_{1,500}$ (in a similar fashion to the process outlined in \Cref{sec:medMNIST_dataset_overview}), setting up the inverse problem as 

\[ 
\mathbf{x}_{1,500}=\mathbf{F}(\mathbf{x}_{0})+\bm{\varepsilon},
\]
where we hope to find a mapping $\mathbf{A}$ that recovers the initial conditions $\mathbf{x}_{0}$ given the system state $\mathbf{x}_{1,500}$.

\begin{figure}
    \centering
    \includegraphics[width=1.0\linewidth]{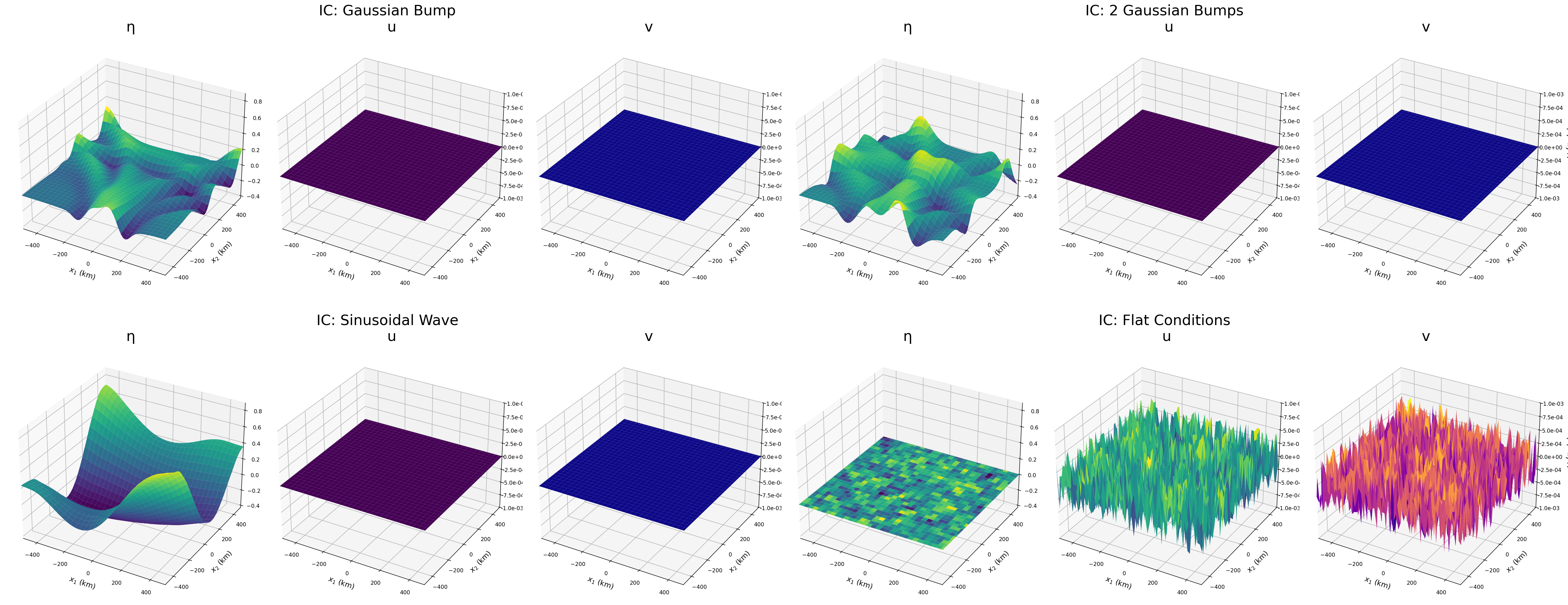}
    \caption{Visualization of the four different kinds of initial conditions used in the dataset. Note that the z axis denotes $\eta$ (m) for the first and fourth columns, $u$ (m/s) for the second and fifth columns and $v$ (m/s) for the others columns.}
    \label{fig:initial_conditons_example}
\end{figure}

\paragraph{Optimal Linear Approach.} The first straightforward approach to this problem is to assume that the forward mapping $\mathbf{F}$ is linear, and thus use the optimal inverse end-to-end mapping derived in \Cref{th:e2eFwd}. Although we know a priori that the SWEs are nonlinear, this serves as a solid test of the efficacy of our theoretical results on real-world, nonlinear data.

Recall from \Cref{th:e2eFwd} that the optimal linear inverse end-to-end mapping is given by $\mathbf{\widehat{A}} = (\mathbf{\Gamma}_X\mathbf{F}^\top \mathbf{L}_Y^{\dagger \, \top})_r \mathbf{L}_Y^{\dagger}$, where $r= \operatorname{rank}(\mathbf{\widehat{A}})$. To compute model predictions, we: 
\begin{itemize}
    \item Vectorize each data point from $\mathbf{x}_t\in \mathbb{R}^{3\times64\times64}$ into a column vector in $\mathbb{R}^{12,288}$, and concatenate data points into signal and observation matrices $\mathbf{X}, \mathbf{Y} \in \mathbb{R}^{12,288\times10,000}$. 
    
    \item Use empirical estimates for $\mathbf{\Gamma}_X \mathbf{F}^\top$ and $\mathbf{\Gamma}_Y$. For random variables $X$ and $Y$ defined as in \Cref{th:e2eInv}, we have the identity
    \[
        \mathbb{E}[XY^\top] = \mathbf{\Gamma}_X \mathbf{F}^\top.
    \]
    Given our data matrices $\mathbf{X}$, $\mathbf{Y}$, we approximate this expectation by the empirical cross-covariance
    \[
        \mathbf{\Gamma}_X \mathbf{F}^\top \approx \tfrac{1}{J} \mathbf{X} \mathbf{Y}^\top.
    \]
    Similarly, we approximate the second moment of $Y$ empirically with
    \[  \mathbf{\Gamma}_\mathbf{Y} = \tfrac{1}{J} \mathbf{Y} \mathbf{Y}^\top + 10^{-2} \mathbf{I}_{12,288}
        = \mathbf{L}_\mathbf{Y} \mathbf{L}_\mathbf{Y}^\top,
    \]
    where the change in subscript again denotes that we are working with empirical data. 
    We use the ridge term $10^{-2} \mathbf{I}_{12,288}$ to ensure that the empirical second moment is SPD, permitting the Cholesky decomposition.

    \item Compute the matrices $\mathbf{L}_\mathbf{Y}^{-1}$ and $\mathbf{L}_\mathbf{Y}^{-\top}$, and form the rank-$r$ optimal mapping 
    \[
        \widehat{\mathbf{A}}  
        = \left(\mathbf{\Gamma}_X\mathbf{F}^\top \mathbf{L}_Y^{-\top}\right)_r \mathbf{L}_Y^{-1}
        \approx \left(\mathbf{X} \mathbf{Y}^\top \, \mathbf{L}_\mathbf{Y}^{-\top} \right)_r \mathbf{L}_\mathbf{Y}^{-1}.
    \]

    \item Compute the approximation $\widetilde{\mathbf{X}} = \mathbf{\widehat{A}}\mathbf{Y}$. 

\end{itemize}

To compare results, we compute errors across each of the three variables ($u$, $v$, and $\eta$) separately (i.e., $\mathbf{X}_u$, $\mathbf{X}_v$, and $\mathbf{X}_\eta$ denote the corresponding rows of $\mathbf{X}$ associated with each variable). For $\eta$, we use a standard error metric of normalized root mean squared error (NRMSE), however for $u$ and $v$, we use mean absolute error (MAE). Using a different loss metric for these two variables is necessary as 75\% of the values of $u$ and $v$ in $\mathbf{X}$ are zero, resulting in error metrics like NRSME reflecting large and inaccurate errors. Additionally, we compute NRMSE across all three variables to have a single loss metric that is representative of the errors in all three variables. To verify that the trained models are genuinely learning the inverse mapping rather than simply memorizing the training data, we evaluate their performance on two distinct testing sets. The in-distribution set contains 2,000 samples generated with the same four types of initial conditions as the training data. In contrast, the out-of-distribution set consists of 1,000 samples derived from two previously unseen types of initial conditions (of ring wave and step wave).

Experiments with different values of $r$ demonstrated that the minimal RMSE across the entire dataset was obtained at $r\approx250$, thus we use $r=250$ as a benchmark rank for all our experiments. \Cref{tab:sw_error_comparison} shows error metrics for the linear model. We also present a visualization of these results in \Cref{fig:sw_reconstructions}.

\begin{table}
\caption{Definition of error metrics and error values for optimal linear and learned nonlinear models across two different testing sets, with $r=250$. The in-distribution testing set contains 2,000 data points with the same initial conditions as the training set, and the out-of-distribution testing set contains 1000 data points with the two unseen initial conditions.
Note: $\|\cdot\|_1$ is the element-wise $\ell_1$ (Manhattan) norm.}
\centering\small
\renewcommand{\arraystretch}{1.8}
\begin{tabular}{|
    >{\columncolor{tablecolor1}}>{\centering\arraybackslash}p{0.1\linewidth}|
    >{\columncolor{tablecolor2}}>{\centering\arraybackslash}p{0.15\linewidth}|
    >{\columncolor{tablecolor3}}c|
    >{\columncolor{tablecolor3}}c|
    >{\columncolor{tablecolor4}}c|
    >{\columncolor{tablecolor4}}c|
}
\hline
\textbf{Error} & \textbf{Definition} &
\multicolumn{2}{>{\columncolor{tablecolor3}}c|}{\textbf{In-Distribution Testing Set}} &
\multicolumn{2}{>{\columncolor{tablecolor4}}c|}{\textbf{Out-of-Distribution Testing Set}} \\
\cline{3-6}
\textbf{Metric} & &
\textbf{Optimal} & \textbf{Learned} &
\textbf{Optimal} & \textbf{Learned} \\
\hline
\rule{0pt}{4.7ex}Total NRMSE & 
 $\dfrac{\|\mathbf{X}-\widetilde{\mathbf{X}}\|_{\mathrm F}}
       {\|\mathbf{X}\|_{\mathrm F}}$ & 
 $0.18$ & $0.26$ & $0.12$ & $0.87$ \rule[-4ex]{0pt}{0pt} \\
\hline
\rule{0pt}{4.7ex}$\eta$ NRMSE & 
 $\dfrac{\|\mathbf{X}_\eta-\widetilde{\mathbf{X}}_\eta\|_{\mathrm F}}
       {\|\mathbf{X}_\eta\|_{\mathrm F}}$ & 
 $0.18$ & $0.26$ & $0.12$ & $0.87$ \rule[-4ex]{0pt}{0pt} \\
\hline
\rule{0pt}{4ex}$u$ MAE & 
$\displaystyle\tfrac1J\|\mathbf{X}_u-\widetilde{\mathbf{X}}_u\|_1$ & 
$6.3\times10^{-5}$ & $6.4\times10^{-5}$ & $1.8\times10^{-6}$ & $5.0\times10^{-4}$
\rule[-2.2ex]{0pt}{0pt} \\
\hline
\rule{0pt}{4ex}$v$ MAE & 
$\displaystyle \tfrac1J \|\mathbf{X}_v-\widetilde{\mathbf{X}}_v\|_1$ & 
$6.3\times10^{-5}$ & $6.4\times10^{-5}$ & $1.7\times10^{-6}$ & $5.0\times10^{-4}$
\rule[-2.2ex]{0pt}{0pt} \\
\hline
\end{tabular}
\label{tab:sw_error_comparison}
\end{table}

\paragraph{Learned Nonlinear Approach.} Although the linear model is successful at this inverse problem, we know a priori that the forward process here is nonlinear and thus we also try to learn the inverse mapping with a nonlinear end-to-end deep neural network. This provides a solid benchmark comparison to the performance of our optimal linear model.

We use a fully connected neural network with an encoder-decoder architecture and bottleneck rank of $r=250$. Both the encoder and decoder have three linear layers with nonlinear ReLU activations stacked in between. Additionally, we use the same normalized data, the \texttt{ADAM} optimizer, MSE loss, and a batch size of 80 to train the nonlinear network. With some experimentation we see that after 100 epochs of training, the MSE loss converged close to a minimum. We report the error values across testing sets in \Cref{tab:sw_error_comparison} and present visualizations of the results in \Cref{fig:sw_reconstructions}.

\begin{figure}
\centering
\begin{tabular}{c}
    \includegraphics[width=0.9\linewidth]{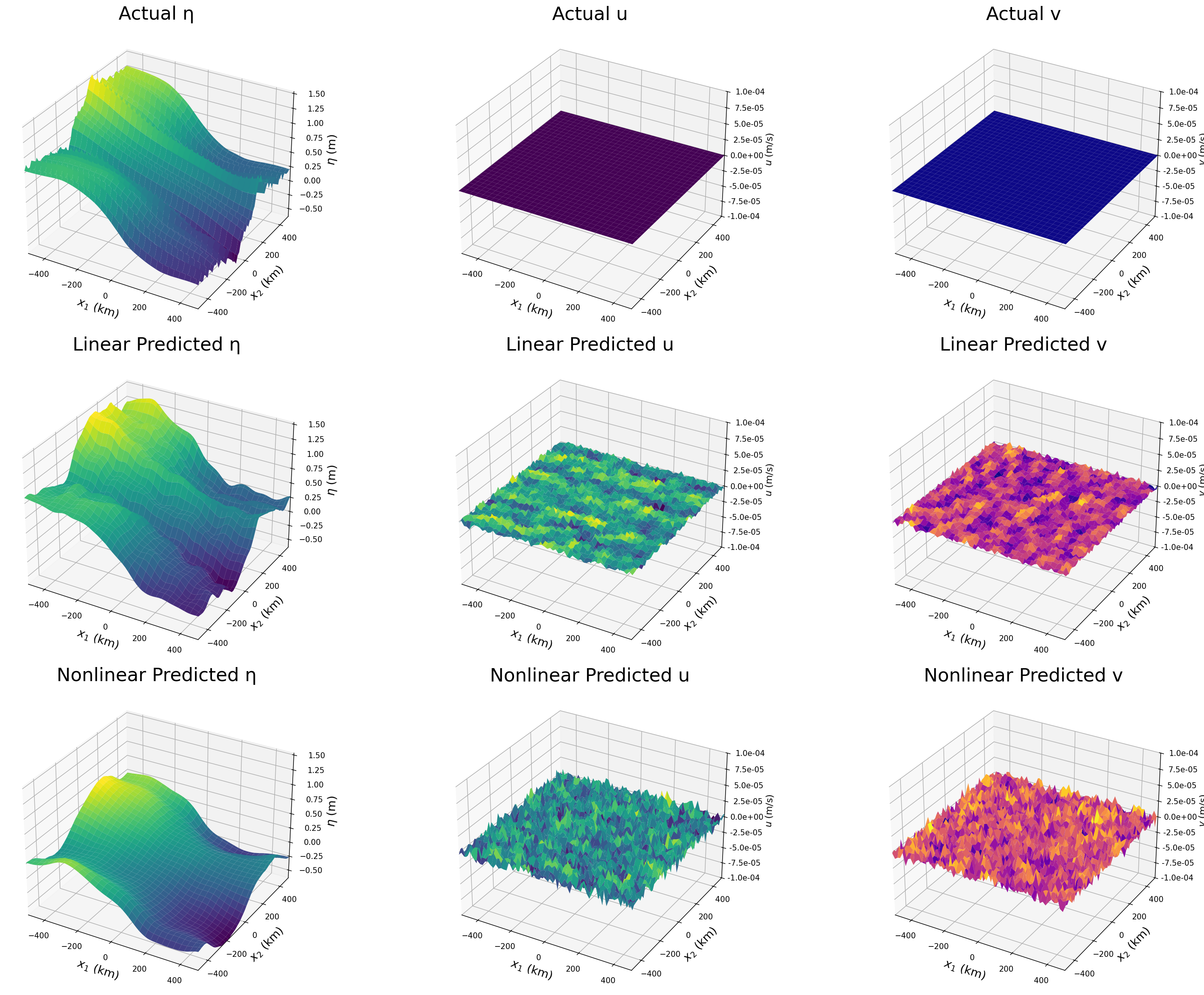} 
\end{tabular}
\caption{Example of actual initial conditions, linear model reconstructions, and nonlinear learned model reconstructions for all three variables, for an instance from the out-of-distribution testing set.}
\label{fig:sw_reconstructions}
\end{figure}

\paragraph{Results.} \Cref{tab:sw_error_comparison} evidently shows that the optimal linear inverse mapping performs significantly better at solving the inverse problem than a learned nonlinear mapping, despite the forward process being nonlinear. One possible explanation is that the simulated forward process retains some approximate linearity, as the solver assumes the momentum equations to be linear. However, we verify that the linear model does not just ``memorize'' the data by testing it on unseen data and initial conditions, and we see that it still performs significantly better. While nonlinearity alone does not guarantee better approximations for nonlinear processes, alternative nonlinear architectures may still outperform the optimal linear mapping in this setting. While the Universal Approximation Theorem \cite{hornik1989multilayer} guarantees that such a mapping exists, finding it is not trivial and in practice demands extensive architectural design, hyperparameter tuning, and careful selection of stochastic optimization procedures. This result emphasizes the need for using well-understood linear mappings as baselines for comparison in scientific ML tasks, as well as the efficacy of our optimal linear mappings for solving even nonlinear inverse problems with superb accuracy while remaining computationally efficient.

\section{Discussion and Conclusions}\label{conclusion}
In this work, we provide general closed-form solutions of optimal rank-constrained mappings for linear encoder-decoder architectures through the lens of Bayes risk minimization. Our main contribution is the derivation of closed-form, rank-constrained optimal estimators across a diverse set of scientific modeling scenarios, including forward modeling, inverse recovery, and special cases like autoencoding and denoising. We extend these formulations to accommodate rank-deficiencies in the parameter distributions, measurement processes, and forward operators, offering a generalized theory applicable to realistic, imperfect data settings.

To verify our results, we provide several numerical experiments in the form of toy medical imaging via the \texttt{MedMNIST} dataset, in which we highlight how our derived optimal mappings outperform those learned by their encoder-decoder network counterpart. We perform an in-depth comparison of our optimal mapping in a financial setting to the standard models for factor analysis. Our results demonstrate the power of the simple linear transformation to accurately identify the latent factors within heteroskedastic asset returns. Our transformation performs better than PCA and greatly outperforms the other learned linear and nonlinear autoencoder architectures. Finally, we test the limits of our optimal mappings' scope by applying them to a challenging nonlinear inverse problems via the SWEs, in which our optimal mapping even outperforms a deep nonlinear neural network, at predicting initial conditions given the system's future state.

Several extensions naturally emerge from our study. Our work can easily be extended to using the $R$-weighted two norm as the loss function, while still drawing on results from \textcite{friedland2007generalized}. While we focus on linear architectures, extending our Bayes risk minimization framework to nonlinear encoder-decoder networks could offer insight into more expressive models while retaining partial analytical tractability. Further, our framework is grounded in expected risk but does not quantify estimator uncertainty. Future work may explore closed-form expressions for the posterior variance of optimal estimators or derive confidence bounds under noise models. To improve the practical applicability of these results to real-world datasets, future work could explore randomized SVD methods for scenarios where the data is high-dimensional and computing the traditional SVD is infeasible. In dynamic settings where data streams evolve over time, it would be valuable to extend our results to adaptive estimators that update rank-constrained mappings in an online fashion.

For our numerical results, our ongoing work in financial modeling aims to further emphasize the optimal affine linear autoencoder's efficacy in factor analysis by using Shapley values as a machine learning metric \cite{shapley1953value} to better interpret the economic significance of the latent space in the optimal affine linear autoencoder. Preliminary results using Shapley values demonstrate a distinct but more economically complex latent space than expected. This does not reduce the importance of the optimal affine linear autoencoder's result, but rather hints that it is learning a different and more representative latent space. Continuing experiments aim to investigate and interpret this result further.

\section*{Availability of Data and Materials}
The data and code used in this work are available on \href{https://github.com/alexdelise/CMDS-REU-MADDI}{GitHub}.

\section*{Acknowledgment}
The authors acknowledge support from the National Science Foundation Division of Mathematical Sciences under grant No. DMS-2349534.

\printbibliography

\appendix

\section{Special Case \texttt{MedMNIST} Figures}\label{app:medmnistFigs}
In this appendix we provide some analysis on the special cases of the forward and inverse end-to-end problem, namely the noiseless autoencoding and data denoising problem. We omit the figures and analysis of all affine linear mappings as their results match their linear counterparts.

\paragraph{Noiseless Autoencoder.} 
In the noiseless autoencoder setting, the objective is to compress and reconstruct the original signal $\mathbf{x}$ using a rank-constrained linear autoencoder that approximates the identity map.

\Cref{fig:noiselessFullComp} shows the average per-sample $\ell_2$ reconstruction error across rank-$r$ bottlenecks for classic noiseless autoencoder on the four selected \texttt{MedMNIST} datasets. Here, the optimal mappings improve steadily and converge toward optimal signal reconstruction as $r$ increases. In contrast, the learned mappings remain largely flat and fail to exploit the added model capacity, resulting in a persistent and often widening gap from optimal performance. Unlike in the forward and inverse end-to-end scenarios, the reconstruction error for the optimal mappings does not plateau as the rank increases. This is because in the absence of noise, higher-rank mappings may continue to capture increasingly fine-grained structure in the data distribution, and the optimal solution converges to the identity map as $r \to n$.

\Cref{fig:noiselessOptVSLearned} visualizes the gap between the optimal and learned mappings for a representative sample from \texttt{ChestMNIST} at rank $r = 25$. The optimal mapping for the linear autoencoder yields a visually faithful reconstruction and low error, while the learned mapping introduces notable grainy artifacts and is unable to learn the finer anatomical structure of the original image. The associated error map confirms these structured discrepancies in the learned output that are not present in the optimal case.

\begin{figure}
    \centering
    \includegraphics[width=0.67\linewidth]{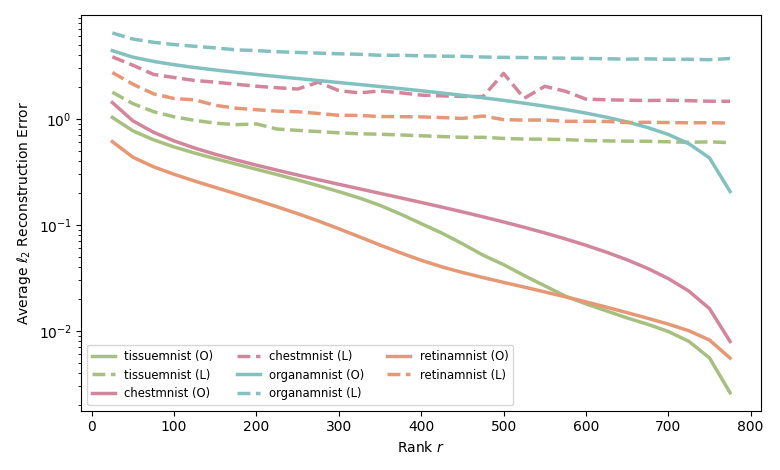}
    \caption{Average per-sample $\ell_2$ reconstruction error versus bottleneck rank $r$ across \texttt{MedMNIST} datasets for the classic noiseless autoencoder problem, where the goal is to learn a rank constrained identity mapping from $\mathbf{x}$ to $\mathbf{x}$. The optimal and learned mappings minimize the empirical losses $\tfrac{1}{J} \sum_{j=1}^{J} \| \widehat{\mathbf{A}}_r \mathbf{x}_j - \mathbf{x}_j \|_2^2$ and $\tfrac{1}{J} \sum_{j=1}^{J} \| \mathbf{A}_r \mathbf{x}_j - \mathbf{x}_j \|_2^2$, respectively. Solid lines denote optimal mappings $\widehat{\mathbf{A}}_r$ (O), while dashed lines denote learned encoder-decoder mappings $\mathbf{A}_r$ (L).}
    \label{fig:noiselessFullComp}
\end{figure}

\begin{figure}
    \centering
    \includegraphics[width=0.75\linewidth]{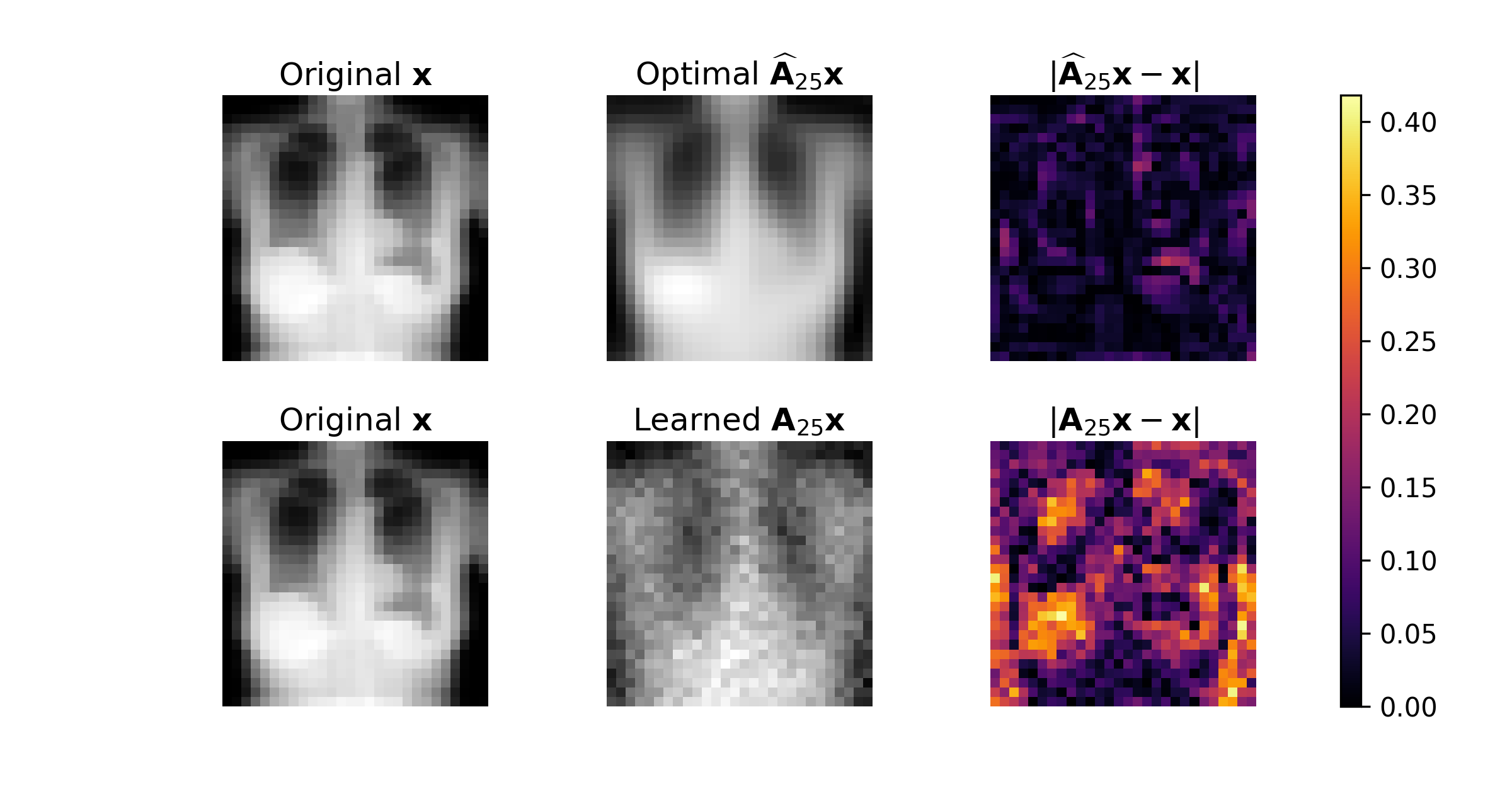}
    \caption{Example reconstruction results from \texttt{ChestMNIST} with bottleneck rank $r = 25$ for the classic linear noiseless autoencoder problem. \textbf{Top row:} Reconstructions using optimal mappings. Left to right: original signal $\mathbf{x}$; optimal reconstruction $\widehat{\mathbf{A}}_{25} \mathbf{x}$ and corresponding error map $| \widehat{\mathbf{A}}_{25} \mathbf{x} - \mathbf{x} |$. \textbf{Bottom row:} Learned autoencoder reconstructions. Left to right: original image $\mathbf{x}$; learned reconstruction $\mathbf{A}_{25} \mathbf{x}$ and corresponding error map $| \mathbf{A}_{25}\mathbf{x} - \mathbf{x} |$.}
    \label{fig:noiselessOptVSLearned}
\end{figure}

\paragraph{Denoising Autoencoder.} 
In the presence of noise, the objective shifts from exact reconstruction to denoising. Here, the encoder-decoder aims to learn a low-rank operator that suppresses noise while recovering the clean signal $\mathbf{x}$.

\Cref{fig:NoisyFullComp} summarizes the average $\ell_2$ reconstruction error across varying ranks $r$ and datasets. Like in the previous scenarios, the learned models exhibit little to no improvement in reconstruction with increasing rank and only somewhat track the trend of the optimal performance. This gap between the optimal and learned reconstructions highlights a failure to effectively learn from the data. In particular, the encoder-decoder pair struggles to denoise and reconstruct the input accurately. This issue is especially pronounced at lower ranks and in datasets with limited sample sizes, such as \texttt{RetinaMNIST}, where the learned mapping deviates significantly from the optimal solution.

\Cref{fig:noisyOptVSLearned} shows a representative sample from \texttt{ChestMNIST} at rank $r = 25$. The optimal mapping effectively restores the major anatomical features while suppressing noise, with little structural error. In contrast, the learned reconstruction retains many noise artifacts and exhibits even more structured error regions, particularly around high-contrast boundaries. This qualitative gap between the error of the optimal and learned mappings illustrates the continued advantage of using the optimal mapping to capture fine structural details in image reconstruction.

\begin{figure}
    \centering
    \includegraphics[width=0.67\linewidth]{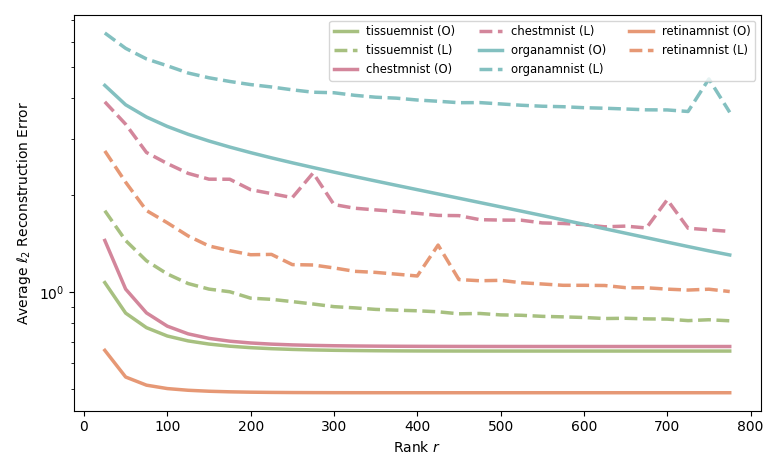}
    \caption{Average per-sample $\ell_2$ reconstruction error versus bottleneck rank $r$ across \texttt{MedMNIST} datasets in the line data denoising setting. The expressions $\tfrac{1}{J} \sum_{j=1}^{J} \| \widehat{\mathbf{A}}_r (\mathbf{x}_j + \bm{\varepsilon}_j) - \mathbf{x}_j \|_2^2$ and $\tfrac{1}{J} \sum_{j=1}^{J} \| \mathbf{A}_r (\mathbf{x}_j + \bm{\varepsilon}_j) - \mathbf{x}_j \|_2^2$ correspond to the optimal and learned reconstruction errors, respectively. Solid lines denote optimal mappings $\widehat{\mathbf{A}}_r$ (O), while dashed lines denote learned encoder-decoder mappings $\mathbf{A}_r$ (L).}
    \label{fig:NoisyFullComp}
\end{figure}

\begin{figure}
    \centering
    \includegraphics[width=0.75\linewidth]{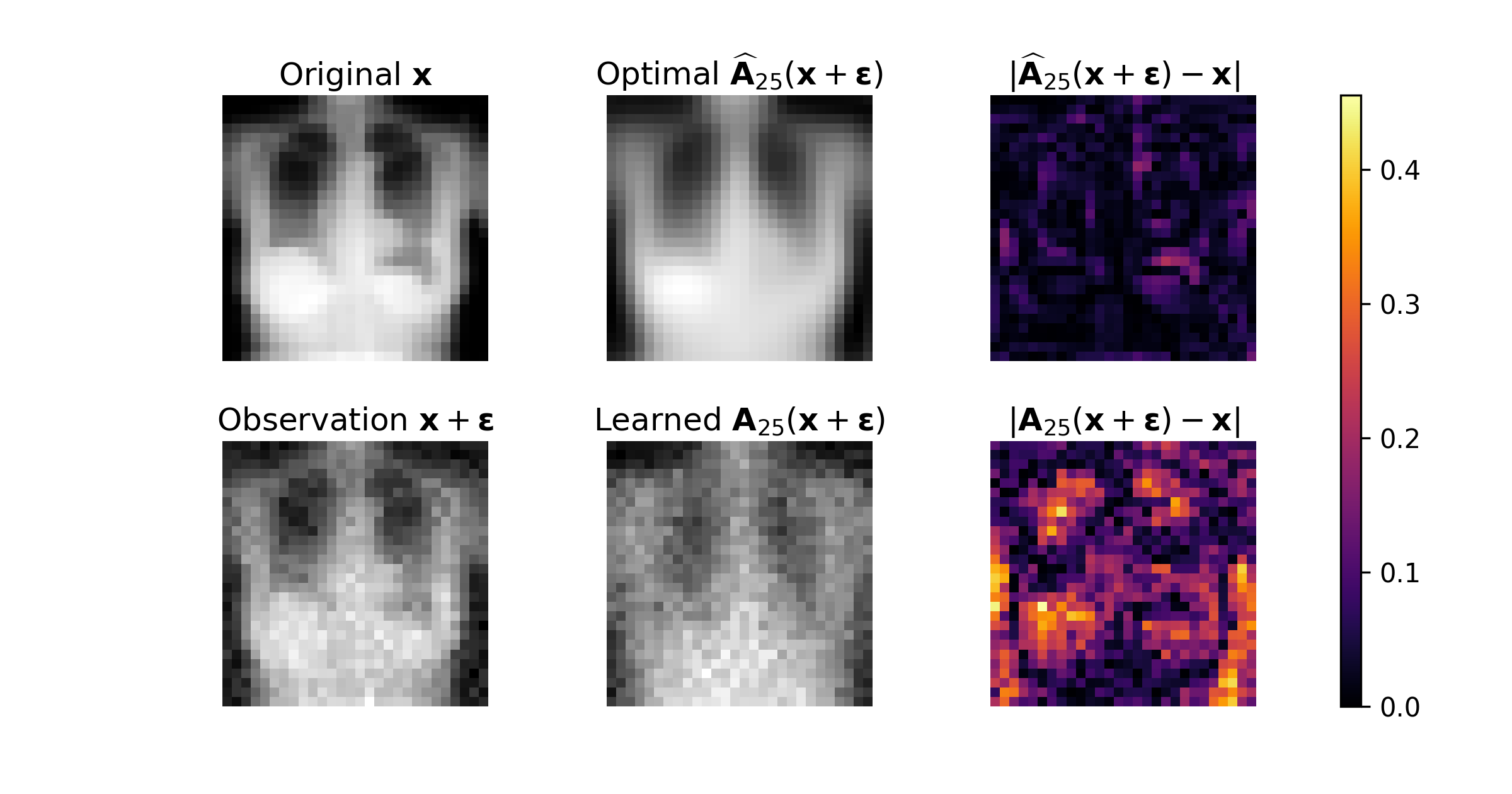}
    \caption{Example reconstruction results from \texttt{ChestMNIST} with bottleneck rank $r = 25$ in the linear data denoising setting. \textbf{Top row:} original ground-truth image $\mathbf{x}$; optimal denoised reconstruction $\widehat{\mathbf{A}}_{25}(\mathbf{x} + \bm{\varepsilon})$; and reconstruction error map $| \widehat{\mathbf{A}}_{25}(\mathbf{x} + \bm{\varepsilon}) - \mathbf{x} |$. \textbf{Bottom row:} noisy observation $\mathbf{x} + \bm{\varepsilon}$; learned encoder-decoder denoised reconstruction $\mathbf{A}_{25}(\mathbf{x} + \bm{\varepsilon})$; and corresponding reconstruction error map $| \mathbf{A}_{25}(\mathbf{x} + \bm{\varepsilon}) - \mathbf{x} |$.}
    \label{fig:noisyOptVSLearned}
\end{figure}

\section{Financial Information}\label{app:financialdefs}

In this section, we note particularly important definitions of financial terms and our model architectures referenced in \Cref{finance}. We also provide results for a numerical experiment that uses synthetic market data for factor analysis.

\subsection{Model Architectures}\label{fin-arc}
Given a financial dataset $\mathbf{X} \in \mathbb{R}^{T \times A}$, where $T$ denotes the number of time steps (e.g., trading days) and $A$ denotes the number of assets, we seek to learn a lower-dimensional representation $\mathbf{Z} \in \mathbb{R}^{T \times r}$, where $r \leq A$ is the number of latent factors. The latent representation is then used to reconstruct the original dataset, producing $\widetilde{\mathbf{X}} \in \mathbb{R}^{T \times A}$. Our trained model architectures are as follows:
\begin{itemize}
    \item \textbf{Trained Affine Linear Autoencoder:}
        \[
            \mathbf{X} \in \mathbb{R}^{T \times A} 
            \xrightarrow{\text{Encoder}} \mathbf{Z} \in \mathbb{R}^{T \times r} 
            \xrightarrow{\text{Decoder}} \widetilde{\mathbf{X}} \in \mathbb{R}^{T \times A},
            \]
            where $r = 3$ in our experiments. We randomly initialized the Trained Affine Linear autoencoder and trained it for 150 epochs. 
    
        \item \textbf{Trained Nonlinear Autoencoder:}
            \[
                \begin{aligned}
                \mathbf{X} \in \mathbb{R}^{T \times A} 
                &\xrightarrow{\text{Affine}} \mathbb{R}^{T \times H_1} 
                \xrightarrow{\text{Leaky ReLU}} \mathbb{R}^{T \times H_2} 
                \xrightarrow{\text{Affine}} \cdots 
                \xrightarrow{\text{Affine}} \mathbb{R}^{T \times r} \\
                &\xrightarrow{\text{Affine}} \cdots 
                \xrightarrow{\text{Leaky ReLU}} \mathbb{R}^{T \times H_2} 
                \xrightarrow{\text{Affine}} \mathbb{R}^{T \times H_1} 
                \xrightarrow{\text{Affine}} \widetilde{\mathbf{X}} \in \mathbb{R}^{T \times A},
                \end{aligned}
            \]
        where $H_1, H_2, \ldots$ denote intermediate hidden dimensions, and $r = 3$ is the latent dimension. All affine mappings are followed by Leaky ReLU activations with negative slope $a = -0.01$. The bottleneck layers of each autoencoder were three neurons wide, corresponding to the factors of the FF3 model. We randomly initialize the trained nonlinear autoencoder and train it for 100 epochs.
\end{itemize}
Both networks were trained with a batch size of 64 and with an 80/20 chronological train/test split on the data. Using an MSE loss, we train our model using the \texttt{ADAM} \cite{kingma2014adam} optimizer with a learning rate of $10^{-3}$.  As a final baseline for comparison, we also include PCA in our results for both synthetic and market data. 

\subsection{Fama-French Factors}\label{ff3-sect}
The Fama-French Three Factor model expresses the expected return of a single asset over a time period $t$ as 
$$
\mathcal{R} = \mathcal{R}_f + \gamma_{1}(\text{Market}) + \gamma_{2}( \text{SMB}) + \gamma_{3} (\text{HML}) + \alpha,
$$ describing a linear relation between an asset's expected return rate and its three FF3 factors: Market, SMB, and HML. Its factors and related terms, as noted in \textcite{fama1993common}, are defined as the following:
\begin{enumerate}
    \item $\mathcal{R}$ denotes the expected rate of return of an asset. 
    \item $\mathcal{R}_f$ denotes the risk-free return rate or the theoretical return of a zero-risk investment, constant across all assets.
    \item The Market Excess factor(Market) quantifies the extent at which a market portfolio's return exceeds $\mathcal{R}_f$.
    \item The Size factor $(\text{SMB})$ measures the extent at which small cap stocks outperform large cap stocks.
    \item The Value factor $(\text{HML})$ measures the extent at which value stocks outperform growth stocks.
    \item The $\alpha$ parameter denotes the mean excess return of not explained by the three Fama-French factors, which varies across assets. 
    \item $\gamma_{1}, \gamma_{2}, \gamma_{3}$ are model parameters that vary by asset. These parameters are usually used for calculating $\mathcal{R}$, and therefore are not a part of this work.
\end{enumerate} 

\subsection{Notable Processes}\label{processes}

\begin{enumerate}
    \item AR($1$), also known as an Autoregressive Process of Order 1, assumes that each term in a time series depends linearly on the preceding value and some error term. The time series can be described by the following:
    \[
        X_t = \beta X_{t-1} + \epsilon_t^2 
    \]
    where $X_t$ is a stationary series, $\beta$ is some correlation factor, and $\epsilon_t^2 \sim \mathcal{N}(0,\upsilon^2)$ is white noise \cite{brockwell2016introduction}. 
    
    \item GARCH($1,1$) generates a time series $X_t = \upsilon_t^2$ for the volatility squared (variance) of a time series, which can be described by the following:
    $$\upsilon^2_t = \omega + \alpha \epsilon_{t-1}^2 + \beta \upsilon_{t-1}^2$$ where $\upsilon_t^2$ represents the variance, $\epsilon_t$ represents residual error from the most recent estimate, $\omega$ represents the base level volatility, $\alpha$ represents the sensitivity to recent shocks, and $\beta$ represents the extent at which past volatility affects the present volatility. Additional parameters were initialized as $\omega = 0.01 $, $\alpha = 0.1$, and $\beta = 0.85$ as these values resemble those of the U.S.~stock market, measured empirically in works such as Bollerslev \cite{bollerslev1986generalized}. 
\end{enumerate}

\subsection{Synthetic Data Experiment} \label{synth-data-exp}
We conduct an additional experiment on a synthetic dataset to empirically validate our claims in \Cref{finance}. We assess reconstruction performance using the MSE metric and use correlation values to interpret the latent space. Across both studies, our results indicate that the optimal affine linear model provides a robust, interpretable, and competitive alternative for reconstruction and factor analysis.

\subsubsection{Synthetic Data Creation}
We generated a synthetic dataset over a time horizon of $T=2,\!000$ days across $A = 10$ different assets and $r = 3$ latent factors. Using GARCH(1,1) and Fama-French three-factor (FF3) model assumptions to assign realistic values to market, size, and value risks, we generated a latent factor matrix $\mathbf{C} \in \mathbb{R}^{T \times r}$ and a factor loading matrix $\mathbf{B} \in \mathbb{R}^{A \times r}$. Then, volatility levels, also generated by GARCH($1,1)$, were collected in the matrix
\[
\mathbf{P} = \begin{bmatrix}
\upsilon^2_{1,1}  & \ldots  & \upsilon^2_{1,A} \\
\vdots & \ddots & \vdots \\
\upsilon^2_{T,1} & \ldots & \upsilon^2_{T,A}
\end{bmatrix},
\]
where $\upsilon_{t,a}^2$ denotes the variance (volatility) of the $a$-th asset return at time $t$.

To simulate observation noise, we define a Gaussian matrix $\mathbf{Q} \in \mathbb{R}^{T \times A}$ whose entries are sampled independently as
\[
    \mathbf{Q}_{ij} \sim \mathcal{N}(0,1), \quad \text{independently for} \quad 
    1 \leq i \leq T, \; 1 \leq j \leq A.
\]
We then scale this noise by the asset-specific volatility levels from $\mathbf{P}$ to obtain the final noise matrix $\bm{\Delta} \in \mathbb{R}^{T \times A}$ using the Hadamard (element-wise) product:
\[
    \bm{\Delta} = \mathbf{P} \odot \mathbf{Q}.
\]
The complete synthetic dataset of asset returns is thus given by
\[
    \mathbf{X} = \mathbf{C}\, \mathbf{B}^\top + \bm{\Delta},
\]
where $\mathbf{C}$ encodes the latent factor time series, $\mathbf{B}$ contains the factor loadings, and $\bm{\Delta}$ introduces heteroskedastic noise consistent with market volatility.

\subsubsection{Synthetic Data Results} 
This section compares the MSE and aligned correlations of our Optimal Affine Linear autoencoder across the three different models outlined previously: a Trained Affine Linear autoencoder with random initialization, a Trained Nonlinear autoencoder with random initialization, and PCA. We note that we do not compare these results with the FF3 model here as synthetically generated data does not correspond to asset return data listed in the Kenneth French library.

To enable meaningful comparison, we account for the fact that the latent spaces in these autoencoders may not align with the true orientation of the latent factors. This necessitates an orthogonal rotation to ensure that each generated latent space is oriented in the direction that best matches the generated latent factors. To achieve this, we used a linear algebra approach called Orthogonal Procrustes Alignment (OPA), which aims to find an orthogonal rotation matrix that transforms one matrix into the best possible approximation of another matrix \cite{grave2018unsupervised, jain2021aligned}. While literature is scarce surrounding OPA in financial time series datasets, there is extensive documentation of its uses and benefits in other domains, such as neuroscience and psychology \cite{haxby2011common, haxby2020hyperalignment}. We use OPA to orient the latent space $\mathbf{Z} \in \mathbb{R}^{ J \times r}$ of each model in the direction that best matches the true factor loading matrix $\mathbf{B}$, while preserving orthogonality. This allows for an accurate and more interpretable comparison between $\mathbf{Z}$ and $\mathbf{B}$. OPA is performed after training and validation, but prior to factor analysis. Since the Trained Affine Linear and Trained Nonlinear models are much more sensitive to initialization and can get stuck in local minima, these models were trained and tested in a loop 100 times, as in the market example.

\begin{table}
\renewcommand{\arraystretch}{1.5}
\caption{Reconstruction performance comparison: MSE results for synthetic data.}
\label{tab:synthetic_mse}
\centering

\begin{tabular}{|
    >{\columncolor{tablecolor1}}>{\centering\arraybackslash}p{3.5cm}|
    >{\columncolor{tablecolor2}}>{\centering\arraybackslash}p{5.2cm}|
}
\hline
\textbf{Method} & \textbf{Mean Squared Error} \\
\hline

Optimal Affine Linear & 
$1.5\times 10^{-5}$
\\
\hline

Trained Affine Linear & 
$1.9\pm 0.1\times 10^{-5}$
\\
\hline

Trained Nonlinear & 
$2.6\pm 0.1 \times 10^{-5}$
\\
\hline

PCA & 
$1.6\times 10^{-5}$
\\
\hline

\end{tabular}
\end{table}

\begin{table}
\renewcommand{\arraystretch}{1.5}
\caption{Factor Recovery Performance: Post OPA Correlations for synthetic data. Items in bold have the highest correlation in their respective column. Factor names are labeled to match the corresponding synthetic factor. Factor 1 is approximate to the Market Factor, Factor 2 to SMB, and Factor 3 to HML. The true parameters for each asset/factor pair are found in factor loading matrix $\mathbf{B}$.}
\label{tab:synthetic_procrustes}
\centering

\begin{tabular}{|
    >{\columncolor{tablecolor1}}>{\centering\arraybackslash}p{3.5cm}|
    >{\columncolor{tablecolor2}}>{\centering\arraybackslash}p{2.5cm}|
    >{\columncolor{tablecolor3}}>{\centering\arraybackslash}p{2.5cm}|
    >{\columncolor{tablecolor4}}>{\centering\arraybackslash}p{2.5cm}|
}
\hline
\textbf{Method} & \textbf{Factor 1} & \textbf{Factor 2} & \textbf{Factor 3} \\
\hline

Optimal Affine Linear & 
\textbf{0.81} & \textbf{0.16} & 0.10 \\
\hline

Trained Affine Linear & 
$0.4 \pm 0.2$ & $0.12 \pm 0.06$ & $0.09 \pm 0.06$ \\
\hline

Trained Nonlinear & 
$0.4 \pm 0.3$ & $0.11 \pm 0.06$ & $\mathbf{0.12} \pm \mathbf{0.06}$ \\
\hline

PCA & 
\textbf{0.81} & 0.089 & 0.061 \\
\hline

\end{tabular}
\end{table}

\Cref{tab:synthetic_mse} shows the MSE values for the tested models, with standard deviations if applicable. \Cref{tab:synthetic_procrustes} displays the post-OPA correlation values by factor, with standard deviations if applicable.
The Trained Nonlinear autoencoder attains the highest reconstruction MSE of all models on validation data, indicating that its additional complexity does not translate into improved performance in this setting. The Trained Affine Linear autoencoder attains a high MSE as well, indicating that its flexibility is not enough to provide a high-accuracy reconstruction. PCA and the Optimal Affine Linear mapping both achieve comparably low MSEs, with the latter being the lowest. Overall, the Optimal Affine Linear solution offers reconstruction comparable to PCA while providing theoretical guarantees and interpretability that nonlinear models lack. These results confirm its viability and robustness in the presence of heteroskedastic noise.

Among the four models evaluated, the Trained Affine Linear autoencoder displays very weak correlations across all three latent factors, indicating that it struggles to capture the latent structure of the synthetic dataset. The Trained Nonlinear autoencoder also displays weak correlations across all factors, performing slightly better than the Trained Affine Linear model on Factors 2 and 3. 

In contrast, the Optimal Affine Linear mapping and PCA both achieve similar reconstruction MSEs. Factor 1, designed to mimic the Fama-French Market Excess Returns factor, the most prominent driver of asset returns \cite{fama1993common}, is reliably recovered across both models, with consistently high aligned correlations. Greater variation arises in the recovery of Factors 2 and 3, which were designed to represent the more subtle HML and SMB factors, respectively.

For these latter two factors, the Optimal and Trained Affine Linear models outperform PCA, with the Optimal Affine Linear model achieving the highest aligned correlations on both. The Trained Nonlinear model performs comparably on Factor 2 and surpasses all models on Factor 3, suggesting that its nonlinearity enables it to extract weaker economic signals that linear models may miss. However, this comes at the cost of reduced reconstruction accuracy and weak correlation on Factor 1.  While PCA remains effective at identifying dominant factors, the Optimal Affine Linear and Nonlinear models exhibit advantages in recovering subtler latent structure. Notably, the Optimal Affine Linear model achieves these gains without the added complexity of nonlinear training, demonstrating its ability to recover economically meaningful signals in a more interpretable and efficient manner.


\section{SWEs: Background and Data Generation}\label{app:SWE}
Here we provide supplementary information for numerical experients conducted with the shallow water equations in \Cref{swe}.

The full SWEs model, as referenced from \textcite{vallis2017aofd}, is given by
\begin{align}
\frac{\partial \eta}{\partial t} 
+ \frac{\partial}{\partial \chi_1}\left[(\eta + H)u\right] 
+ \frac{\partial}{\partial \chi_2}\left[(\eta + H)v\right] 
&= \sigma - w \label{eq:continuity}, \\[0.2cm]
\frac{\partial u}{\partial t} 
+ u \frac{\partial u}{\partial \chi_1} 
+ v \frac{\partial u}{\partial \chi_2} 
- f v 
+ g \frac{\partial \eta}{\partial \chi_1 }  
+ \kappa u 
&= \frac{\tau_{\chi_1}}{p_0 h}, \qquad\text{and}\label{eq:u_momentum} \\[0.2cm]
\frac{\partial v}{\partial t} 
+ u \frac{\partial v}{\partial \chi_1} 
+ v \frac{\partial v}{\partial \chi_2} 
+ f u 
+ g \frac{\partial \eta}{\partial \chi_2}  
+ \kappa v 
&= \frac{\tau_{\chi_2}}{p_0 h}, \label{eq:v_momentum}
\end{align}
where \Cref{eq:continuity} is known as the continuity equation, and \Cref{eq:u_momentum} and \Cref{eq:v_momentum} are known as the momentum equations. Here, $\chi_1$ and $\chi_2$ refer to the two spatial directions. \Cref{table:swes} provides a description of the relevant model variables.

\begin{algorithm}
\caption{Finite Differences Method}
\label{alg:SWE_solving}
\begin{algorithmic}[1]
\FOR{each time step $t$}
    \STATE Compute preliminary approximation of $u^{t+1}$ and $v^{t+1}$, using forward-in-time and forward-in-space schemes and
    neglecting the Coriolis term ($f$).
    \STATE Update $u^{t+1}$ and $v^{t+1}$ to include the effect of the Coriolis term, using the preliminary estimates.
    \STATE Use final $u^{t+1}$ and $v^{t+1}$ to determine upwind directions across the mesh
    \STATE Compute corresponding spatial derivatives based on upwind scheme.
    \STATE Compute $\eta^{t+1}$.
\ENDFOR
\end{algorithmic}
\label{alg:finiteDiff}
\end{algorithm}

\begin{table}[H]
\renewcommand{\arraystretch}{1.5}
\caption{\label{table:swes} Description of variables used in the SWEs \Cref{eq:v_momentum,eq:u_momentum,eq:continuity}}
\label{tab:SWE_variables}
\centering

\begin{tabular}{|
    >{\columncolor{tablecolor1}}>{\centering\arraybackslash}p{1.5cm}|
    >{\columncolor{tablecolor2}}>
    {\centering\arraybackslash}p{8.5cm}|
    >{\columncolor{tablecolor3}}>{\centering\arraybackslash}p{2.5cm}|
}
\hline
\textbf{Symbol} & \textbf{Description} & \textbf{Units} \\
\hline

$u$, $v$ & Velocity components in $\chi_1$ and $\chi_2$ directions & m/s \\
\hline

$\eta$ & Free surface elevation (deviation from mean depth) & m \\
\hline

$H$ & Total fluid depth & m \\
\hline

$f$ & Time-varying Coriolis parameter ($f = f_0 + \beta y$) & 1/s \\
\hline

$g$ & Gravitational acceleration & m/s\textsuperscript{2} \\
\hline

$\tau_{\chi_1}$, $\tau_{\chi_2}$ & Wind stress components & N/m\textsuperscript{2} \\
\hline

$\rho_0$ & Fluid density & kg/m\textsuperscript{3} \\
\hline

$\kappa$ & Bottom friction coefficient & 1/s \\
\hline

$\sigma$ & Surface mass flux (precipitation $-$ evaporation) & m/s \\
\hline

$w$ & Vertical velocity at the bottom & m/s \\
\hline

\end{tabular}
\end{table}

To generate data for our model, we simulate the forward time-evolution of a simplified version of the full SWEs. We make these assumptions to simplify the computation of the forward process while still preserving nonlinearity through the continuity equation. Specifically, we neglect friction, wind stress, and external sources or sinks by setting $\tau_x = 0$, $\tau_y = 0$, $\kappa = 0$, and $\sigma = 0$.
We further simplify the momentum equations by dropping the nonlinear advection term $(\mathbf{v} \cdot \nabla)\mathbf{v}$, retaining only the linear term $\partial \mathbf{v} / \partial t$. Here, $\mathbf{v} = (u, v)$ is the velocity field and $\nabla = \left( \frac{\partial}{\partial \chi_1}, \frac{\partial}{\partial \chi_2} \right)$ denotes the spatial gradient. This removes self-advection effects, allowing us to isolate the influence of external forces such as gradients in fluid height.
The simplified equations we thus obtain are
\begin{align}
\frac{\partial \eta}{\partial t} + \frac{\partial}{\partial \chi_1}\left[(\eta + H)u\right] + \frac{\partial}{\partial \chi_2}\left[(\eta + H)v\right] &= w, \\
\frac{\partial u}{\partial t} - f v + g \frac{\partial \eta}{\partial \chi_1}  &= 0, \qquad \text{and}  \\
\frac{\partial v}{\partial t} + f u + g \frac{\partial \eta}{\partial \chi_2}  &= 0.
\end{align}

Other important considerations in solving the equations include that the simulated forward model is stable only when the space and time discretizations obey the Courant-Friedrichs-Lewy (CFL) condition \cite{courant1928cfl}, which ensures that the time discretization is small enough to restrict the wave movement to only one spatial cell per time step. For the SWEs with wave speed $\sqrt{gH}$, the CFL condition becomes
\[
    \Delta t \leq \dfrac{\min\{\Delta\chi_1, \Delta\chi_2\}}{\sqrt{gH}}
    \quad \text{and} \quad
    \alpha \ll 1,
\]
where $\alpha = \Delta t \cdot f$.
Here, $\Delta t$ refers to the size of the time discretizations, and $\Delta\chi_1$ and $\Delta\chi_2$ refer to the sizes of the space discretizations in $\chi_1$ and $\chi_2$, respectively.

For our simulation, we use a $64 \times 64$ mesh over a $10^6 \times 10^6 \, \text{m}^2$ domain, with a time step given by $\Delta t = 0.1\, \min\{\Delta\chi_1, \Delta\chi_2\} / \sqrt{gH}$ to obey the CFL condition (which gives $\Delta t \approx 50\,\mathrm{s}$). Values of the other parameters used in our simulations are listed in \Cref{tab:variable_values}. To solve these equations using finite differences, we use the process summarized in \Cref{alg:finiteDiff}.

\begin{table}[H]
\renewcommand{\arraystretch}{1.5}
\caption{Physical and numerical parameter values used in the SWE simulations}. 
\label{tab:variable_values}
\centering

\begin{tabular}{|
    >{\columncolor{tablecolor1}}>{\centering\arraybackslash}p{2.2cm}|
    >{\columncolor{tablecolor2}}>
    {\centering\arraybackslash}p{6.5cm}|
    >{\columncolor{tablecolor3}}>{\centering\arraybackslash}p{3.0cm}|
}
\hline
\textbf{Variable} & \textbf{Description} & \textbf{Value} \\
\hline

$g$ & Acceleration due to gravity & $9.8$ m/s\textsuperscript{2} \\
\hline

$H$ & Depth of fluid & $100$ m \\
\hline

$f_0$ & Fixed part of Coriolis parameter & $10^{-4}$ 1/s \\
\hline

$\beta$ & Time-varying part of Coriolis parameter & $2 \times 10^{-11}$ 1/s \\
\hline

$\Delta\chi_1, \ \Delta\chi_2$ & Spatial Discretization & $15,\!873.02$ m \\ 
\hline 

$\Delta t$ & Temporal Discretization & 
$50.67$s \\
\hline

\end{tabular}
\end{table}

\end{document}